\newenvironment{prevproof}[2]{\noindent {\bf {Proof of {#1}~\ref{#2}.}}}{$\hfill\qed$\vskip \belowdisplayskip}
\theoremstyle{definition}
\newtheorem{theorem}{Theorem}[section]
\newtheorem{definition}[theorem]{Definition}
\newtheorem{corollary}[theorem]{Corollary}
\newtheorem{lemma}[theorem]{Lemma}
\newtheorem{conjecture}[theorem]{Conjecture}
\newtheorem{remark}[theorem]{Remark}
\newtheorem{invariant}[theorem]{Invariant}
\def\t{\widetilde}
\begin{document}
\clubpenalty=10000
\widowpenalty = 10000

\title{Sparse Convex Optimization via Adaptively Regularized Hard Thresholding}
\author{
Kyriakos Axiotis\thanks{MIT, \tt{kaxiotis@mit.edu}.}
\and
Maxim Sviridenko\thanks{Yahoo! Research, {\tt sviri@verizonmedia.com}.}}

\date{}
\maketitle

\begin{abstract}

The goal of \emph{Sparse Convex Optimization} is to optimize a convex function $f$
under a sparsity constraint $s\leq s^*\gamma$, where $s^*$ is the target number of non-zero entries in a feasible solution (sparsity)
and $\gamma\geq 1$ is an approximation factor.
There has been a lot of work to analyze the sparsity guarantees of various algorithms
(LASSO, Orthogonal Matching Pursuit (OMP), Iterative Hard Thresholding (IHT)) in terms of
the \emph{Restricted Condition Number} $\kappa$. The best known algorithms guarantee to find an approximate solution of value $f(x^*)+\epsilon$ with the sparsity bound of
$\gamma = O\left(\kappa\min\left\{\log \frac{f(x^0)-f(x^*)}{\epsilon}, \kappa\right\}\right)$, where $x^*$
is the target solution.
We present a new \emph{Adaptively Regularized Hard Thresholding (ARHT)} algorithm that makes significant progress on this problem
by bringing the bound down to $\gamma=O(\kappa)$, which has been shown to be tight for a general
class of algorithms including LASSO, OMP, and IHT. This is achieved without significant sacrifice in the runtime efficiency compared to the fastest known algorithms.
We also provide a new analysis of OMP with Replacement (OMPR) for general $f$, under the condition
$s > s^* \frac{\kappa^2}{4}$, which yields
Compressed Sensing bounds under the Restricted Isometry Property (RIP).
When compared to other Compressed Sensing approaches, it has the advantage of providing a strong tradeoff
between the RIP condition and the solution sparsity, while working for any general function $f$ that meets
the RIP condition.

\end{abstract}

\section{Introduction}

\emph{Sparse Convex Optimization} is the problem of optimizing a convex objective, while constraining the sparsity of the solution (its number of non-zero entries).
Variants and special cases of this problem have been studied for many years, and there have been countless applications in Machine Learning, Signal Processing, and Statistics.
In Machine Learning it is used to regularize models by enforcing parameter sparsity,
since a sparse set of parameters often leads to better model generalization. Furthermore, in a lot of large scale applications the number of parameters of a trained model
is a significant factor in computational efficiency, thus improved sparsity can lead to improved time and memory performance.
In applied statistics, a single extra feature translates to a real cost from increasing the number of samples. 
In Compressed Sensing, finding a sparse solution to a Linear Regression problem can be used to significantly reduce the sample size
for the recovery of a target signal.
In the context of these applications, decreasing sparsity by even a small amount while not increasing the accuracy can have a significant impact.

\paragraph{Sparse Optimization}
Given a function $f: \mathbb{R}^n\rightarrow\mathbb{R}$ and any $s^*$-sparse \emph{(unknown) target solution} $x^*$,
the Sparse Optimization problem is to find an $s$-sparse solution $x$, i.e. a solution with at most $s$ non-zero entries,
such that $f(x) \leq f(x^*) + \epsilon$
and $s \leq s^* \gamma$, where $\epsilon > 0$ is a desired accuracy and $\gamma\geq 1$ is an approximation factor
for the target sparsity. Even if $f$ is a convex function, the sparsity constraint makes this problem non-convex,
and it has been shown that it is an intractable problem, even when $\gamma = O\left(2^{\log^{1-\delta}n}\right)$
and $f$ is the Linear Regression objective \cite{Natarajan95, FKT15}.
However, this worst-case behavior is not observed in practice,
and so a large body of work has been devoted to the analysis of algorithms
under the assumption that the \emph{restricted condition number} $\kappa_{s+s^*}=\frac{\rho_{s+s^*}^+}{\rho_{s+s^*}^-}$ (or just $\kappa=\frac{\rho^+}{\rho^-}$) of $f$ 
is bounded \cite{Natarajan95,SSZ10,Zhang11,BRB13,LYF14,JTK14,YLZ16,SL17,SL17_2,JTK14,SGJN18}.
Note: Here, $\rho_{s+s^*}^+$ is the maximum smoothness constant of any restriction of $f$ on an $(s+s^*)$-sparse subset of coordinates
and $\rho_{s+s^*}^-$ is the minimum strong convexity constant of any restriction of $f$ on an $(s+s^*)$-sparse subset of coordinates.

The first algorithm for this problem, often called \emph{Orthogonal Matching Pursuit (OMP)} or \emph{Greedy}, was analyzed by \cite{Natarajan95} for Linear Regression,
and subsequently for general $f$ by \cite{SSZ10}, obtaining the
guarantee that the sparsity of the returned solution is $O\left(s^*\kappa \log \frac{f(x^0) - f(x^*)}{\epsilon}\right)$
\footnote{Even though \cite{Natarajan95} states a less general result, this is what is implicitly proven.}.
In applications where having low sparsity is crucial, the dependence of sparsity on the required accuracy $\epsilon$ is undesirable.
The question of whether this dependence can be removed was answered positively \cite{SSZ10,JTK14} giving a sparsity guarantee of
$O(s^*\kappa^2)$. As remarked in \cite{SSZ10}, this bound sacrifices the linear dependence on $\kappa$, 
while removing the dependence on $\epsilon$ and $f(x^0)-f(x^*)$.

Since then, there has been some work on improving these results by introducing non-trivial assumptions, such as the target solution $x^*$ being close to globally optimal. 
More specifically, 
\cite{Zhang11} defines the 
\emph{Restricted Gradient Optimal Constant (RGOC) at level $s$}, $\zeta_{s}$ (or just $\zeta$)
as the $\ell_2$ norm of the top-$s$ elements in $\nabla f(x^*)$
and analyzes an algorithm that gives sparsity
$s = O\left(s^*\kappa \log\left(s^*\kappa\right)\right)$, and 
such that $f(x) \leq f(x^*) + O(\zeta^2 / \rho^-)$. \cite{SGJN18} 
strengthens this bound to $f(x) \leq f(x^*) + O(\zeta^2 / \rho^+)$
with sparsity $s = O\left(s^*\kappa \log \kappa\right)$.
However, this means that $f(x)$ might be much larger than $f(x^*) + \epsilon$ 
in general.
To the best of our knowledge, no improvement has been made over the
$O\left(s^*\min\left\{\kappa\frac{f(x^0)-f(x^*)}{\epsilon}, \kappa^2\right\}\right)$ bound in the general case.

Another line of work studies a maximization version of the sparse convex optimization problem 
as well as its generalizations for matroid constraints \cite{altschuler2016greedy,elenberg2017streaming,chen2017weakly}.

\paragraph{Sparse Solution and Support Recovery}

Often, as is the case in Compressed Sensing, one needs a guarantee on the closeness of the solution $x$ to the target solution $x^*$ in absolute terms,
rather than in terms of the value of $f$.
The goal is usually either to recover (a superset of) the target support,
or to ensure that the returned solution is close to the target solution in $\ell_2$ norm.
The results for this problem either assume 
a constant upper bound on the \emph{Restricted Isometry Property (RIP)} constant $\delta_r := \frac{\kappa_r - 1}{\kappa_r + 1}$ for some $r$ (RIP-based recovery),
or that $x^*$ is close to being a global optimum (RIP-free recovery).
This problem has been extensively studied and is an active research area in the vast Compressed Sensing literature.
See also the survey by \cite{BCKV15}.

In the seminal papers of \cite{CT05,CRT06,Donoho06,Candes08} it was shown that for the Linear Regression problem
when $\delta_{2s^*}<\sqrt{2}-1\approx 0.41$, 
the LASSO algorithm \cite{LASSO}
can recover a solution with $\left\Vert x-x^*\right\Vert_2^2 \leq C f(x^*)$,
where $C$ is a constant depending only on $\delta_{2s^*}$
and $f(x^*) = \frac{1}{2} \left\Vert Ax^* - b\right\Vert_2^2$ is the error of the target solution\footnote{$f(x^*)$ is also commonly denoted as $\frac{1}{2}\|\eta\|_2^2$, where
$Ax^* = b + \eta$, i.e. $\eta$ is the measurement \emph{noise}.}.
Since then, a multitude of results of similar flavor have appeared, either
giving related guarantees for the LASSO algorithm while improving the RIP upper bound \cite{FL09,CWX09,Foucart10,cai2010new,mo2011new,l1_best}
which culminate in a bound of $\delta_{2s^*} < 0.6248$,
or showing that similar guarantees can be obtained by greedy algorithms under more restricted RIP conditions, but that 
are typically faster than LASSO \cite{ROMP_noiseless,ROMP,CoSaMP,IHT,JTD11,foucart2011hard,foucart2012sparse}. 
See also the comprehensive surveys \cite{foucart2017mathematical, mousavi2019survey}.

\cite{CoSaMP} presents a greedy algorithm called \emph{CoSaMP} and shows that for Linear Regression it achieves a bound in the form of \cite{Candes08}
while having a more efficient implementation. Their method works for the more restricted
RIP upper bound of $\delta_{2s^*} < 0.025$, or $\delta_{4s^*} < 0.4782$ as improved by \cite{foucart2017mathematical}.
\cite{IHT} proves that another greedy algorithm called \emph{Iterative Hard Thresholding (IHT)} achieves a similar bound to that of CoSaMP for Linear Regression, 
with the condition
$\delta_{3s^*} < 0.067$, which is improved to $\delta_{2s^*} < \frac{1}{3}$ by \cite{JTD11} and to $\delta_{3s^*} < 0.5774$ by \cite{foucart2011hard}.

The RIP-free line of research has shown that strong results can be achieved without a RIP upper bound, given that
the target solution is sufficiently close to being a global optimum. These results typically require
that $s$ is significantly larger than $s^*$. In particular, 
\cite{Zhang11} shows that 
if $\zeta$ is the RGOC of $f$
it can be guaranteed that 
$\left\Vert x - x^*\right\Vert_2 \leq 2\sqrt{6} \frac{\zeta}{\rho^-}$
(or $(1+\sqrt{6}) \frac{\zeta}{\rho^-}$ with a slightly tighter analysis).
\cite{SGJN18} strengthens this bound to $\left(1+\sqrt{1 + \frac{5}{\kappa}}\right) \frac{\zeta}{\rho^-}$.
Furthermore, it has been shown that as long as a ``Signal-to-Noise'' condition holds,
one can actually recover a superset of the target support. 
Typically the condition is a lower bound on $|x_{\min}^*|$, the minimum magnitude non-zero
entry of the target solution. Different lower bounds that have been devised
include $\Omega\left(\frac{\sqrt{s+s^*} \left\Vert \nabla f(x^*)\right\Vert_\infty}{\rho_{s+s^*}^-}\right)$ \cite{JTK14},
which was later improved to $\Omega\left(\sqrt{\frac{f(x^*) - f(\overline{x}^*)}{\rho_{2s}^-}}\right)$,
where $\overline{x}^*$ is an optimal $s$-sparse solution \cite{YLZ16}. Finally, \cite{SGJN18} 
improves the sparsity bound to $O(s^*\kappa\log\left(s^*\kappa\right))$ in the statistical setting
and \cite{SL17_2} shows that the sparsity can be brought down to
$s = s^* + O(\kappa^2)$ if 
a stronger lower bound of 
$\Omega\left(\sqrt{\kappa} \frac{\zeta}{\rho}\right)$ 
is assumed.

\subsection{Our work}
\label{our_work}

In this work we present a new 
algorithm called \emph{Adaptively Regularized Hard Thresholding (ARHT)}, that closes the longstanding gap between 
the 
$O\left(s^*\kappa\frac{f(x^0)-f(x^*)}{\epsilon}\right)$ and $O\left(s^*\kappa^2\right)$ bounds by getting a sparsity of 
$O(s^*\kappa)$
and thus achieving the best
of both worlds.
As \cite{FKT15} shows that
for a general class of algorithms (including greedy algorithms like OMP, IHT as well as LASSO)
the linear dependence on $\kappa$ is necessary even for the special case of Sparse Regression, 
our result is tight for this class of algorithms.
In Section~\ref{sec:lower_bound} we briefly describe this example and also state a conjecture that it can be turned
into an inapproximability result in Conjecture~\ref{conjecture}.
Furthermore, in Section~\ref{OMPR_lower_bound} we show that the $O(s^*\kappa^2)$ sparsity bound 
is tight for OMPR, thus highlighting the importance of regularization in our method.
Our algorithm is efficient, as it 
requires roughly $O\left(s \log^3 \frac{f(x^0) - f(x^*)}{\epsilon}\right)$ iterations,
each of which includes one function minimization in a restricted support of size $s$
and is simple to describe and implement. Furthermore, it directly implies non-trivial 
results in the area of Compressed Sensing.

We also provide a new analysis of OMPR \cite{JTD11}
and show that 
under the condition that $s > s^*\frac{\kappa^2}{4}$, or equivalently
under the RIP condition $\delta_{s+s^*} < \frac{2\sqrt{\frac{s}{s^*}} - 1}{2\sqrt{\frac{s}{s^*}}+1}$,
it is possible to approximately minimize the function $f$ up to some error depending on the RIP constant
and the closeness of $x^*$ to global optimality. More specifically, we show that for any $\epsilon > 0$
OMPR returns a solution $x$ such that
\begin{align*}
f(x) \leq f(x^*) + \epsilon + C_1 (f(x^*) - f(x^{\mathrm{opt}}))
\end{align*}
where $x^{\mathrm{opt}}$ is the globally optimal solution,
as well as 
\begin{align*}
\left\Vert x - x^*\right\Vert_2^2 \leq \epsilon + C_2 (f(x^*) - f(x^{\mathrm{opt}}))
\end{align*}
where $C_1,C_2$ are constants that only depend on %
$\frac{s}{s^*}$ and $\delta_{s+s^*}$. An important feature of our approach is that 
it provides a 
meaningful tradeoff between the RIP constant upper bound and the sparsity of the solution,
even when the sparsity $s$ is arbitrarily close to $s^*$. In other words,
one can relax the RIP condition at the expense of increasing the sparsity of the returned solution.
Furthermore, our analysis applies to general functions with bounded RIP constant.

Experiments with real data suggest that ARHT and a variant of OMPR which we call \emph{Exhaustive Local Search} 
achieve promising performance in recovering sparse solutions.

\subsection{Comparison to previous work}
\label{comparison}

\paragraph{Sparse Optimization}

Our Algorithm~\ref{local_reg} (ARHT) returns a solution with $s = O(s^*\kappa)$ without any additional assumptions,
thus significantly improving over the bound 
$O\left(s^*\min\left\{\kappa\frac{f(x^0)-f(x^*)}{\epsilon}, \kappa^2\right\}\right)$ 
that was known in previous work.
This proves that neither any dependence on the required solution accuracy $\epsilon$, nor 
the quadratic dependence on the condition number $\kappa$ is necessary. Furthermore, no assumption on the function or the target solution is required to achieve
this bound. Importantly, previous results imply that our bound is tight up to constants
for a general class of algorithms, including Greedy-type algorithms and LASSO \cite{FKT15}.

\paragraph{Sparse Solution Recovery}

In Corollary~\ref{restricted_gradient_corollary}, we show that
the improved guarantees of Theorem~\ref{reg_theorem} immediately imply that
ARHT gives a bound of $\left\Vert x - x^*\right\Vert_2 \leq (2+\theta) \frac{\zeta}{\rho^-}$
for any $\theta > 0$, where $\zeta$ is the Restricted Gradient Optimal Constant. This improves the constant factor in front
of the corresponding results of \cite{Zhang11,SGJN18}.

As we saw, our Theorem~\ref{local_theorem} directly implies that OMPR %
gives an upper bound on $\left\Vert x-x^*\right\Vert_2^2$ 
of the same form as the RIP-based bounds in previous work, under the condition
$\delta_{s+s^*} < \frac{2\sqrt{\frac{s}{s^*}} - 1}{2\sqrt{\frac{s}{s^*}} + 1}$.
While previous results either concentrate on the case $s=s^*$,
or $s\gg s^*$,
our analysis provides a way to trade off increased sparsity for a more relaxed RIP bound, allowing
for a whole family of RIP conditions where $s$ is arbitrarily close to $s^*$.
Specifically, if we set $s=s^*$ our work implies recovery for $\delta_{2s^*} < \frac{1}{3}\approx 0.33$, which matches
the best known bound for any greedy algorithm \cite{JTD11}, although it is a stricter
condition than the $\delta_{2s^*} < 0.62$ required by LASSO \cite{foucart2017mathematical}.
Table~\ref{tradeoff} contains a few such RIP bounds implied by our analysis
and shows that it readily surpasses the 
bounds for Subspace Pursuit $\delta_{3s^*} < 0.35$,
CoSaMP $\delta_{4s^*} < 0.48$, 
and OMP $\delta_{31s^*} < 0.33$ \cite{JTD11,Zhang11}.
Another important feature compared to previous work is that all our guarantees are not restricted to Linear Regression
and are true for any function $f$, as long as it satisfies the required RIP condition, which makes the result more general.

\begin{table}
\caption{\label{tradeoff} Compressed Sensing tradeoffs implied by Theorem~\ref{local_theorem}: Sparsity vs RIP condition}
\vskip 0.15in
\begin{center}
\begin{small}
\begin{sc}
\begin{tabular}{cc}
\toprule
$s$ & RIP condition\\
\midrule
$s^*$ & $\delta_{2s^*} < 0.33$\\
$2s^*$ & $\delta_{3s^*} < 0.47$\\
$3s^*$ & $\delta_{4s^*} < 0.55$\\
$30s^*$ & $\delta_{31s^*} < 0.83$\\
\bottomrule
\end{tabular}
\end{sc}
\end{small}
\end{center}
\vskip -0.1in
\end{table}

\paragraph{Sparse Support Recovery}

Corollary~\ref{support_recovery} shows that
as a direct consequence of our work,
the condition
$|x_{\min}^*| > \frac{\zeta}{\rho^-}$ suffices for our algorithm to recover a superset 
of the support with size $s = O(s^* \kappa)$. 
Compared to \cite{JTK14}, we improve both the size of the superset, as well as the condition, since 
$\sqrt{s} \frac{\left\Vert \nabla f(x^*)\right\Vert_\infty}{\rho^-} \geq \sqrt{\frac{s}{s^*}} \frac{\zeta}{\rho^-} = \Omega\left(\frac{\zeta}{\rho^-}\right)$.
Compared to \cite{SL17_2}, the bounds on the superset size are incomparable in general, but our $|x_{\min}^*|$ condition is more relaxed, since
$\sqrt{\kappa}\frac{\zeta}{\rho^-} = \Omega(\frac{\zeta}{\rho^-})$.
Finally, compared to \cite{YLZ16} 
we have a stricter lower bound on $|x_{\min}^*|$, but with a better bound on the superset size ($O(s^*\kappa)$ instead of $O(s^*\kappa^2)$).
Although not explicitly stated, \cite{Zhang11,SGJN18} also
give a similar lower bound of
$\sqrt{1 + \frac{10}{\kappa}}\frac{\zeta}{\rho^-}$ which we improve by a constant factor.

\paragraph{Runtime discussion}

ARHT has the advantage of being very simple to implement in practice.
The runtime of Algorithm~\ref{local_reg} (ARHT) is 
comparable to that of the most efficient greedy algorithms (e.g. OMP/OMPR),
as it requires a single function minimization per iteration.
On the other hand, Algorithm~\ref{exlocal} (Exhaustive Local Search) is less efficient, as it requires $O(n)$ 
function minimizations in each iteration, although in practice one might be able to speed it up by
exploiting the fact that the problems being solved in each iteration are very closely related.

\paragraph{Naming Conventions}
The algorithm that we call 
\emph{Orthogonal Matching Pursuit (OMP)}, 
is also known as 
``Greedy'' \cite{Natarajan95}, ``Fully Corrective Forward Greedy Selection'' or just ``Forward Selection''.
What we call \emph{Orthogonal Matching Pursuit with Replacement (OMPR)} \cite{JTD11} is also known by various other names.
It is referenced in \cite{SSZ10}
as a simpler variant of their ``Fully Corrective Forward Greedy Selection with Replacement'' algorithm,
or just Forward Selection with Replacement,
or ``Partial Hard Thresholding with parameter $r=1$ ($\mathrm{PHT}(r)$ where $r=1$)'' \cite{JTD17} which
is a generalization of Iterative Hard Thresholding.
Finally, what we call \emph{Exhaustive Local Search} is essentially a variant of ``Orthogonal Least Squares'' that includes replacement steps,
and also appears in \cite{SSZ10} as ``Fully Corrective Forward Greedy Selection with Replacement'', or just ``Forward Stepwise Selection with Replacement''.
See also \cite{BD07} regarding naming conventions.

\begin{remark}
Most of the results in the literature either only apply to, or
are only presented for the Linear Regression problem. Since all
of our results apply to general function minimization, we present them as such.
\end{remark}

\section{Preliminaries}
\subsection{Definitions}
We denote $[i] := \{1,2,\dots,i\}$. For any $x\in\mathbb{R}^n$ and $R\subseteq[n]$, we define $x_R\in\mathbb{R}^n$ as
\begin{align*}
(x_R)_i = \begin{cases}
x_i & i\in R\\
0 & \text{ otherwise }\\
\end{cases}
\end{align*}
Additionally, for any differentiable function $f:\mathbb{R}^n\rightarrow\mathbb{R}$ with gradient $\nabla f(x)$, we will denote by
$\nabla_R f(x)$ the restriction of $\nabla f(x)$ to $R$, i.e. $\left(\nabla f(x)\right)_R$.

\begin{definition}[$\ell_p$ Norms]
For any $p\in\mathbb{R}_{+}$, we define
\[ \left\Vert x\right\Vert_p = \left(\sum\limits_i |x_i|^p\right)^{1/p} \]
as well as the special cases
$\left\Vert x\right\Vert_0 = |\{i\ :\ x_i \neq 0\}|$ and $\left\Vert x\right\Vert_\infty = \underset{i}{\max}\ |x_i| $
\end{definition}

\begin{definition}
For any $x\in\mathbb{R}^n$, we denote the \emph{support} of $x$ by
$ \mathrm{supp}(x) = \{i\ :\ x_i\neq 0\} $
\end{definition}

\begin{definition}[Restricted Condition Number]
Given a differentiable
function $f$,
the \emph{Restricted $\ell_2$-Smoothness (RSS)} constant, or just Restricted Smoothness constant,
of $f$ at sparsity level $s$ is the minimum $\rho_s^+\in\mathbb{R}$
such that 
\[ f(y) \leq f(x) + \nabla f(x)^\top(y-x) + \frac{\rho_s^+}{2} \left\Vert y - x\right\Vert_2^2 \]
for all $x,y\in\mathbb{R}^n$ with $|\mathrm{supp}(y-x)| \leq s$.
Similarly,
the \emph{Restricted $\ell_2$-Strong Convexity (RSC)} constant, or just Restricted Strong Convexity constant, 
of $f$ at sparsity level $s$ is the maximum $\rho_s^-\in\mathbb{R}_{+}$
such that 
\[ f(y) \geq f(x) + \nabla f(x)^\top(y-x) + \frac{\rho_s^-}{2} \left\Vert y - x\right\Vert_2^2 \]
for any $x,y\in\mathbb{R}^n$ with $|\mathrm{supp}(y-x)| \leq s$.
Given that $\rho_s^+,\rho_s^->0$, the \emph{Restricted Condition Number} of $f$ at sparsity level $s$ is defined as
$\kappa_s = \rho_s^+/\rho_s^-$. We will also make use of $\widetilde{\kappa}_s = \rho_2^+ / \rho_s^-$ which is
at most $\kappa_s$ as long as $s\geq 2$.
\end{definition}

The following lemma stems from the definitions of $\rho_2^+,\rho_1^+$ and can be used to relate $\rho_2^+$ with $\rho_1^+$
\begin{lemma}
For any function $f$ that has the RSC property at sparsity level $\geq 2$
and RSS constants 
$\rho_1^+,\rho_2^+$
at sparsity levels $1$ and $2$ respectively, we have 
$\rho_2^+ \leq 2\rho_1^+$.
\label{rho1}
\end{lemma}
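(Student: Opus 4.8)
The plan is to show that $2\rho_1^+$ is a valid (though not necessarily minimal) Restricted Smoothness constant of $f$ at sparsity level $2$; by minimality of $\rho_2^+$ this yields $\rho_2^+\le 2\rho_1^+$. So the goal becomes: for every $x\in\mathbb{R}^n$ and every $d$ with $|\mathrm{supp}(d)|\le 2$, prove $f(x+d)\le f(x)+\nabla f(x)^\top d+\rho_1^+\|d\|_2^2$. If $|\mathrm{supp}(d)|\le 1$ this is immediate from the definition of $\rho_1^+$, so I would assume $\mathrm{supp}(d)=\{i,j\}$ and write $d=d_i e_i+d_j e_j$, with $e_i,e_j$ the $i$-th and $j$-th standard basis vectors.

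The key idea is to flip the sign of the $j$-th coordinate of the step: set $d^-:=d_i e_i-d_j e_j$ and let $m:=x+d_i e_i$ play the role of a ``midpoint''. First I would note that $(x+d)-m=d_j e_j$ and $(x+d^-)-m=-d_j e_j$ are both $1$-sparse, so the RSS inequality at level $1$ applies to the pairs $(m,x+d)$ and $(m,x+d^-)$; adding the two inequalities cancels the $\nabla f(m)$ cross-terms and yields
\[
f(x+d)+f(x+d^-)\le 2f(m)+\rho_1^+ d_j^2 .
\]
Next I would upper bound $f(m)=f(x+d_i e_i)$ via RSS at level $1$ (since $d_i e_i$ is $1$-sparse) and lower bound $f(x+d^-)$ via RSC at level $2$ (since $d^-$ is $2$-sparse) --- this is precisely where the hypothesis that $f$ has the RSC property at sparsity level $\ge 2$ enters, through $\rho_2^-\ge 0$. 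Substituting both bounds, the two copies of $d_i\nabla_i f(x)$ combine to a single copy, the term $d_j\nabla_j f(x)$ reappears with the correct sign, and the linear part collapses exactly to $\nabla f(x)^\top d$; collecting the quadratic terms leaves $f(x+d)\le f(x)+\nabla f(x)^\top d+\rho_1^+\|d\|_2^2-\frac{\rho_2^-}{2}\|d\|_2^2\le f(x)+\nabla f(x)^\top d+\rho_1^+\|d\|_2^2$, which is what we wanted.

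The one genuinely non-routine step is the choice of auxiliary points: the midpoint $m$ together with the reflected step $d^-$ are engineered so that every pair fed into an RSS or RSC inequality differs in at most the permitted number of coordinates, and so that all gradient cross-terms telescope; everything afterward is bookkeeping. As a sanity check on the constant, the degenerate quadratic $f(z)=\frac{\rho_1^+}{2}(z_i+z_j)^2$ has $\rho_2^+=2\rho_1^+$, so the factor $2$ is best possible.
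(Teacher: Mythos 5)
Your proof is correct: showing that $2\rho_1^+$ is a feasible RSS constant at level $2$ and invoking minimality of $\rho_2^+$ is exactly the right framing, and each inequality you apply (RSS at level $1$ around $m$ in the directions $\pm d_j e_j$, RSS at level $1$ from $x$ to $m$, RSC at level $2$ from $x$ to $x+d^-$) is used on a pair of points whose difference has the permitted support size, so the bookkeeping goes through and even yields the slightly sharper $\rho_2^+\le 2\rho_1^+-\rho_2^-$. The paper's argument reaches the same conclusion by a different decomposition: it considers the doubled single-coordinate steps $x+2\alpha\vec{1}_i$ and $x+2\beta\vec{1}_j$, bounds each by RSS at level $1$ applied at $x$, and then uses level-$2$ (restricted) convexity as an \emph{upper} bound via the midpoint/Jensen inequality $f(y)\le\frac12\big(f(x+2\alpha\vec{1}_i)+f(x+2\beta\vec{1}_j)\big)$, since $y$ is the midpoint of those two points; you instead reflect the step around the intermediate point $m=x+d_ie_i$ and use strong convexity as a \emph{lower} bound at the reflected point $x+d^-$ to eliminate it. Both routes are equally elementary (two or three applications of level-$1$ smoothness plus one use of level-$2$ convexity); the paper's Jensen step only needs plain convexity along $2$-sparse directions, while your reflection trick makes the dependence on $\rho_2^-$ explicit and would give the strengthened constant if one cared to keep it. One cosmetic caveat: your tightness example $f(z)=\frac{\rho_1^+}{2}(z_i+z_j)^2$ has $\rho_2^-=0$, so it sits on the boundary of the lemma's hypothesis and should be read as a limiting case rather than a literal instance.
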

\begin{proof}
For any $x,y\in\mathbb{R}^n$ such that $\left|\mathrm{supp}(y-x)\right| \leq 2$, 
We will prove that 
\begin{align*}
f(y) \leq f(x) + \langle \nabla f(x), y-x\rangle + \frac{2\rho_1^+}{2} \left\Vert y-x\right\Vert_2^2
\end{align*}
Let 
$y = x + \alpha \vec{1}_i + \beta \vec{1}_j$ for some $i,j\in[n]$ and $\alpha,\beta\in\mathbb{R}$. 
We assume $i\neq j$ and since otherwise the claim already follows from RSS at sparsity level $1$.
We apply the RSS property with sparsity level $1$ to get the inequalities
\begin{align*}
 f(x + 2\alpha \vec{1}_i) \leq f(x) + 2\langle \nabla f(x),\alpha \vec{1}_i\rangle + 4\frac{\rho_1^+}{2} \left\Vert \alpha \vec{1}_i \right\Vert_2^2
\end{align*}
and
\begin{align*}
f(x + 2\beta\vec{1}_j) \leq f(x) + 2\langle \nabla f(x), \beta \vec{1}_j\rangle + 4\frac{\rho_1^+}{2} \left\Vert \beta \vec{1}_j \right\Vert_2^2
\end{align*}
Now, by using convexity (more precisely restricted convexity at sparsity level $2$ that is implied by RSC) we have 
\begin{align*}
 f(y) &= f(x + \alpha \vec{1}_i + \beta\vec{1}_j)\\
& \leq \frac{1}{2} \left(f(x+2\alpha \vec{1}_i) + f(x + 2\beta\vec{1}_j)\right)\\
& \leq f(x) + \langle \nabla f(x),\alpha \vec{1}_i + \beta \vec{1}_j\rangle + \frac{2\rho_1^+}{2} \left\Vert \alpha \vec{1}_i + \beta \vec{1}_j\right\Vert_2^2\\
& = f(x) + \langle \nabla f(x),y-x\rangle + \frac{2\rho_1^+}{2} \left\Vert y-x\right\Vert_2^2
\end{align*}
\end{proof}

\begin{definition}[Restricted Isometry Property (RIP)]
We say that a differentiable function $f$ has the \emph{Restricted Isometry Property} at sparsity level $s$ if
$\rho_s^+,\rho_s^->0$, and
the \emph{RIP constant} of $f$ at sparsity level $s$ is then defined as $\delta_s = \frac{\kappa_s - 1}{\kappa_s + 1}$.\footnote{We note that 
this is a more general definition than the one usually given for quadratic functions (i.e. Linear Regression).}
\end{definition}

\begin{definition}[Restricted Gradient Optimal Constant (RGOC)]
Given a differentiable function $f$ and a ``target'' solution $x^*$,
the \emph{Restricted Gradient Optimal Constant} \cite{Zhang11} at sparsity level $s$
is the minimum $\zeta_s\in\mathbb{R}_+$ such that
\[ \left|\langle\nabla f(x^*), y\rangle \right| \leq \zeta_s \left\Vert y\right\Vert_2 \]
for all $s$-sparse $y$.
Setting $y = \nabla_{S} f(x^*)$ for some set $S$ with $|S| \leq s$, this implies that
$\zeta_s \geq \left\Vert \nabla_S f(x^*)\right\Vert$. An alternative definition is
that $\zeta_s$ is the $\ell_2$ norm of the $s$ elements of $\nabla f(x^*)$ with highest
absolute value.
\label{RGOC}
\end{definition}
\begin{definition}[$S$-restricted minimizer]
Given 
$f:\mathbb{R}^n\rightarrow \mathbb{R}$, 
$x^*\in\mathbb{R}^n$, and 
$S\subseteq[n]$, 
we will call $x^*$ an $S$-restricted minimizer of $f$ if
$\mathrm{supp}(x^*)\subseteq S$ 
and for all $x$ such that 
$\mathrm{supp}(x)\subseteq S$ we have
$f(x^*)\leq f(x)$.
\end{definition}

In Lemma~\ref{martingale_concentration} we state a standard martingale concentration inequality
that we will use. See also \cite{CL06} for more on martingales.

\begin{lemma}[Martingale concentration inequality (Special case of Theorem~6.5 in~\cite{CL06})]
\label{martingale_concentration}
Let $Y_0=0,Y_1,\dots, Y_n$ be a martingale with respect to the sequence $i_1,\dots,i_n$
such that
\[ \mathrm{Var}\left(Y_k\ |\ i_1,\dots,i_{k-1}\right) \leq \sigma^2 \]
and
\[ Y_{k-1} -Y_{k} \leq M \]
for all $k\in[n]$, then for any $\lambda > 0$,
\[ \Pr\left[Y_n \leq -\lambda\right] \leq e^{-\lambda^2/\left(2\left(n \sigma^2 + M\lambda /3\right)\right)} \]
\end{lemma}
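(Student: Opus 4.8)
The statement is the one-sided Bernstein (Freedman) inequality for martingales, and my plan is to give the standard exponential-moment argument; alternatively one could simply specialize Theorem~6.5 of \cite{CL06} by matching its parameters to $\sigma^2$, $M$, and $n$, but I will sketch the self-contained derivation. Write $X_k = Y_k - Y_{k-1}$ for the martingale differences and $W_k = -X_k$. Since $Y_{k-1}$ is determined by $i_1,\dots,i_{k-1}$, the hypotheses say exactly that $\mathbb{E}[W_k \mid i_1,\dots,i_{k-1}] = 0$, that $\mathbb{E}[W_k^2 \mid i_1,\dots,i_{k-1}] = \mathrm{Var}(Y_k \mid i_1,\dots,i_{k-1}) \leq \sigma^2$, and that $W_k = Y_{k-1} - Y_k \leq M$. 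For any $t>0$, Markov's inequality applied to $e^{-tY_n} = \prod_{k=1}^n e^{tW_k}$ gives $\Pr[Y_n \leq -\lambda] = \Pr\!\big[e^{-tY_n}\geq e^{t\lambda}\big] \leq e^{-t\lambda}\,\mathbb{E}\big[e^{-tY_n}\big]$, so it suffices to control the moment generating function of $-Y_n$ and then optimize over $t$.

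For the MGF I would peel off one factor at a time: conditioning on $i_1,\dots,i_{n-1}$ gives $\mathbb{E}[e^{-tY_n}] = \mathbb{E}\big[e^{-tY_{n-1}}\,\mathbb{E}[e^{tW_n}\mid i_1,\dots,i_{n-1}]\big]$. To bound the inner conditional expectation, use that $\phi(x) := (e^x - 1 - x)/x^2$ (extended by $\phi(0)=1/2$) is nondecreasing on all of $\mathbb{R}$; hence for $W_n \leq M$ we have $e^{tW_n} = 1 + tW_n + t^2 W_n^2\,\phi(tW_n) \leq 1 + tW_n + t^2 W_n^2\,\phi(tM)$, and taking conditional expectations (mean zero, second moment at most $\sigma^2$) yields $\mathbb{E}[e^{tW_n}\mid i_1,\dots,i_{n-1}] \leq 1 + t^2\sigma^2\phi(tM) \leq e^{t^2\sigma^2\phi(tM)}$. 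Iterating over $k = n, n-1, \dots, 1$ gives $\mathbb{E}[e^{-tY_n}] \leq e^{n t^2 \sigma^2 \phi(tM)}$, so $\Pr[Y_n\leq -\lambda] \leq \exp\!\big(-t\lambda + n\sigma^2 t^2 \phi(tM)\big)$.

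It remains to bound $\phi$ and choose $t$. Using $k! \geq 2\cdot 3^{k-2}$ for $k\geq 2$, for $0 \leq x < 3$ we get $e^x - 1 - x = \sum_{k\geq 2} x^k/k! \leq \tfrac{x^2}{2}\sum_{j\geq 0}(x/3)^j = \tfrac{x^2}{2(1-x/3)}$, i.e.\ $\phi(x) \leq \tfrac{1}{2(1-x/3)}$. Plugging this in, the exponent is at most $-t\lambda + \tfrac{n\sigma^2 t^2}{2(1-tM/3)}$ for $0<t<3/M$. Choosing $t = \dfrac{\lambda}{n\sigma^2 + M\lambda/3}$ (which satisfies $tM/3 < 1$), one computes $1 - tM/3 = \dfrac{n\sigma^2}{n\sigma^2 + M\lambda/3}$, so $\dfrac{n\sigma^2 t^2}{2(1-tM/3)} = \dfrac{t^2(n\sigma^2 + M\lambda/3)}{2} = \dfrac{t\lambda}{2}$, and the exponent becomes $-\dfrac{t\lambda}{2} = -\dfrac{\lambda^2}{2(n\sigma^2 + M\lambda/3)}$, which is exactly the claimed bound.

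The only mildly delicate points are verifying that $\phi$ is nondecreasing on all of $\mathbb{R}$ (so that the increments $W_k$ being possibly very negative causes no trouble, only the upper bound $W_k\le M$ is needed) and the elementary series estimate for $\phi$; both are routine. Since the result is literally a special case of \cite{CL06}, I do not expect any genuine obstacle — the essential content is already packaged there, and the steps above just make the constants explicit.
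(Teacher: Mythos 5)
Your proof is correct. Note, however, that the paper does not actually prove this lemma: it is imported wholesale as a special case of Theorem~6.5 of \cite{CL06}, so the ``paper's proof'' is a citation, whereas you give a self-contained Freedman/Bernstein-type derivation. Your argument is the standard one and all steps check out: passing to the reversed differences $W_k = Y_{k-1}-Y_k$ correctly converts the lower-tail bound on $Y_n$ into an upper-tail bound on $\sum_k W_k$ with increments bounded above by $M$ and conditional second moments at most $\sigma^2$; the monotonicity of $\phi(x)=(e^x-1-x)/x^2$ on all of $\mathbb{R}$ (which you rightly flag as the only point needing care, and which follows e.g.\ from the integral representation $\phi(x)=\int_0^1\!\!\int_0^1 s\,e^{xst}\,dt\,ds$) is exactly what lets you use only the one-sided bound $W_k\le M$; the series estimate $\phi(x)\le \tfrac{1}{2(1-x/3)}$ for $0\le x<3$ and the choice $t=\lambda/(n\sigma^2+M\lambda/3)$ reproduce the stated exponent $-\lambda^2/\bigl(2(n\sigma^2+M\lambda/3)\bigr)$ exactly. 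What your route buys is transparency: the constants are verified directly rather than by translating the notation and supermartingale conventions of Theorem~6.5 in \cite{CL06} into the form used here; what the citation buys is brevity and also covers variants (e.g.\ nonuniform variance proxies $\sigma_k^2$) that your argument would handle with the obvious modification $\sum_k\sigma_k^2$ in place of $n\sigma^2$. Either way the lemma stands as stated.
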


\subsection{Algorithms}
\subsubsection{$\ell_1$ optimization (LASSO)}
The LASSO approach is to relax the $\ell_0$ constraint into an $\ell_1$ one, thus solving the following optimization problem:
\begin{align}
    &\underset{x}{\min}\ f(x) + \lambda \left\Vert x\right\Vert_1
\label{l1}
\end{align}
for some parameter $\lambda > 0$.

\subsubsection{Iterative Hard Thresholding (IHT):}
\cite{IHT} define the hard thresholding operator $H_r(x)$ as
\[ [H_r(x)]_i = \begin{cases}
x_i & \text{if $|x_i|$ is one of the $r$ entries of $x$}\\
	& \text{with largest magnitude}\\
0 & \text{otherwise}
\end{cases}
\]
Using this, the algorithm is described in Algorithm~\ref{IHT}.
\begin{algorithm}[h!]
\caption{Iterative Hard Thresholding (IHT)}\label{IHT}
\begin{algorithmic}[1]
\FUNCTION{$\mathrm{IHT}(s,T)$}
\STATE function to be minimized $f:\mathbb{R}^n\rightarrow\mathbb{R}$
\STATE number of iterations $T$
\STATE output sparsity $s$
\STATE $S^0 \leftarrow \emptyset$
\STATE $x^0 \leftarrow \vec{0}$
\FOR {$t=0\dots T-1$}
\STATE $x^{t+1} \leftarrow H_s\left(x^{t} - \eta \nabla f(x^{t})\right)$
\ENDFOR
\STATE {\bf return} $x^T$
\ENDFUNCTION
\end{algorithmic}
\end{algorithm}

\subsubsection{Orthogonal Matching Pursuit (Greedy/OMP/Fwd stepwise selection)}
The algorithm is described in Algorithm~\ref{greedy}.

\begin{algorithm}[h!]
\caption{Greedy/OMP/Fwd stepwise selection}\label{greedy}
\begin{algorithmic}[1]
\FUNCTION{$\mathrm{greedy}(s)$}
\STATE function to be minimized $f:\mathbb{R}^n\rightarrow\mathbb{R}$
\STATE output sparsity $s$
\STATE $x^0 \leftarrow \vec{0}$
\FOR {$t=0\dots s-1$}
\STATE $i \leftarrow \mathrm{argmax}\big\{|\nabla_i f(x^{t})|\ \big|\  i\in [n]\backslash S^{t}\big\}$
\STATE $S^{t+1}  \leftarrow S^{t} \cup \{i\}$
\STATE $x^{t+1} \leftarrow \mathrm{argmin}\big\{f(x)\ \big|\ \mathrm{supp}(x)\subseteq S^{t+1}\big\}$
\ENDFOR
\STATE {\bf return} $x^s$
\ENDFUNCTION
\end{algorithmic}
\end{algorithm}

\subsubsection{Orthogonal Matching Pursuit with Replacement (Local search/OMPR/Fwd stepwise selection with replacement steps)}
The algorithm is described in Algorithm~\ref{local}.

\begin{algorithm}[h!]
\caption{Orthogonal Matching Pursuit with Replacement}\label{local}
\begin{algorithmic}[1]
\FUNCTION{$\mathrm{OMPR}(s)$}
\STATE function to be minimized $f:\mathbb{R}^n\rightarrow\mathbb{R}$
\STATE output sparsity $s$
\STATE $S^0 \leftarrow [s]$
\STATE $x^0 \leftarrow \mathrm{argmin}\big\{f(x)\ \big|\ \mathrm{supp}(x)\subseteq S^0\big\}$
\STATE $t \leftarrow 0$
\WHILE {{\bf true}}
\STATE $i \leftarrow \mathrm{argmax}\big\{|\nabla_i f(x^{t})|\ \big|\  i\in [n]\backslash S^{t}\big\}$
\STATE $j \leftarrow \mathrm{argmin}\big\{|x_j^{t}|\ \big|\  j\in S^{t}\big\}$
\STATE $S^{t+1}  \leftarrow S^{t} \cup \{i\}\backslash \{j\}$
\STATE $x^{t+1} \leftarrow \mathrm{argmin}\big\{f(x)\ \big|\ \mathrm{supp}(x)\subseteq S^{t+1}\big\}$
\IF {$f(x^{t+1}) \geq f(x^t)$}
\STATE {\bf break}
\ENDIF
\STATE $t \leftarrow t + 1$
\ENDWHILE
\STATE $T \leftarrow t$
\STATE {\bf return} $x^{T}$
\ENDFUNCTION
\end{algorithmic}
\end{algorithm}

\subsubsection{Exhaustive Local Search}
The algorithm in this section is similar to OMPR, in that it iteratively inserts a new
element in the support while removing one from it at the same time. While, as in OMPR,
the element to be removed is the minimum magnitude entry, the one to be inserted is chosen
to be the one which results in the maximum decrease in the value of the objective.
It is described in Algorithm~\ref{exlocal}.

\begin{algorithm}[h!]
\caption{Exhaustive Local Search}\label{exlocal}
\begin{algorithmic}[1]
\STATE function to be minimized $f:\mathbb{R}^n\rightarrow\mathbb{R}$
\STATE target sparsity $s$
\STATE number of iterations $T$
\STATE $S^0 \leftarrow [s]$
\STATE $x^0 \leftarrow \mathrm{argmin}\big\{f(x)\ \big|\ \mathrm{supp}(x)\subseteq S^0\big\}$
\FOR {$t=0\dots T-1$}
\STATE $j \leftarrow \underset{j\in S^{t}}{\mathrm{argmin}}\ x_j^2$
\STATE $i \leftarrow \underset{i\in [n]\backslash S^{t}}{\mathrm{argmin}}
\left\{ 
\underset{x\ :\ \mathrm{supp}(x)\subseteq S^{t}\cup \{i\}\backslash\{j\}}{\min} f(x)\right\}$
\STATE $S^{t+1} \leftarrow S^{t}\cup\{i\}\backslash\{j\}$
\STATE $x^{t+1} \leftarrow \mathrm{argmin}\big\{f(x)\ \big|\ \mathrm{supp}(x)\subseteq S^{t+1}\big\}$
\IF {$f(x^{t+1}) \geq f(x^t)$}
\STATE {\bf return} $x^{t}$
\ENDIF
\ENDFOR
\STATE {\bf return} $x^{T}$
\end{algorithmic}
\end{algorithm}

\begin{remark}
In the following sections, we will denote the minimization objective
by $f$, the RSS and RSC parameters 
$\rho_{2}^+$ and $\rho_{s+s^*}^-$
by $\rho^+$ and $\rho^-$ respectively,
as well as 
$\kappa = \frac{\rho_{s+s^*}^+}{\rho_{s+s^*}^-}$
and $\t{\kappa} = \frac{\rho_2^+}{\rho_{s+s^*}^-}$.
Note that the use of $\rho_2^+$ instead of $\rho_{1}^+$ used in some works is not restrictive. As shown in Lemma~\ref{rho1},
$\rho_2^+ \leq 2 \rho_1^+$ and so in all the bounds involving $\t{\kappa}$,
it can be replaced by $2\frac{\rho_1^+}{\rho_{s+s^*}^-}$, thus only losing a factor of $2$.
Furthermore, we state our results in terms of $\widetilde{\kappa}$ as opposed to $\kappa$.
This is always more general since $\widetilde{\kappa} \leq \kappa$.

When no additional context is provided, we denote current solution by $x$ and the target solution $x^*$, with respective support sets
$S$ and $S^*$ and sparsities $s=|S|$ and $s^*=|S^*|$. 
\end{remark}

\section{Adaptively Regularized Hard Thresholding (ARHT)}
\label{sec:arht}
\subsection{Overview and Main Theorem}

Our algorithm is essentially a hard thresholding algorithm (and more specifically OMPR, also known as PHT(1))
with the crucial novelty that it is applied on an adaptively regularized objective function.
Hard thresholding algorithms maintain a solution $x$ supported on $S\subseteq [n]$, which they iteratively update by
inserting new elements into the support set $S$ and removing the same number of elements from it, in order to preserve the sparsity of $x$. 
More specifically, OMPR makes one insertion and one removal in each iteration.
In order to evaluate the element $i$ to be \emph{inserted} into $S$, 
OMPR uses the fact that, because of smoothness, $\frac{\left(\nabla_i f(x)\right)^2}{2\rho_2^+}$ is a lower bound on the decrease
of $f(x)$ caused by inserting $i$ into the support, and therefore picks $i$ to maximize $\left|\nabla_i f(x) \right|$.
Similarly, in order to evaluate the element $j$ to be \emph{removed} from $S$, 
OMPR uses the fact that $\frac{\rho_2^+}{2} x_j^2$ upper bounds the increase of $f(x)$ caused by setting $x_j=0$, and therefore
picks $j$ to minimize $\left|x_j\right|$.
However, the real worth of $j$ might be as small as $\frac{\rho_{2}^-}{2} x_j^2$, so the upper bound can be loose by a factor of 
$\frac{\rho_2^+}{\rho_{2}^-} \geq 
\frac{\rho_2^+}{\rho_{s+s^*}^-} = \widetilde{\kappa}$.

We eliminate this discrepancy by running the algorithm on the regularized function $g(z) := f(z) + \frac{\rho_2^+}{2} \left\Vert z\right\Vert_2^2$. 
As the restricted condition number of $g$ is now $O(1)$, 
the real worth of a removal candidate $j$ matches the upper bound up to a constant factor.

However, even though $g$ is now well conditioned,
the analysis can only guarantee the quality of the solution in terms of the \emph{original} objective
$f$ if the regularization is \emph{not} applied on elements $S^*$, i.e. 
$\frac{\rho_2^+}{2} \left\Vert x_{R\backslash S^*}\right\Vert_2^2$ for some sufficiently large $R\subseteq[n]$;
if this is the case, a solution with sparsity $O(s^* \widetilde{\kappa})$ can be recovered. 
Unfortunately, there is no way of knowing a priori which
elements not to regularize, as this is equivalent to finding the target solution. 
As a result, the algorithm can get trapped in local minima, which are defined as states in which one iteration of the algorithm does not decrease 
$g(x)$, even though $x$ is a suboptimal solution in terms of $f$ (i.e. $f(x) > f(x^*)$).

The main contribution of this work is to characterize such local minima and devise a procedure that is able to successfully
escape them, thus allowing $x$ to converge to a desired solution for the original objective.

The core algorithm is presented in Algorithm~\ref{arls}.
The full algorithm additionally requires some standard routines like binary search and is presented in
Algorithm~\ref{local_reg}.
\begin{algorithm}[h!]
\caption{Adaptively Regularized Hard Thresholding core routine}\label{arls}
\begin{algorithmic}[1]
\FUNCTION{ARHT\_core($s,\mathrm{opt},\epsilon$)}
	\STATE function to be minimized $f:\mathbb{R}^n\rightarrow\mathbb{R}$
	\STATE target sparsity $s$
	\STATE target value $\mathrm{opt}$ (current guess for the optimal value)
	\STATE target error $\epsilon$
	\STATE Define $g_R(x) := f(x) + \frac{\rho_2^+}{2} \left\Vert x_R\right\Vert_2^2$ for all $R\subseteq[n]$.
	\STATE $R^0 \leftarrow [n]$
	\STATE $S^0 \leftarrow [s]$
	\STATE $x^0 \leftarrow \underset{\mathrm{supp}(x)\subseteq S^0}{\mathrm{argmin}}\ g_{R^0}(x)$
	\STATE $T = 2s \log \frac{f(\vec{0}) - \underset{x}{\min}\, f(x)}{\epsilon}$ (number of iterations)
	\FOR {$t=0\dots T-1$}
	\IF {$\underset{\mathrm{supp}(x)\subseteq S^t}{\min}\ f(x) \leq \mathrm{opt}$}
	\STATE {\bf return} $\underset{\mathrm{supp}(x)\subseteq S^t}{\mathrm{argmin}}\ f(x)$
	\ENDIF
	\STATE $i\leftarrow \underset{i\in[n]}{\mathrm{argmax}}\ |\nabla_i g_{R^t}(x^{t})|$
	\STATE $j\leftarrow \underset{j\in S^{t}}{\mathrm{argmin}}\ |x_j|$
	\STATE {$S^{t+1} \leftarrow S^{t}\cup\{i\}\backslash\{j\}$}
	\STATE $x^{t+1} \leftarrow \underset{\mathrm{supp}(x)\subseteq S^{t+1}}{\mathrm{argmin}}\ g_{R^t}(x)$
	\IF {$g_{R^t}(x^t) -g_{R^t}(x^{t+1}) < \frac{1}{s} \left(g_{R^t}(x^t) - \mathrm{opt}\right)$}
	\STATE {$S^{t+1} \leftarrow S^{t}$}
	\STATE {Sample $i\in R^t$ proportional to $(x_i^t)^2$}
	\STATE {$R^{t+1} \leftarrow R^t \backslash \{i\}$}
	\STATE $x^{t+1} \leftarrow \underset{\mathrm{supp}(x)\subseteq S^{t+1}}{\mathrm{argmin}}\ g_{R^{t+1}}(x)$
	\ENDIF
	\ENDFOR
	\STATE {\bf return} $x^{T}$
\ENDFUNCTION
\end{algorithmic}
\end{algorithm}

\begin{algorithm}[h!]
\caption{Adaptively Regularized Hard Thresholding}\label{local_reg}
\begin{algorithmic}[1]
\FUNCTION{ARHT\_robust($s,\mathrm{opt}, \epsilon, B$)}
\STATE function to be minimized $f:\mathbb{R}^n\rightarrow\mathbb{R}$
\STATE lower bound on target value $B$
\STATE $x^{\mathrm{ret}}\leftarrow \vec{0}$
\FOR {$z=1\dots 5\log \left(6n\log\frac{f(\vec{0}) - B}{\epsilon}\right)$}
\STATE $x\leftarrow \mathrm{ARHT\_core}(s,\mathrm{opt},\epsilon)$
\IF {$f(x) < f(x^{\mathrm{ret}})$}
\STATE $x^{\mathrm{ret}} \leftarrow x$
\ENDIF
\ENDFOR
\STATE {\bf return} $x^{\mathrm{ret}}$
\ENDFUNCTION
\FUNCTION{ARHT($s,\epsilon$)}
	\STATE function to be minimized $f:\mathbb{R}^n\rightarrow\mathbb{R}$
	\STATE target sparsity $s$
	\STATE target error $\epsilon$
	\STATE $B\leftarrow \underset{x}{\min}\ f(x)$
	\STATE $l\leftarrow B$
	\STATE $b \leftarrow \vec{0}$
	\STATE $r\leftarrow f(b)$
	\WHILE {$r-l > \epsilon$}
	\STATE $m\leftarrow \frac{l+r}{2}$
	\STATE $x \leftarrow \mathrm{ARHT\_robust}(s, m, \epsilon/3, B)$
	\IF {$f(x) > m + \epsilon/3$}
	\STATE $l \leftarrow m$
	\ELSE
	\STATE $b \leftarrow x$
	\STATE $r \leftarrow f(x)$
	\ENDIF
	\ENDWHILE
	\STATE {\bf return} $b$
\ENDFUNCTION
\end{algorithmic}
\end{algorithm}

In the following, we will let $\mathrm{opt}$ denote a guess on the target value $f(x^*)$. Also, $x^0$ will denote the initial solution, which is an $S^0$-restricted minimizer 
for an arbitrary 
set $S^0\subseteq [n]$ with $|S^0|=s$. In Algorithm~\ref{arls}, $S^0$ is defined explicitly as $[s]$, however in practice one might 
want to pick a better initial set (e.g. returned by running OMP).

We are now ready for stating the main result of this section.
\begin{theorem}
Given a function $f$ and an (unknown) $s^*$-sparse solution $x^*$, with probability at least $1 - \frac{1}{n}$ Algorithm~\ref{local_reg} returns an $s$-sparse solution
$x$ with $f(x) \leq f(x^*) + \epsilon$, as long as
$s \geq s^* \max\{4\widetilde{\kappa} + 7, 12\widetilde{\kappa} + 6\}$.
The number of iterations is 
$O\left(s\log^2 \frac{f(\vec{0})-B}{\epsilon}
	\log \left(n\log \frac{f(\vec{0})-B}{\epsilon}\right)\right)$
where $B = \underset{x}{\min}\ f(x)$.
\label{reg_theorem}
\end{theorem}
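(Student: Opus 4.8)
The plan is to analyze the algorithm from the inside out: first a single run of the core routine ARHT\_core (Algorithm~\ref{arls}) for a fixed guess $\mathrm{opt}\ge f(x^*)$, then the repetitions inside \texttt{ARHT\_robust} to boost its success probability, and finally the outer binary search in \texttt{ARHT} to remove the need to know $f(x^*)$. For the core routine, set the potential $\Phi_t := g_{R^t}(x^t)-\mathrm{opt}$. Until the routine returns we have $\min_{\mathrm{supp}(x)\subseteq S^t}f(x)>\mathrm{opt}$, so $\Phi_t>0$; moreover $\tfrac{\rho_2^+}{2}\|x^t_{R^t}\|_2^2=g_{R^t}(x^t)-f(x^t)<\Phi_t$ and $\Phi_0\le g_{R^0}(\vec 0)-\mathrm{opt}=f(\vec 0)-\mathrm{opt}\le f(\vec 0)-B$. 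Two easy facts drive the analysis: a regularization step cannot increase $\Phi$ (since $g_{R^{t+1}}(x^{t+1})=\min_{\mathrm{supp}(x)\subseteq S^{t+1}}g_{R^{t+1}}(x)\le g_{R^{t+1}}(x^t)\le g_{R^t}(x^t)$), and a progress step shrinks it geometrically, $\Phi_{t+1}\le(1-\tfrac1s)\Phi_t$, directly from its triggering condition together with $R^{t+1}=R^t$. So progress steps push $\Phi$ down and regularization steps merely ``waste'' iterations.

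The heart of the proof is a local-minimum characterization: if a regularization step is triggered at time $t$ and the routine has not returned, then $\Phi_t\lesssim\rho_2^+\sum_{i\in S^*\cap R^t}(x^t_i)^2$, and hence (using $\|x^t_{R^t}\|_2^2<2\Phi_t/\rho_2^+$) a constant fraction of the regularized mass lies on $S^*$, so the coordinate $i$ sampled proportionally to $(x^t_i)^2$ lies in $S^*$ with probability at least a universal constant $c>0$. To prove the inequality I would: (a) turn the failure of the progress step into a bound on the largest gradient — restricted smoothness of $g_{R^t}$ (whose level-$2$ constant is at most $2\rho_2^+$ by Lemma~\ref{rho1}) makes the inserted coordinate gain $\Omega\big((\nabla_i g_{R^t}(x^t))^2/\rho_2^+\big)$ while the removed min-magnitude coordinate loses $O(\rho_2^+(x^t_j)^2)\le O(\rho_2^+\|x^t\|_2^2/s)$, so the failure forces $\max_k(\nabla_k g_{R^t}(x^t))^2\lesssim(\rho_2^+)^2\|x^t\|_2^2/s+\rho_2^+\Phi_t/s$; (b) plug this into restricted strong convexity along the $(s+s^*)$-sparse direction $x^*-x^t$, using $\nabla_{S^t}g_{R^t}(x^t)=0$ so that only coordinates of $S^*\setminus S^t$ contribute, giving $g_{R^t}(x^t)-g_{R^t}(x^*)\le\sqrt{s^*}\,\max_k|\nabla_k g_{R^t}(x^t)|\cdot\|x^*_{S^*\setminus S^t}\|_2-\tfrac{\rho^-}{2}\|x^t-x^*\|_2^2$; (c) use $\mathrm{opt}\ge f(x^*)$, i.e. $g_{R^t}(x^*)-\mathrm{opt}\le\tfrac{\rho_2^+}{2}\|x^*_{R^t}\|_2^2$, together with $s\ge s^*\max\{4\widetilde{\kappa}+7,12\widetilde{\kappa}+6\}$, which makes the coefficients $\tfrac{s^*}{s}\widetilde{\kappa}$ and $\sqrt{s^*/s}$ small enough that, after AM--GM to absorb the $\|x^t-x^*\|_2^2$ and $\Phi_t$ contributions against the corresponding left-hand and negative terms, one is left with exactly $\Phi_t\lesssim\rho_2^+\sum_{i\in S^*\cap R^t}(x^t_i)^2$. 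Running the same computation with $S^*\cap R^t=\emptyset$ gives $\Phi_t\le 0$, a contradiction, so once $R^t$ is disjoint from $S^*$ every subsequent iteration is a progress step.

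To finish the core bound, let $N$ be the number of regularization steps until $s^*$ of them have removed a coordinate of $S^*$ (or $+\infty$ if the routine returns first). Since $|S^*\cap R^0|=s^*$ and each such hit decreases $|S^*\cap R^t|$ by one, on $\{N<\infty\}$ the set $R^t$ is disjoint from $S^*$ after the $N$-th regularization step and no further regularization step can occur, so the total number of regularization steps is at most $N$. By the characterization each regularization step hits $S^*$ with conditional probability at least $c$, so applying the martingale concentration bound of Lemma~\ref{martingale_concentration} to the indicator sequence gives $\Pr[N>Ks^*]\le$ const for a suitable constant $K$. On the complementary event there are at most $Ks^*$ regularization steps, hence at least $T-Ks^*\ge s\log\tfrac{\Phi_0}{\epsilon}$ progress steps (using $T=2s\log\tfrac{f(\vec 0)-B}{\epsilon}$, $\Phi_0\le f(\vec0)-B$, $s^*\le s$, and treating the degenerate case $\tfrac{f(\vec0)-B}{\epsilon}=O(1)$ — where returning $\vec 0$ already works — separately), so either the routine has returned (giving $f\le\mathrm{opt}$) or $\Phi_T\le\epsilon$ and $x^T$ satisfies $f(x^T)\le g_{R^T}(x^T)=\mathrm{opt}+\Phi_T\le\mathrm{opt}+\epsilon$. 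Thus ARHT\_core returns a solution of value $\le\mathrm{opt}+\epsilon$ with some universal constant probability $p$. Running it $5\log(6n\log\tfrac{f(\vec0)-B}{\epsilon})$ times and keeping the best (\texttt{ARHT\_robust}) amplifies this to failure probability at most $\tfrac{1}{6n\log(f(\vec0)-B)/\epsilon}$; in \texttt{ARHT}, the invariant $l\le f(x^*)$ holds up to these events because $l$ is raised to $m$ only when $f(x)>m+\epsilon/3$, which by the ARHT\_robust guarantee forces $m<f(x^*)$, and each binary-search step roughly halves $r-l$, so after $O(\log\tfrac{f(\vec0)-B}{\epsilon})$ steps $r-l\le\epsilon$ and the returned $b$ has $f(b)=r\le l+\epsilon\le f(x^*)+\epsilon$. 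A union bound over the $O(\log\tfrac{f(\vec0)-B}{\epsilon})$ outer steps keeps the total failure probability at most $1/n$, and multiplying the three loop lengths gives the stated iteration count $O\!\left(s\log^2\tfrac{f(\vec0)-B}{\epsilon}\,\log\!\left(n\log\tfrac{f(\vec0)-B}{\epsilon}\right)\right)$.

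The main obstacle is steps (b)--(c) of the local-minimum characterization: extracting, from the mere failure of one OMPR swap on the \emph{regularized} objective, the quantitative statement that a constant fraction of the regularized mass of the current iterate sits on the unknown target support. This is where the adaptive regularization is essential — it brings the effective condition number of $g_{R^t}$ down to $O(1)$ and thereby turns the OMPR-type requirement $s\gtrsim s^*\kappa^2$ into $s\gtrsim s^*\widetilde{\kappa}$ — and it is what forces the precise form $s\ge s^*\max\{4\widetilde{\kappa}+7,12\widetilde{\kappa}+6\}$: the two expressions are the thresholds making, respectively, the progress-versus-stuck dichotomy and the mass-concentration inequality valid with the same $s$, and the AM--GM constants must be balanced so that the residual error collapses to $\Theta(\Phi_t)$ rather than to a fixed multiple of $\rho_2^+\|x^*\|_2^2$.
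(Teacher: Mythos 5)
Your overall architecture is the same as the paper's (Type-1/progress vs.\ Type-2/regularization dichotomy, geometric decrease of $g_{R^t}(x^t)-\mathrm{opt}$, a martingale bound on the number of regularization steps, probability amplification in \texttt{ARHT\_robust}, and the binary-search invariant $l\le f(x^*)$), and those outer layers of your argument are sound. The genuine gap is in the local-minimum characterization, in two interlocking places. First, the conclusion you want --- ``a constant fraction of the regularized mass lies on $S^*$, so the sampled index hits $S^*$ with a \emph{universal} constant probability $c$, hence $O(s^*)$ regularization steps'' --- is stronger than what the argument can deliver. When you pass through restricted strong convexity toward $x^*$, the regularizer's gradient on $S^t\cap S^*$ enters through a completed square as $\frac{1}{2\rho^-}\left\Vert\nabla_{S^t\cap S^*}\Phi^t(x^t)\right\Vert_2^2=\frac{(\rho_2^+)^2}{2\rho^-}\left\Vert x^t_{R^t\cap S^*}\right\Vert_2^2$, i.e.\ the blocking power of mass on $S^*$ is amplified by $\widetilde{\kappa}$. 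What one actually obtains at a stuck iteration is $2(1+\widetilde{\kappa})\left\Vert x^t_{R^t\cap S^*}\right\Vert_2^2>\left\Vert x^t_{R^t\setminus S^*}\right\Vert_2^2$, so the hit probability is only $\Omega(1/\widetilde{\kappa})$ (the paper's $p=\frac{1}{2\widetilde{\kappa}+3}$), and the number of Type-2 steps must be budgeted as $\Theta(s^*\widetilde{\kappa})$, not $Ks^*$. This changes your accounting: with only ``$s^*\le s$'' the bound $T-Ks^*\widetilde{\kappa}\ge s\log\frac{\Phi_0}{\epsilon}$ does not follow; the hypothesis $s\ge s^*\max\{4\widetilde{\kappa}+7,12\widetilde{\kappa}+6\}$ is exactly what absorbs the $\Theta(s^*\widetilde{\kappa})$ wasted iterations (the paper takes $b=(s^*-1)(4\widetilde{\kappa}+6)$ so that $bp=2(s^*-1)$ in the martingale step).

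Second, your step (a) bounds the removal loss by $\rho_2^+(x^t_j)^2\le\rho_2^+\left\Vert x^t\right\Vert_2^2/s$, but $\left\Vert x^t\right\Vert_2^2$ includes the \emph{unregularized} coordinates, which are precisely the ones expected to be large and which are not controlled by $\Phi_t$ (only $\frac{\rho_2^+}{2}\left\Vert x^t_{R^t}\right\Vert_2^2<\Phi_t$ is available); this term has no counterpart to absorb it. The paper avoids this by maintaining Invariant~\ref{central_inv}, $|R^t\cap S^t|\ge s^*\max\{1,8\widetilde{\kappa}\}$, which gives $(x^t_j)^2\le\left\Vert x^t_{(R^t\cap S^t)\setminus S^*}\right\Vert_2^2/|(R^t\cap S^t)\setminus S^*|\le\frac{\rho^-}{8\rho^+|S^*\setminus S^t|}\left\Vert x^t_{R^t\setminus S^*}\right\Vert_2^2$, a quantity that \emph{can} be absorbed into the helpful term $\rho^+\left\Vert x^t_{R^t\setminus S^*}\right\Vert_2^2$ coming from $\langle\nabla_{S^t\setminus S^*}\Phi^t(x^t),x^t_{S^t\setminus S^*}\rangle$. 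Maintaining that invariant is itself a small bootstrapping argument: it holds as long as the number of Type-2 steps so far is below the $O(s^*\widetilde{\kappa})$ budget, which in turn is guaranteed (with constant probability) only while the invariant holds, and both sides lean on the sparsity hypothesis. Your proposal omits this bookkeeping entirely, and without it the local-minimum characterization --- the heart of the proof --- does not go through.
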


The following corollary that bounds the total runtime can be immediately extracted. Note that in practice the total runtime heavily depends on the choice of $f$,
and it can often be improved for various special cases (e.g. linear regression).
\begin{corollary}[Theorem~\ref{reg_theorem} runtime]
If we denote by $G$ the time needed to compute $\nabla f$ and by $M$ the time to minimize $f$ in a restricted subset of $[n]$ of size $s$,
the total runtime of Algorithm~\ref{local_reg} is
$O\left((G+M) s \log^2 \frac{f(\vec{0})-B}{\epsilon}
	\log \left(n\log \frac{f(\vec{0})-B}{\epsilon}\right)\right)$. If gradient descent is used for the implementation of the inner optimization problem,
	then $M = O\left(G \t{\kappa} \log \frac{f(\vec{0})-B}{\epsilon}\right)$ and so 
	the total runtime can be bounded by
$O\left(G s \t{\kappa} \log^3 \frac{f(\vec{0})-B}{\epsilon}
\log \left(n\log \frac{f(\vec{0})-B}{\epsilon}\right)\right)$. 
\end{corollary}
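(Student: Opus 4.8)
The plan is to obtain the runtime bound purely by \emph{accounting}: multiply the iteration count from Theorem~\ref{reg_theorem} by the cost of a single iteration, then substitute the gradient-descent cost for the inner minimization. First I would read off from Theorem~\ref{reg_theorem} that the total number of iterations (across all calls to \texttt{ARHT\_core}, hence across \texttt{ARHT\_robust} and the binary-search loop of \texttt{ARHT}) is $O\!\left(s\log^2 \frac{f(\vec0)-B}{\epsilon}\,\log\!\left(n\log\frac{f(\vec0)-B}{\epsilon}\right)\right)$. Next I would bound the work done in one iteration of \texttt{ARHT\_core} (Algorithm~\ref{arls}): computing $i\leftarrow\mathrm{argmax}_i|\nabla_i g_{R^t}(x^t)|$ costs one gradient evaluation plus $O(n)$ arithmetic (note $\nabla g_R(x)=\nabla f(x)+\rho_2^+ x_R$, so it is $G+O(n)$); finding $j\leftarrow\mathrm{argmin}_{j\in S^t}|x_j|$ and updating $S^{t+1}$ costs $O(s)$; the sampling step ``sample $i\in R^t$ proportional to $(x_i^t)^2$'' costs $O(n)$; and the dominant cost is the restricted minimization $x^{t+1}\leftarrow\mathrm{argmin}_{\mathrm{supp}(x)\subseteq S^{t+1}} g_{R^t}(x)$, together with the feasibility check $\min_{\mathrm{supp}(x)\subseteq S^t} f(x)\le\mathrm{opt}$, each of which is one ``minimize $f$ (or $g_{R^t}$) in a restricted subset of size $s$'', i.e. cost $M$. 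Since $G\ge n$ in any reasonable model (reading the gradient already touches $n$ coordinates, or one can just fold the $O(n)$ terms into $G$), the per-iteration cost is $O(G+M)$. Multiplying gives the first claimed bound $O\!\left((G+M)\,s\log^2\frac{f(\vec0)-B}{\epsilon}\log\!\left(n\log\frac{f(\vec0)-B}{\epsilon}\right)\right)$.

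For the second bound I would instantiate $M$ by analyzing gradient descent on the inner problem. The function being minimized in each iteration is $g_R$ restricted to a fixed $s$-dimensional coordinate subspace. On that subspace $g_R=f+\frac{\rho_2^+}{2}\|x_R\|_2^2$ has smoothness at most $\rho_s^+ + \rho_2^+ \le 2\rho_2^+\cdot O(1)$ in the worst relevant regime and, crucially, strong convexity at least $\rho_s^-$ (the added regularizer only helps, and $\rho_s^-\ge\rho_{s+s^*}^-=\rho^-$), so its condition number on the subspace is $O(\rho_2^+/\rho^-)=O(\t\kappa)$. Standard convergence of gradient descent for a $\mu$-strongly-convex, $L$-smooth function gives linear convergence with rate $1-\mu/L$, so $O(\t\kappa\log(1/\epsilon'))$ iterations suffice to reach error $\epsilon'$, each iteration costing one gradient evaluation, i.e. $O(G)$. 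Taking $\epsilon'$ polynomially related to the target accuracy $\epsilon$ (the errors incurred by approximate minimization only perturb the $g_{R^t}$-decrease test in Algorithm~\ref{arls} and the binary-search thresholds by lower-order amounts, absorbed by constants), this yields $M=O\!\left(G\,\t\kappa\log\frac{f(\vec0)-B}{\epsilon}\right)$. Substituting into the first bound, and noting $G+M=O(M)$ since $\t\kappa\log(\cdot)\ge 1$, gives the final runtime $O\!\left(G\,s\,\t\kappa\log^3\frac{f(\vec0)-B}{\epsilon}\log\!\left(n\log\frac{f(\vec0)-B}{\epsilon}\right)\right)$.

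The steps I expect to need the most care are: (i) justifying that the restricted inner minimization really does have condition number $O(\t\kappa)$ rather than $O(\kappa)$ — this is exactly the point of the regularizer and follows because the RSS constant of $g_R$ on an $s$-sparse support is controlled by $\rho_2^+$ (via Lemma~\ref{rho1}-type reasoning plus the explicit $+\frac{\rho_2^+}{2}\|x_R\|^2$ term) while the RSC constant is controlled from below by $\rho_{s+s^*}^-=\rho^-$; and (ii) confirming that running the inner solver to only \emph{approximate} optimality does not break the guarantees of Theorem~\ref{reg_theorem}. For (ii) I would note that every place Algorithm~\ref{arls} uses an $\mathrm{argmin}$, the downstream argument only needs the achieved value to be within an additive $\mathrm{poly}(\epsilon/s)$ of the true minimum (so that the test $g_{R^t}(x^t)-g_{R^t}(x^{t+1})<\tfrac1s(g_{R^t}(x^t)-\mathrm{opt})$ and the binary-search comparison $f(x)>m+\epsilon/3$ are decided correctly up to a constant-factor rescaling of $\epsilon$), and driving gradient descent to that accuracy costs only an extra $\log\frac1\epsilon$ factor already accounted for. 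Everything else is bookkeeping of $O(n)$ vector operations against the gradient-evaluation cost $G$.
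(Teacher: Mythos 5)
Your proposal is correct and follows essentially the same route as the paper, which treats this corollary as immediate bookkeeping: multiply the iteration count established in the proof of Theorem~\ref{reg_theorem} (each iteration requiring a constant number of restricted minimizations) by the per-iteration cost $O(G+M)$, and then instantiate $M$ via linearly convergent gradient descent on the well-conditioned inner problem. You in fact supply more detail than the paper does (the tolerance analysis for approximate inner minimization and the conditioning of $g_R$ on the restricted subspace), and both of these are the right points to be careful about.
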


Before proving the above theorem, we provide the main components that are needed for its proof.
It is important to split the iterations of Algorithm~\ref{arls} into two categories:
Those that make enough progress, i.e. for which the condition in Line 19 of Algorithm~\ref{arls} is \emph{false},
and those that don't, i.e. for which the condition in Line 19 is \emph{true}.
We call the former \emph{Type 1} iterations and the latter \emph{Type 2} iterations.
Intuitively, Type 1 iterations signify that $g(x)$ is decreasing at a sufficient rate to achieve the desired
convergence, while Type 2 iterations indicate a local minimum that should be dealt with.
Our argument consists of two steps: Showing that as long as there are enough Type 1 iterations, a desired solution
will be obtained (Lemma~\ref{recurrence}), and bounding the total number of Type 2 iterations with constant probability (Lemma~\ref{type2_lemma}).

\begin{lemma}[Convergence rate]
If Algorithm~\ref{arls} executes at least $T_1 = s\log\frac{g(x^0)-f(x^*)}{\epsilon}$ Type 1 iterations, then
$f(x^T) \leq f(x^*) + \epsilon$.
\label{recurrence}
\end{lemma}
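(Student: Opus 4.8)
The plan is to track the quantity $g_{R^t}(x^t) - \mathrm{opt}$ across Type~1 iterations and show it contracts by a factor of $(1 - \tfrac{1}{s})$ each time, using the definition of a Type~1 iteration in Line~19 of Algorithm~\ref{arls}. Concretely, on a Type~1 iteration we have by definition
\[
g_{R^t}(x^t) - g_{R^t}(x^{t+1}) \geq \tfrac{1}{s}\bigl(g_{R^t}(x^t) - \mathrm{opt}\bigr),
\]
so rearranging gives $g_{R^t}(x^{t+1}) - \mathrm{opt} \leq \bigl(1 - \tfrac{1}{s}\bigr)\bigl(g_{R^t}(x^t) - \mathrm{opt}\bigr)$. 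I need to be careful that the regularization set can change: on a Type~1 iteration $R^{t+1} = R^t$, so $g_{R^{t+1}}(x^{t+1}) = g_{R^t}(x^{t+1})$, and the contraction passes cleanly to the next step's potential. On a Type~2 iteration the value $g(x^t)$ does not increase either — removing a coordinate from $R$ only decreases the regularizer $\frac{\rho_2^+}{2}\|x_R\|_2^2$, and then we re-minimize over the same support, so $g_{R^{t+1}}(x^{t+1}) \leq g_{R^t}(x^t)$. Hence Type~2 iterations never undo progress, and after $T_1$ Type~1 iterations (interspersed with any number of Type~2 iterations) the potential has dropped by at least a factor $(1-\tfrac1s)^{T_1}$.

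Next I would choose the guess so that $\mathrm{opt} \leq f(x^*)$ — this is the regime that matters, since the binary search in Algorithm~\ref{local_reg} drives $\mathrm{opt}$ down to (a value at most) $f(x^*)$, and if $\mathrm{opt}$ overshoots the algorithm returns early via the check in Line~12. Assuming $\mathrm{opt} \leq f(x^*)$, starting from $g_{R^0}(x^0) - \mathrm{opt} \leq g(x^0) - f(x^*)$ (here $g = g_{[n]}$ since $R^0 = [n]$), after $T_1 = s\log\frac{g(x^0)-f(x^*)}{\epsilon}$ Type~1 iterations we get
\[
g_{R^T}(x^T) - \mathrm{opt} \leq \Bigl(1 - \tfrac1s\Bigr)^{T_1}\bigl(g(x^0) - f(x^*)\bigr) \leq e^{-T_1/s}\bigl(g(x^0)-f(x^*)\bigr) = \epsilon,
\]
using $1 - u \leq e^{-u}$. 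Since the regularizer is nonnegative, $f(x^T) \leq g_{R^T}(x^T) \leq \mathrm{opt} + \epsilon \leq f(x^*) + \epsilon$, which is the claim. (If instead the early-exit in Line~12 fires at some $t < T$, then $\min_{\mathrm{supp}(x)\subseteq S^t} f(x) \leq \mathrm{opt} \leq f(x^*)$ and the returned solution already satisfies the bound without the $\epsilon$ slack.)

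The main obstacle — and the reason the lemma is stated purely about Type~1 iterations — is making sure that Type~2 iterations are genuinely harmless to the potential, in particular that the re-optimization $x^{t+1} \leftarrow \operatorname{argmin}_{\mathrm{supp}(x)\subseteq S^{t+1}} g_{R^{t+1}}(x)$ with the shrunken $R^{t+1}$ cannot increase $g$ relative to $g_{R^t}(x^t)$. This follows from the monotonicity $g_{R'}(x) \leq g_R(x)$ for $R' \subseteq R$ together with $S^{t+1} = S^t$ on Type~2 steps, so $g_{R^{t+1}}(x^{t+1}) \leq g_{R^{t+1}}(x^t) \leq g_{R^t}(x^t)$ — but it must be invoked explicitly, and one must also confirm the potential comparison is across the \emph{correct} pair of regularization sets at each boundary between iteration types. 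A secondary subtlety is the bookkeeping showing the claimed $T_1$ suffices even though the $T_1$ Type~1 iterations are not consecutive; this is handled by the observation above that the between-steps (Type~2) only ever leave $g$ unchanged or smaller, so the product of contraction factors telescopes regardless of ordering.
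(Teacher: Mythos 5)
Your overall structure matches the paper's proof: a per-iteration geometric contraction extracted from the Type-1 condition, the observation that Type-2 iterations cannot increase $g$ (shrinking $R$ only decreases the regularizer, and one then re-minimizes over the same support), telescoping across interleaved iterations, and finally $f(x^T)\le g_{R^T}(x^T)$. The genuine flaw is in which potential you contract and against which baseline. You track $g_{R^t}(x^t)-\mathrm{opt}$ but size the budget $T_1=s\log\frac{g(x^0)-f(x^*)}{\epsilon}$ against $f(x^*)$, and the two halves of your argument require opposite relations between $\mathrm{opt}$ and $f(x^*)$: the starting inequality $g_{R^0}(x^0)-\mathrm{opt}\le g(x^0)-f(x^*)$ holds only if $\mathrm{opt}\ge f(x^*)$, while your final step $\mathrm{opt}+\epsilon\le f(x^*)+\epsilon$ is exactly the assumption $\mathrm{opt}\le f(x^*)$ that you declared. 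Under your own assumption the starting inequality is backwards ($g(x^0)-\mathrm{opt}\ge g(x^0)-f(x^*)$), so with the stated $T_1$ you only obtain $g_{R^T}(x^T)-\mathrm{opt}\le\epsilon\cdot\frac{g(x^0)-\mathrm{opt}}{g(x^0)-f(x^*)}$, which can far exceed $\epsilon$ (e.g.\ when $\mathrm{opt}$ is close to the global minimum $B$); as written, your chain is valid only in the degenerate case $\mathrm{opt}=f(x^*)$.

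The repair is the paper's route: contract the potential $g_{R^t}(x^t)-f(x^*)$ itself. On a Type-1 iteration the progress is at least $\frac{1}{s}\left(g_{R^t}(x^t)-\mathrm{opt}\right)\ge\frac{1}{s}\left(g_{R^t}(x^t)-f(x^*)\right)$ — this is where the relation between $\mathrm{opt}$ and $f(x^*)$ enters with the correct sign — hence $g_{R^{t+1}}(x^{t+1})-f(x^*)\le\left(1-\frac{1}{s}\right)\left(g_{R^t}(x^t)-f(x^*)\right)$; Type-2 iterations leave this potential non-increasing, and $T_1$ Type-1 iterations give $f(x^T)-f(x^*)\le g_{R^T}(x^T)-f(x^*)\le e^{-T_1/s}\left(g(x^0)-f(x^*)\right)\le\epsilon$ directly, with no appeal to the binary search or the Line 12 early exit. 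Note also that in the paper's actual downstream use (Lemma~\ref{reg_prob_lemma}) the regime is $\mathrm{opt}\ge f(x^*)$, the opposite of the one you chose; there your potential $g-\mathrm{opt}$ would contract correctly but would only yield $f(x^T)\le\mathrm{opt}+\epsilon$, not the lemma's stated $f(x^*)+\epsilon$.
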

The proof of this lemma can be found in Appendix~\ref{proof_lemma_recurrence}.

\begin{lemma}[Bounding Type 2 iterations]
If $s \geq s^* \max\{4\widetilde{\kappa} + 7, 12\widetilde{\kappa} + 6\}$
and $\mathrm{opt} \geq f(x^*)$, then with probability at least $0.2$
the number of Type 2 iterations is at most $(s^*-1)(4\t{\kappa}+6)$.
\label{type2_lemma}
\end{lemma}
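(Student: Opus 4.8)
The plan is to track the random set $R^t$ of regularized coordinates and argue that with constant probability the target support $S^*$ stays (essentially) disjoint from the complement of $R^t$, which forces Type~2 iterations to be rare. Recall that in a Type~2 iteration one coordinate $i\in R^t$ is removed from the regularized set, sampled with probability proportional to $(x_i^t)^2$. The key structural fact to establish first is a characterization of a Type~2 local minimum: when Line~19 fires, $x^t$ is (up to the $1/s$ slack) an $S^t$-restricted minimizer of $g_{R^t}$ whose one-swap neighborhood cannot improve $g_{R^t}$ by the required amount, and yet $f(x^t) > \mathrm{opt} \geq f(x^*)$. Combining the smoothness-based insertion bound $\frac{(\nabla_i g_{R^t}(x^t))^2}{2\rho_2^+}$ and the (now tight, because the condition number of $g$ is $O(1)$) removal bound $\Theta(\rho_2^+ (x_j^t)^2)$ with the gap $g_{R^t}(x^t) - \mathrm{opt}$, one derives that the $\ell_2^2$ mass of $x^t$ concentrated on $R^t \setminus S^*$ must be a constant fraction of the total ``potential'' — otherwise a beneficial swap toward $S^*$ would exist, contradicting the Type~2 condition. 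This means that conditioned on being in a Type~2 iteration, the coordinate sampled to be de-regularized lands in $R^t \cap S^*$ only with probability $O(s^*/s)$ or so, so \emph{most} Type~2 steps remove a coordinate \emph{outside} $S^*$, i.e. a ``harmless'' step. The hard part of the argument is making this concentration-of-$\ell_2^2$-mass claim quantitative with the exact constants $4\t\kappa+6$ etc.; this is where the careful bookkeeping between $\rho_2^+$, $\rho^-$, $\t\kappa$, and the $1/s$ slack in Line~19 has to be done.

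Next I would set up a potential/martingale argument over the sequence of Type~2 iterations. Let the random variable track something like the number of elements of $S^*$ that have been de-regularized so far, or more refined, $\|x^*_{R^t \cap S^*}\|$-type quantity, or simply a count that increases by $1$ whenever a ``bad'' de-regularization (one hitting $S^*$) occurs and stays put otherwise. From the previous paragraph, in each Type~2 step the probability of a bad event is at most some $p = O(s^*/s)$, uniformly over the history; so after $N$ Type~2 iterations the number of bad de-regularizations is stochastically dominated by $\mathrm{Bin}(N,p)$ and is $O(Np)$ with constant probability (this is exactly the regime where Lemma~\ref{martingale_concentration} is invoked — set $Y_k$ to be the centered count, with per-step variance $\leq p$ and bounded increments $M=1$). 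The point is then that once $\Theta(s^*)$ elements of $S^*$ have been de-regularized, the algorithm has effectively found the ``right'' regularization pattern on $S^*$ (the regularization is no longer applied to the bad coordinates), at which point the earlier analysis — the same one behind Lemma~\ref{recurrence} and the overview's claim that not regularizing $S^*$ yields sparsity $O(s^*\t\kappa)$ — guarantees that no further Type~2 iteration can occur, because a Type~1 swap is always available.

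The remaining step is to turn ``$O(Np)$ bad de-regularizations among $N$ Type~2 steps'' into the stated absolute bound $N \le (s^*-1)(4\t\kappa + 6)$ on the total number of Type~2 iterations. Each Type~2 step that is \emph{not} bad removes one coordinate from $R^t \setminus S^*$; but $R^t \setminus S^*$ only has $n - s^*$ elements, so one needs the sharper observation that only the de-regularized coordinates that currently carry nonzero mass in $x^t$ matter, and these live in $S^t$, which has size $s$; so there is a budget of at most $s - |S^t \cap S^*|$ ``useful'' harmless removals before every mass-carrying non-target coordinate has been de-regularized, and then, again, the Type~1 improvement becomes forced. Balancing $s \ge s^*\max\{4\t\kappa+7,\,12\t\kappa+6\}$ against $p=O(s^*/s)$ so that $Np < 1$ would be too strong; instead the two thresholds in the hypothesis are precisely what is needed so that (i) $p$ is small enough that with probability $\ge 0.2$ we get \emph{at most} $s^*-1$ bad de-regularizations total (using the martingale tail with $\lambda$ chosen around $s^*$), and (ii) the number of harmless Type~2 steps between consecutive bad ones, or before the first forced Type~1 step, is at most $4\t\kappa+6$ — this last per-epoch bound coming from the same smoothness/strong-convexity swap inequality as in the local-minimum characterization. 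I expect step (i)–(ii) balancing, and in particular pinning down which of the two expressions $4\t\kappa+7$ vs. $12\t\kappa+6$ controls which part, to be the most delicate bookkeeping; the probabilistic core is a routine application of Lemma~\ref{martingale_concentration}.
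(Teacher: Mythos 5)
Your plan has the mechanism of the algorithm inverted, and this breaks the argument at its core. The paper's proof shows that a Type~2 iteration forces a \emph{large} fraction of the regularized $\ell_2^2$ mass to sit on the target support: combining the progress lemma (Lemma~\ref{progress_lemma}) with the Type~2 condition and the invariant that enough regularized elements remain in $S^t$ (namely $|R^t\cap S^t|\geq s^*\max\{1,8\widetilde{\kappa}\}$, which is exactly what the hypothesis $s\geq s^*\max\{4\widetilde{\kappa}+7,12\widetilde{\kappa}+6\}$ buys for the first $(s^*-1)(4\widetilde{\kappa}+6)$ Type~2 steps, and which your plan never establishes), one gets $2(1+\widetilde{\kappa})\left\Vert x^t_{R^t\cap S^*}\right\Vert_2^2 > \left\Vert x^t_{R^t\setminus S^*}\right\Vert_2^2$, hence the sampled coordinate lands in $S^*$ with probability at least $p=\frac{1}{2\widetilde{\kappa}+3}$. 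You claim the opposite: that the sample hits $S^*$ only with probability $O(s^*/s)$ and that ``most Type~2 steps remove a coordinate outside $S^*$, i.e.\ a harmless step,'' and correspondingly that one should keep $S^*$ regularized. Unregularizing an element of $S^*$ is the \emph{good} event here (it is what removes the penalty on the target solution and lets the algorithm escape the local minimum); unregularizing a non-target coordinate is the wasted step, and it does nothing to ``force'' future Type~1 progress, so your claim that harmless removals eventually make a Type~1 swap available is unsupported.

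Because of this inversion, your counting cannot reach the stated bound. Your budget for harmless removals is the number of mass-carrying de-regularizable coordinates, which is of order $s$ (or $n$), and $s\geq s^*(4\widetilde{\kappa}+7)$ already exceeds $(s^*-1)(4\widetilde{\kappa}+6)$, so ``at most $s^*-1$ bad de-regularizations plus a per-epoch bound of $4\widetilde{\kappa}+6$ harmless ones'' has no route to the claimed total; the purported per-epoch bound also has no derivation. The correct use of Lemma~\ref{martingale_concentration} is in the other direction as well: one defines indicator-type variables for hitting $S^*$ in each of the first $b=(s^*-1)(4\widetilde{\kappa}+6)$ Type~2 steps, uses the per-step lower bound $q_k\geq p$ (so $bp=2(s^*-1)$), and applies the lower-tail martingale inequality to show $\Pr\left[\sum_k X_k\leq s^*-1\right]<0.8$; since each element of $S^*$ can be unregularized only once and no Type~2 step is possible once $R^t\cap S^*=\emptyset$, with probability at least $0.2$ the Type~2 steps stop within $b$ iterations. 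So the correct ingredients you do identify (the swap/progress inequality, the ``each element unregularized at most once'' budget, the martingale lemma) are assembled around a quantitative claim pointing the wrong way, and the proof does not go through as proposed.
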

The proof of this lemma appears in Section~\ref{sec:type2_lemma}. 
These lemmas can now be
directly used to obtain the following lemma, which states the performance guarantee of the ARHT core routine (Algorithm~\ref{arls}).

\begin{lemma}[Algorithm~\ref{arls} guarantee]
If $s \geq s^* \max\{4\widetilde{\kappa} + 7, 12\widetilde{\kappa} + 6\}$
and $\mathrm{opt} \geq f(x^*)$, 
with probability at least $0.2$
$\mathrm{ARHT\_core}(s,\mathrm{opt},\epsilon)$ returns
an $s$-sparse solution $x$ such that $f(x) \leq \mathrm{opt} + \epsilon$.
\label{reg_prob_lemma}
\end{lemma}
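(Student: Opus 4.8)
The plan is to combine Lemma~\ref{recurrence} (convergence rate) and Lemma~\ref{type2_lemma} (bounding Type 2 iterations) in a straightforward counting argument. Fix a run of $\mathrm{ARHT\_core}(s,\mathrm{opt},\epsilon)$ and recall that it executes exactly $T = 2s\log\frac{f(\vec 0)-\min_x f(x)}{\epsilon}$ iterations, each of which is either Type 1 or Type 2. We condition on the (probability $\geq 0.2$) event of Lemma~\ref{type2_lemma} that the number of Type 2 iterations is at most $(s^*-1)(4\widetilde\kappa+6)$; here we use the hypotheses $s \geq s^*\max\{4\widetilde\kappa+7, 12\widetilde\kappa+6\}$ and $\mathrm{opt}\geq f(x^*)$. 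On this event, the number of Type 1 iterations is at least
\begin{align*}
T - (s^*-1)(4\widetilde\kappa+6).
\end{align*}

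Next I would check that this lower bound on the Type 1 count is at least the threshold $T_1 = s\log\frac{g(x^0)-f(x^*)}{\epsilon}$ required by Lemma~\ref{recurrence}. Since $g(x^0) = g_{R^0}(x^0) \leq g_{R^0}(\vec 0) = f(\vec 0)$ (as $x^0$ minimizes $g_{R^0}$ over its support and $\vec 0$ is in that support, and the regularizer vanishes at $\vec 0$), and $f(x^*)\geq \min_x f(x)$, we have $T_1 \leq s\log\frac{f(\vec 0)-\min_x f(x)}{\epsilon} = T/2$. It then suffices to observe that $(s^*-1)(4\widetilde\kappa+6) \leq s \leq T/2$ using the sparsity hypothesis $s\geq s^*(4\widetilde\kappa+7) > (s^*-1)(4\widetilde\kappa+6)$, together with the harmless assumption that $T/2 = s\log\frac{f(\vec 0)-\min_x f(x)}{\epsilon} \geq s$ (i.e. the accuracy $\epsilon$ is at most a constant fraction of $f(\vec 0)-\min_x f(x)$; otherwise the initial point already suffices). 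Hence the number of Type 1 iterations is at least $T - T/2 = T/2 \geq T_1$.

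With enough Type 1 iterations guaranteed, Lemma~\ref{recurrence} yields $f(x^T) \leq f(x^*) + \epsilon \leq \mathrm{opt} + \epsilon$ on the conditioned event. One more subtlety: Algorithm~\ref{arls} may also return early, from the check in Line 12, with $\underset{\mathrm{supp}(x)\subseteq S^t}{\mathrm{argmin}}\, f(x)$ whenever that restricted minimum is $\leq \mathrm{opt}$; in that case the returned $x$ trivially satisfies $f(x) \leq \mathrm{opt} \leq \mathrm{opt}+\epsilon$. Either way the returned solution is $s$-sparse (its support is always a size-$s$ set $S^t$) and has value at most $\mathrm{opt}+\epsilon$, which is exactly the claim, holding with probability at least $0.2$.

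I expect no serious obstacle here — this is essentially bookkeeping gluing the two main lemmas together. The only thing that needs genuine care is the chain of inequalities ensuring $T - (\text{Type 2 count}) \geq T_1$, in particular the bound $g(x^0)\leq f(\vec 0)$ and making the constants line up so that the sparsity hypothesis $s\geq s^*\max\{4\widetilde\kappa+7,12\widetilde\kappa+6\}$ is exactly what is needed; one must also handle the degenerate small-$T$ regime cleanly, but that is the standard ``$\epsilon$ comparable to the initial gap'' escape. All the real work is deferred to Lemma~\ref{recurrence} and Lemma~\ref{type2_lemma}, which we are entitled to assume.
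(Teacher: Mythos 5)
Your proposal is correct and follows essentially the same route as the paper: condition on the event of Lemma~\ref{type2_lemma}, count that the remaining iterations give at least $s\log\frac{g(x^0)-f(x^*)}{\epsilon}$ Type 1 iterations (using $g^0(x^0)\leq f(\vec 0)$ and $f(x^*)\geq B$), and invoke Lemma~\ref{recurrence} together with $\mathrm{opt}\geq f(x^*)$. Your extra bookkeeping — the Line 12 early-return case and the degenerate regime where $\log\frac{f(\vec 0)-B}{\epsilon}<1$ — is sound and in fact slightly more careful than the paper, which passes over these points implicitly.
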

\begin{proof}
By Lemma~\ref{type2_lemma}, with probability at least $0.2$
there will be at most $(s^*-1)(4\t{\kappa} + 6)$ Type 2 iterations.
This means that the number of Type 1 iterations is at least
\[ T - (s^*-1)(4\t{\kappa}+6) \geq 
s \log \frac{f(\vec{0})-B}{\epsilon}
\geq 
s \log \frac{g^0(x^0)-f(x^*)}{\epsilon} \]
where $B = \underset{x}{\min}\, f(x)$, and the latter inequality follows from the fact that $f(\vec{0}) = g^0(\vec{0}) \geq g^0(x^0)$
and $f(x^*) \geq B$.
Lemma~\ref{recurrence} then implies that 
$f(x^T) \leq f(x^*) + \epsilon$.
\end{proof}
In other words, as long as $\mathrm{opt}\geq f(x^*)$, a solution of value $\leq \mathrm{opt}+\epsilon$ will be found. As the value $\mathrm{opt}$ is not known a priori, we perform binary search on it, as described in Algorithm~\ref{local_reg}. Furthermore, the probability of success in the previous lemma can be boosted by repeating multiple times.
Combining these arguments will lead us to the proof of Theorem~\ref{reg_theorem}.
First, we turn the result of Lemma~\ref{reg_prob_lemma} into a high probability result by 
repeating multiple times:
\begin{lemma}
If $s \geq s^* \max\{4\widetilde{\kappa} + 7, 12\widetilde{\kappa} + 6\}$
and $\mathrm{opt} \geq f(x^*)$, 
$\mathrm{ARHT\_robust}(s, \mathrm{opt}, \epsilon, B)$ returns
an $s$-sparse solution $x$ such that $f(x) \leq \mathrm{opt} + \epsilon$
with probability at least $1-\frac{1}{6n\log \frac{f(\vec{0}) - B}{\epsilon}}$.
\label{reg_phase_lemma}
\end{lemma}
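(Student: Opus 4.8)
The plan is to apply a standard probability amplification argument. By Lemma~\ref{reg_prob_lemma}, each call to $\mathrm{ARHT\_core}(s,\mathrm{opt},\epsilon)$ succeeds — i.e. returns an $s$-sparse $x$ with $f(x) \leq \mathrm{opt}+\epsilon$ — with probability at least $0.2$, and these calls are independent. The routine $\mathrm{ARHT\_robust}$ simply runs $\mathrm{ARHT\_core}$ a total of $N := 5\log\bigl(6n\log\frac{f(\vec 0) - B}{\epsilon}\bigr)$ times and keeps the best solution found (the one with smallest $f$-value), so $\mathrm{ARHT\_robust}$ fails only if \emph{every} one of the $N$ independent calls fails. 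Hence the failure probability is at most $(1-0.2)^N = (0.8)^N$.

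First I would verify the arithmetic $(0.8)^N \leq \frac{1}{6n\log\frac{f(\vec 0) - B}{\epsilon}}$. Taking logarithms, this is equivalent to $N \log(1/0.8) \geq \log\bigl(6n\log\frac{f(\vec 0)-B}{\epsilon}\bigr)$, i.e. $N \geq \frac{1}{\log(1/0.8)}\log\bigl(6n\log\frac{f(\vec 0)-B}{\epsilon}\bigr)$. Since $\log(1/0.8) = \log(5/4) > 1/5$ (indeed $\ln(5/4)\approx 0.223 > 0.2$, and the same holds for logs in any base after the constant is absorbed appropriately — here I would be careful to fix the log base consistently with the rest of the paper, most naturally the natural log), the choice $N = 5\log\bigl(6n\log\frac{f(\vec 0)-B}{\epsilon}\bigr)$ indeed suffices. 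A minor technical point is that $N$ as written need not be an integer, so strictly one takes $\lceil N\rceil$ iterations, which only helps; I would note this in passing.

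The remaining step is to confirm that, conditioned on at least one call succeeding, the returned $x^{\mathrm{ret}}$ inherits the guarantee: since $\mathrm{ARHT\_robust}$ returns the iterate minimizing $f$ among all those produced (and $x^{\mathrm{ret}}$ is initialized to $\vec 0$, which is $s$-sparse), if some call returns $x$ with $f(x)\leq \mathrm{opt}+\epsilon$ then $f(x^{\mathrm{ret}}) \leq f(x) \leq \mathrm{opt}+\epsilon$, and $x^{\mathrm{ret}}$ is $s$-sparse because every candidate (including $\vec 0$) is. This is immediate from the structure of Algorithm~\ref{local_reg}, so there is essentially no obstacle here; the only thing requiring any care is the logarithm-base bookkeeping in the numerical inequality above, which is routine. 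I do not foresee a genuine difficulty in this lemma — it is purely the amplification wrapper around Lemma~\ref{reg_prob_lemma}.
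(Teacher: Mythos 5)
Your argument is exactly the paper's proof: invoke Lemma~\ref{reg_prob_lemma} for a per-call failure probability of at most $0.8$, use independence of the $5\log\bigl(6n\log\frac{f(\vec{0})-B}{\epsilon}\bigr)$ repetitions, and note that returning the best iterate preserves the guarantee, giving $(0.8)^{5\log(6n\log\frac{f(\vec{0})-B}{\epsilon})} < \frac{1}{6n\log\frac{f(\vec{0})-B}{\epsilon}}$. Your extra remarks on log-base bookkeeping and rounding $N$ up are fine but not needed beyond what the paper does.
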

\begin{proof}
From Lemma~\ref{reg_prob_lemma}, the probability that a given call to $\mathrm{ARHT\_core}$ fails is at most $0.8$.
Since this random experiment is executed $5\log\left(6n\log \frac{f(\vec{0})-B}{\epsilon}\right)$ times independently, the probability that it never succeeds is
at most $\left(0.8\right)^{5\log\left(6n\log \frac{f(\vec{0})-B}{\epsilon}\right)} < \frac{1}{6n\log \frac{f(\vec{0})-B}{\epsilon}}$, therefore the statement follows.
\end{proof}

\begin{lemma}
If $s \geq s^* \max\{4\t{\kappa} + 7, 12\t{\kappa} + 6\}$,
$\mathrm{ARHT}(s,\epsilon)$ (in Algorithm~\ref{local_reg}) 
returns an $s$-sparse solution $x$
such that $f(x) \leq f(x^*) + \epsilon$.
The algorithm succeeds with probability at least $1 - \frac{1}{n}$.
and the number of calls to $\mathrm{ARHT\_robust}$ is $\leq 6\log \frac{f(\vec{0})-B}{\epsilon}$.
\label{reg_full_lemma}
\end{lemma}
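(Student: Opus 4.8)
The plan is to combine the per-phase guarantee of Lemma~\ref{reg_phase_lemma} with a standard binary search argument on the guess $\mathrm{opt}$, and then take a union bound over all phases. First I would set up the binary search invariant: throughout the \textbf{while} loop of $\mathrm{ARHT}$, the interval $[l,r]$ satisfies $l \leq f(x^*)$ and there is always a known $s$-sparse solution $b$ with $f(b) = r$ (initially $b = \vec 0$, $r = f(\vec 0)$), while $l$ starts at the global minimum $B = \min_x f(x) \leq f(x^*)$. I then need to argue that each iteration of the loop either correctly pushes $l$ up while maintaining $l \leq f(x^*)$, or correctly pushes $r$ down while maintaining the existence of $b$; conditioned on the (high-probability) success of the call to $\mathrm{ARHT\_robust}$ in that iteration, this follows because: if $\mathrm{opt} := m \geq f(x^*)$ then by Lemma~\ref{reg_phase_lemma} the returned $x$ has $f(x) \leq m + \epsilon/3$, so the algorithm enters the \textbf{else} branch and sets $r \leftarrow f(x) \leq m + \epsilon/3$; and the only way the algorithm sets $l \leftarrow m$ is when $f(x) > m + \epsilon/3$, which (again conditioned on success) forces $m < f(x^*)$, preserving $l \leq f(x^*)$.

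Next I would bound the number of iterations. Each iteration halves $r - l$, and the loop stops once $r - l \leq \epsilon$; since initially $r - l = f(\vec 0) - B$, the number of iterations is at most $\lceil \log_2 \frac{f(\vec 0) - B}{\epsilon}\rceil \leq 6 \log \frac{f(\vec 0) - B}{\epsilon}$ (with room to spare, using that $\log$ here is presumably natural log or at least that the constant $6$ is generous), which is exactly the claimed bound on the number of calls to $\mathrm{ARHT\_robust}$. For the probability bound, I take a union bound over these $\leq 6\log\frac{f(\vec 0) - B}{\epsilon}$ calls; each fails with probability at most $\frac{1}{6 n \log \frac{f(\vec 0) - B}{\epsilon}}$ by Lemma~\ref{reg_phase_lemma}, so the total failure probability is at most $6\log\frac{f(\vec 0) - B}{\epsilon} \cdot \frac{1}{6 n \log \frac{f(\vec 0) - B}{\epsilon}} = \frac{1}{n}$, giving overall success probability at least $1 - \frac{1}{n}$.

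Finally, on the event that all calls succeed, I would verify correctness at termination: when the loop exits, $r - l \leq \epsilon$, and by the invariant $l \leq f(x^*)$ and $f(b) = r$. Since $l$ was last set only to some $m < f(x^*)$ — or is still $B \le f(x^*)$ — we have $f(b) = r \le l + \epsilon \le f(x^*) + \epsilon$, and $b$ is $s$-sparse, which is the desired conclusion. I should be slightly careful about the edge cases: the very first value of $b$ is $\vec 0$, whose support size is $0 \le s$, and every subsequent $b$ is a value returned by $\mathrm{ARHT\_robust}$, hence $s$-sparse (tracing through $\mathrm{ARHT\_core}$); and I need $s \geq s^*\max\{4\t\kappa + 7, 12\t\kappa + 6\}$ precisely so that the hypothesis of Lemma~\ref{reg_phase_lemma} is met in every call. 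The only mild subtlety — and the main thing to get right — is the logical structure of the binary search invariant, namely that "$f(x) > m + \epsilon/3$" really does certify $m < f(x^*)$ given success, so that we never raise $l$ above $f(x^*)$; everything else is routine.
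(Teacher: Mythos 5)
Your proposal follows essentially the same route as the paper: the same binary-search invariants ($l \leq f(x^*)$ maintained via Lemma~\ref{reg_phase_lemma}, and an $s$-sparse $b$ with $f(b)=r$), the same union bound over at most $6\log\frac{f(\vec 0)-B}{\epsilon}$ calls each failing with probability at most $\frac{1}{6n\log\frac{f(\vec 0)-B}{\epsilon}}$, and the same final chain $f(b)\leq r\leq l+\epsilon\leq f(x^*)+\epsilon$. The one step that is not quite right as stated is the claim that ``each iteration halves $r-l$'': when the \textbf{else} branch is taken, $r$ is set to $f(x)$, which can be as large as $m+\epsilon/3$, so the new interval length is only bounded by $\frac{r-l}{2}+\frac{\epsilon}{3}$, not $\frac{r-l}{2}$. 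The paper repairs exactly this by using the loop condition $r-l>\epsilon$ to get $\frac{r-l}{2}+\frac{\epsilon}{3}<\frac{5}{6}(r-l)$, i.e.\ a contraction factor of $\frac{5}{6}$ per iteration, which still yields the stated bound of $6\log\frac{f(\vec 0)-B}{\epsilon}$ calls; your conclusion survives, but you should replace the halving claim with this $\frac{5}{6}$ argument (your hedge about the base of the logarithm does not address the additive $\epsilon/3$ slack). Everything else in your write-up, including the certification that $f(x)>m+\epsilon/3$ implies $m<f(x^*)$ on the success event, matches the paper's proof.
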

\begin{proof}
First we will bound the number of calls to $\mathrm{ARHT\_robust}$. Let $L_k$ be the equal to $r-l$ before the $k$-th iteration in Line 21 of Algorithm~\ref{local_reg}.
Then either $L_{k+1} = L_k/2$ (Line 25) or $L_{k+1} \leq L_k/2 + \epsilon/3 < 5L_k/6$ (Line 28). Therefore in any case we have
$L_{k+1} < 5L_k/6$ which implies that after $T=6\log \frac{f(\vec{0}) - B}{\epsilon}$ iterations we will have $r-l \leq \epsilon$.

Now let us compute the probability that all the calls to $\mathrm{ARHT\_robust}$ are successful.
The number of such calls is at most $6\log \frac{f(\vec{0})-B}{\epsilon}$ and we know each one of them independently fails with probability less than
$\frac{1}{6n\log \frac{f(\vec{0})-f(B)}{\epsilon}}$, so by a union bound the probability that at least one call fails is less than
$\frac{1}{n}$.

To prove correctness, note that by Lemma~\ref{reg_phase_lemma}, for each $r \geq f(x^*)$ we have
$f(\mathrm{ARHT\_robust}(s,r,\epsilon/3,B)) \leq r + \epsilon/3$. After Line 20 of Algorithm~\ref{local_reg}, we will have
$l = B \leq f(x^*)$. In the while construct, it is always true that $f(x^*) \geq l$. This is initially
true, as we saw. For each $m$ chosen in Line 22 and $x$ in Line 23, note that if $f(x) > m + \epsilon/3$, then
by Lemma~\ref{reg_phase_lemma} $f(x^*) > m$ and so the invariant that $f(x^*) \geq l$ stays true.
On the other hand, it is always true that $f(b) \leq r$. Initially this is so because $f(\vec{0}) = r$, and
when we decrease $r$ to some $f(x)$ we also update $b=x$.
This implies that in the end of the algorithm the returned solution will have the required property, since
we will have $f(b) \leq r \leq l + \epsilon \leq f(x^*) + \epsilon$.
\end{proof}

The proof Theorem~\ref{reg_theorem} now easily follows.

\begin{prevproof}{Theorem}{reg_theorem}
Lemma~\ref{reg_full_lemma} already establishes the correctness of the algorithm with probability at least $1 - \frac{1}{n}$. For the runtime, note that
$\mathrm{ARHT\_core}$ takes $O\left(s\log \frac{f(\vec{0})-B}{\epsilon}\right)$ iterations,
$\mathrm{ARHT\_robust}$ takes $O\left(\log \left(n\log \frac{f(\vec{0})-B}{\epsilon}\right)\right)$ iterations,
and $\mathrm{ARHT}$ takes $O\left(\log \frac{f(\vec{0})-B}{\epsilon}\right)$ iterations.
In conclusion, the total number of iterations is 
$O\left(s\log^2 \frac{f(\vec{0})-B}{\epsilon}
	\log \left(n\log \frac{f(\vec{0})-B}{\epsilon}\right)
	\right)$,
	each of which
requires a constant number of minimizations of $f$.
\end{prevproof}

\subsection{Bounding Type 2 Iterations}
\label{sec:type2_lemma}

When $x$ has significant $\ell_2^2$ mass in the target support, the regularization term 
$\frac{\rho_2^+}{2} \left\Vert x\right\Vert_2^2$ 
might penalize the target solution too much, leading to a Type 2 iteration.
In this case, we use random sampling to detect an element in the optimal support and unregularize it.
This procedure escapes all local minima, thus leading to a bound in the total number of Type 2 iterations.

More concretely, we show that if at some iteration of the algorithm
the value of $g(x)$ does not decrease sufficiently (Type 2 iteration), then roughly 
at least a $\frac{1}{\widetilde{\kappa}}$-fraction of 
the $\ell_2^2$ mass of
$x$ lies in the target support $S^*$. 
We exploit this property by sampling an element $i$ proportional to $x_i^2$
and removing its corresponding term from the regularizer (\emph{unregularizing} it).
We show that with constant probability this will happen at most $O(s^*\widetilde{\kappa})$ times, as after that
all the elements in $S^*$ will have been unregularized.

When referring to the $t$-th iteration of Algorithm~\ref{arls}, we let $x^t$ be the current solution with support set $S^t$
and $R^t\subseteq[n]$ the current regularization set as defined in the algorithm.
For ease
of notation, we will drop the subscript of the regularizer, i.e. $\Phi^t(z) := \frac{\rho_{2}^+}{2} \left\Vert z_{R^t}\right\Vert_2^2$
and of the regularized function, i.e. $g^t(z) := f(z) + \Phi^t(z)$.
Note that by definition of the algorithm $x^t$ is 
an $S^t$-restricted minimizer of $g^t$.

Let $(\rho_{2}^+)'$ and $(\rho_{s+s^*}^-)'$ be RSS and RSC parameters of $g^t$.
We start with a lemma that relates $(\rho_{2}^+)'$ to $\rho_2^+$ and $(\rho_{s+s^*}^-)'$ to $\rho_{s+s^*}^-$, and
is proved in Appendix~\ref{proof_lemma_reg_condition}.
\begin{lemma}[RSC, RSS of regularized function]
$(\rho_{2}^+)'\leq 2 \rho_2^+$ and 
$(\rho_{s+s^*}^-)' \geq \rho_{s+s^*}^-$
\label{reg_condition}
\end{lemma}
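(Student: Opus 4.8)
The plan is to exploit that the regularizer $\Phi^t(z) = \frac{\rho_2^+}{2}\left\Vert z_{R^t}\right\Vert_2^2$ is a \emph{convex quadratic} whose Hessian is the diagonal matrix $\rho_2^+\cdot\mathrm{diag}(\vec{1}_{R^t})$, so it admits an exact second-order Taylor expansion, and then to add, term by term, the defining first- and second-order inequalities for $f$ to those for $\Phi^t$. Throughout I use $g^t=f+\Phi^t$ and $\nabla g^t = \nabla f + \nabla\Phi^t$.

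\textbf{RSC bound.} Fix $x,y$ with $|\mathrm{supp}(y-x)|\leq s+s^*$. By the RSC property of $f$ at level $s+s^*$, $f(y)\geq f(x)+\langle\nabla f(x),y-x\rangle+\frac{\rho_{s+s^*}^-}{2}\left\Vert y-x\right\Vert_2^2$, and by convexity of $\Phi^t$, $\Phi^t(y)\geq\Phi^t(x)+\langle\nabla\Phi^t(x),y-x\rangle$. Adding these gives $g^t(y)\geq g^t(x)+\langle\nabla g^t(x),y-x\rangle+\frac{\rho_{s+s^*}^-}{2}\left\Vert y-x\right\Vert_2^2$, so $(\rho_{s+s^*}^-)'\geq\rho_{s+s^*}^-$.

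\textbf{RSS bound.} Fix $x,y$ with $|\mathrm{supp}(y-x)|\leq 2$. By the RSS property of $f$ at level $2$, $f(y)\leq f(x)+\langle\nabla f(x),y-x\rangle+\frac{\rho_2^+}{2}\left\Vert y-x\right\Vert_2^2$. Since $\Phi^t$ is quadratic with Hessian $\rho_2^+\cdot\mathrm{diag}(\vec{1}_{R^t})\preceq\rho_2^+ I$, we have exactly $\Phi^t(y)=\Phi^t(x)+\langle\nabla\Phi^t(x),y-x\rangle+\frac{\rho_2^+}{2}\sum_{i\in R^t}(y_i-x_i)^2\leq\Phi^t(x)+\langle\nabla\Phi^t(x),y-x\rangle+\frac{\rho_2^+}{2}\left\Vert y-x\right\Vert_2^2$ (the sparsity restriction is not even needed here). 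Adding the two inequalities yields $g^t(y)\leq g^t(x)+\langle\nabla g^t(x),y-x\rangle+\frac{2\rho_2^+}{2}\left\Vert y-x\right\Vert_2^2$, hence $(\rho_2^+)'\leq 2\rho_2^+$.

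There is no substantial obstacle; the only point worth flagging is that the regularizer contributes \emph{additively} to the smoothness constant (giving the factor $2$ rather than $1$), which is precisely the slack the downstream analysis is built to absorb. One may also observe that the same computation shows $g^t$ inherits an RIP-type bound whenever $f$ has one, which legitimizes running the OMPR-style updates on $g^t$.
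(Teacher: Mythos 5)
Your proof is correct and follows essentially the same route as the paper: expand the quadratic regularizer exactly, observe its Bregman divergence lies between $0$ and $\frac{\rho_2^+}{2}\left\Vert y-x\right\Vert_2^2$, and add this to the RSC (level $s+s^*$) and RSS (level $2$) inequalities for $f$ to obtain $(\rho_{s+s^*}^-)'\geq\rho_{s+s^*}^-$ and $(\rho_2^+)'\leq 2\rho_2^+$. No gaps to report.
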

This states that the restricted smoothness and strong convexity constants of the regularized function are always within a constant factor
of those of the original function, and thus we can make our statements in terms of the RSC, RSS of the original function.
Next, we present a lemma that establishes a lower bound on the progress $g^t(x^t) - g^{t+1}(x^{t+1})$ in one iteration. This will be helpful
in order to diagnose the cause of having insufficient progress in one iteration.

\begin{lemma}[ARHT Progress Lemma]
If $\mathrm{opt} \geq f(x^*)$,
for the progress $g^t(x^t) - g^t(x^{t+1})$
in Line 19 of Algorithm~\ref{arls} it holds that
\begin{align*}
& g^t(x^t) - g^t(x^{t+1})\\
& \geq \frac{\rho^-}{2|S^*\backslash S^t|\rho^+} \Big( f(x^t) - f(x^*)
+ \langle \nabla_{S^t\backslash S^*} \Phi^t(x^t), x_{S^t\backslash S^*}^t\rangle - \frac{1}{2\rho^-}\left\Vert\nabla_{S^t\cap S^*}\Phi^t(x^t)\right\Vert_2^2 
\Big)
- \rho^+ (x_j^t)^2
\end{align*}
\label{progress_lemma}
\end{lemma}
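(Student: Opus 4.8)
The plan is to split the progress bound into two pieces: a generic ``one step of OMPR on the regularized function $g^t$'' estimate, and a structural inequality that rewrites the gradient quantity appearing in that estimate as the right‑hand side of the lemma. The single fact I would lean on throughout is that $x^t$ is an $S^t$-restricted minimizer of $g^t$, so $\nabla_{S^t}g^t(x^t)=0$; hence $\nabla_l f(x^t)=-\nabla_l\Phi^t(x^t)$ for every $l\in S^t$, while for $l\in S^*\setminus S^t$ we have $x^t_l=0$ and therefore $\nabla_l\Phi^t(x^t)=0$, i.e. $\nabla_l g^t(x^t)=\nabla_l f(x^t)$.

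\textbf{Step 1: one OMPR step on $g^t$.} Let $i,j$ be the indices of Lines~15--16. In the generic case $i\notin S^t$ (so $i\ne j$) I would test the point $y=x^t-x_j^t\vec{1}_j+\eta\vec{1}_i$, which is supported on $S^{t+1}$, against $x^{t+1}=\mathrm{argmin}_{\mathrm{supp}(x)\subseteq S^{t+1}}g^t(x)$; using restricted smoothness of $g^t$ at level $2$, the identity $\nabla_j g^t(x^t)=0$, and then minimizing over $\eta$, this should give
\[
g^t(x^t)-g^t(x^{t+1})\ \ge\ \frac{(\nabla_i g^t(x^t))^2}{2(\rho_2^+)'}-\frac{(\rho_2^+)'}{2}(x_j^t)^2 .
\]
(The degenerate case $i\in S^t$ can only occur when $\nabla g^t(x^t)=0$, i.e. $x^t$ is already a global minimizer of $g^t$; then the target inequality reduces to $g^t(x^t)-g^t(x^{t+1})\ge-\rho^+(x_j^t)^2$, which follows just from deleting $j$.) Next I would invoke Lemma~\ref{reg_condition} to replace $(\rho_2^+)'$ by $2\rho^+$, bounding the first term below by $\frac{(\nabla_i g^t(x^t))^2}{4\rho^+}$ and the second above by $\rho^+(x_j^t)^2$. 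Finally, since $i$ maximizes $|\nabla_\cdot g^t(x^t)|$ over all of $[n]$, and since the hypothesis $\mathrm{opt}\ge f(x^*)$ together with the fact that the algorithm did not return at Line~12 forces $S^*\not\subseteq S^t$ (hence $|S^*\setminus S^t|\ge 1$), I get $(\nabla_i g^t(x^t))^2\ge\frac{1}{|S^*\setminus S^t|}\|\nabla_{S^*\setminus S^t}g^t(x^t)\|_2^2$. Assembling these, it remains to establish the structural inequality
\[
\|\nabla_{S^*\setminus S^t}g^t(x^t)\|_2^2\ \ge\ 2\rho^-\Big(f(x^t)-f(x^*)+\langle\nabla_{S^t\setminus S^*}\Phi^t(x^t),x^t_{S^t\setminus S^*}\rangle-\tfrac{1}{2\rho^-}\|\nabla_{S^t\cap S^*}\Phi^t(x^t)\|_2^2\Big),
\]
which, substituted into the previous display, yields exactly the statement of the lemma.

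\textbf{Step 2: the structural inequality.} I would start from restricted strong convexity of $f$ at level $s+s^*$ applied to the pair $(x^t,x^*)$, which gives $f(x^t)-f(x^*)\le\langle\nabla f(x^t),x^t-x^*\rangle-\tfrac{\rho^-}{2}\|x^t-x^*\|_2^2$, and then split the inner product along the partition $S^t\cup S^*=(S^t\setminus S^*)\sqcup(S^t\cap S^*)\sqcup(S^*\setminus S^t)$. On $S^t\setminus S^*$ the $x^*$-part vanishes and the restricted‑minimizer identity turns the block into exactly $-\langle\nabla_{S^t\setminus S^*}\Phi^t(x^t),x^t_{S^t\setminus S^*}\rangle$. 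On $S^t\cap S^*$ the identity again replaces $\nabla f(x^t)$ by $-\nabla\Phi^t(x^t)$, and Cauchy--Schwarz together with the weighted AM--GM inequality $ab\le\frac{1}{2\rho^-}a^2+\frac{\rho^-}{2}b^2$ bounds the block by $\frac{1}{2\rho^-}\|\nabla_{S^t\cap S^*}\Phi^t(x^t)\|_2^2+\frac{\rho^-}{2}\|(x^t-x^*)_{S^t\cap S^*}\|_2^2$. On $S^*\setminus S^t$ the $x^t$-part vanishes, $\nabla f(x^t)=\nabla g^t(x^t)$ there, and the same Cauchy--Schwarz/AM--GM step bounds the block by $\frac{1}{2\rho^-}\|\nabla_{S^*\setminus S^t}g^t(x^t)\|_2^2+\frac{\rho^-}{2}\|(x^t-x^*)_{S^*\setminus S^t}\|_2^2$. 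Because the two squared‑norm error terms sit on disjoint coordinate sets inside $\mathrm{supp}(x^t-x^*)$, their sum is at most $\frac{\rho^-}{2}\|x^t-x^*\|_2^2$ and is exactly cancelled by the strong‑convexity term; rearranging what survives is the structural inequality needed in Step~1.

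\textbf{Where the difficulty is.} Step~1 is essentially standard OMPR bookkeeping, and the only wrinkle is the trivial degenerate case $i\in S^t$. The real work is Step~2: one must track the three coordinate blocks correctly, apply the identity $\nabla_{S^t}g^t(x^t)=0$ on precisely the right pieces (and notice that $\nabla\Phi^t$ vanishes off $\mathrm{supp}(x^t)$), and---the part most likely to go wrong---pick the AM--GM weights to be exactly $\rho^-$ so that the leftover $\frac{\rho^-}{2}\|\cdot\|^2$ terms are absorbed by the $-\frac{\rho^-}{2}\|x^t-x^*\|_2^2$ coming from strong convexity, leaving no slack. Lemma~\ref{reg_condition} is what lets everything be phrased in terms of the $\rho^+,\rho^-$ of $f$ rather than those of $g^t$.
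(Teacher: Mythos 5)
Your proposal is correct and follows essentially the same route as the paper: the same one-step smoothness bound on $g^t$ (using $(\rho_2^+)'\le 2\rho^+$, $\nabla_j g^t(x^t)=0$, and averaging the gradient over $S^*\setminus S^t$), combined with restricted strong convexity of $f$ between $x^t$ and $x^*$, the identity $\nabla_{S^t}f(x^t)=-\nabla_{S^t}\Phi^t(x^t)$ split over the three blocks, and the inequality $\langle u,v\rangle+\tfrac{\rho^-}{2}\|v\|_2^2\ge-\tfrac{1}{2\rho^-}\|u\|_2^2$ applied on the $S^*\setminus S^t$ and $S^t\cap S^*$ blocks. Your explicit handling of the degenerate case $i\in S^t$ is harmless but unnecessary, since then all gradient entries of $g^t$ vanish and the paper's chain of inequalities goes through unchanged.
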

\begin{proof}
First of all, 
since the condition in Line 12 (``if $\underset{\mathrm{supp}(x)\subseteq S^t}{\min}\ f(x) \leq \mathrm{opt}$'')
was not triggered, we have that $\underset{\mathrm{supp}(x)\subseteq S^t}{\min}\ f(x) > \mathrm{opt}\geq f(x^*)$ and so 
$S^*\backslash S^t\neq \emptyset$.
By Lemma~\ref{reg_condition} we have that 
$(\rho^+)' \leq 2 \rho^+$, therefore the decrease in $g^t$ that is achieved
is
\begin{align*} 
& g^t(x^t) - g^t(x^{t+1})\\
& \geq \underset{\eta\in\mathbb{R}}{\max}\ \left\{g^t(x^t) -g^t(x^t + \eta \vec{1}_i - x_j^t \vec{1}_j)\right\}\\
& \geq \underset{\eta\in\mathbb{R}}{\max}\ \left\{-\langle \nabla g^t(x^t), \eta \vec{1}_i - x_j^t \vec{1}_j\rangle - \rho^+\eta^2 - \rho^+ (x_j^t)^2\right\} := B
\end{align*}
Note that, as defined by the algorithm, $x^t$ is an $S^t$-restricted minimizer of $g^t$ and since $j\in S^t$, we have $\nabla_j g^t(x^t) = 0$. Therefore
\begin{equation}
\label{B_ineq}
\begin{aligned}
B = & \max_{\eta\in\mathbb{R}} \{ -\langle \nabla g^t(x^t), \eta \vec{1}_i \rangle - \rho^+\eta^2 - \rho^+ (x_j^t)^2 \} \\
= & \frac{\left[\nabla_i g^t(x^t)\right]^2}{4\rho^+} - \rho^+ (x_j^t)^2\\
\geq & \underset{k\in S^*\backslash S}{\max}\ \frac{\left[\nabla_k g^t(x^t)\right]^2}{4\rho^+} - \rho^+ (x_j^t)^2\\
\geq & \frac{\left\Vert\nabla_{S^*\backslash S^t} g^t(x^t)\right\Vert_2^2}{4|S^*\backslash S^t|\rho^+} - \rho^+ (x_j^t)^2
\end{aligned}
\end{equation}
where we used the fact that $i$ was picked to maximize $\left|\nabla_k g^t(x^t)\right|$.
Now we would like to relate this to $g^t(x^t) - f(x^*)$ (and not $g^t(x^t) - g^t(x^*)$). By applying the Restricted Strong Convexity property,
\begin{align*}
&f(x^*) - f(x^t) \\
& \geq \langle \nabla f(x^t), x^* - x^t\rangle + \frac{\rho^-}{2} \left\Vert x^t-x^*\right\Vert_2^2\\
& \geq \langle \nabla f(x^t), x^* - x^t\rangle + \frac{\rho^-}{2} \left\Vert x^*_{S^*\backslash S^t}\right\Vert_2^2 + \frac{\rho^-}{2} \left\Vert (x^t-x^*)_{S^t\cap S^*}\right\Vert_2^2
\end{align*}
Now note that $f(x^t) = g^t(x^t) - \Phi^t(x^t)$, 
	$\nabla_{S^t} g^t(x^t) = \vec{0}$ 
	(since $x^t$ is an $S^t$-restricted minimizer of $g^t$), 
	and $\nabla\Phi^t(x^t) = \nabla_{S^t}\Phi^t(x^t)$
	therefore
\begin{align*}
\langle\nabla f(x^t), x^* - x^t\rangle & =  \langle\nabla g^t(x^t), x^* - x^t\rangle - \langle\nabla \Phi^t(x^t), x^* - x^t\rangle \\
& = \langle\nabla g_{S^*\backslash S^t}^t(x^t), x_{S^*\backslash S^t}^*\rangle 
+ \langle\nabla_{S^t\backslash S^*} \Phi^t(x^t), x_{S^t\backslash S^*}^t\rangle + \langle\nabla_{S^t\cap S^*} \Phi^t(x^t), (x^t-x^*)_{S^t\cap S^*}\rangle 
\end{align*}
Plugging this into the previous inequality, we get
\begin{align*}
& f(x^*) - f(x^t) \\
& \geq \langle\nabla g_{S^*\backslash S^t}^t(x^t), x_{S^*\backslash S^t}^*\rangle 
+ \frac{\rho^-}{2} \left\Vert x^*_{S^*\backslash S^t}\right\Vert_2^2 
+ \langle \nabla_{S^t\backslash S^*} \Phi^t(x^t), x_{S^t\backslash S^*}^t \rangle\\
& + \langle\nabla_{S^t\cap S^*} \Phi^t(x^t), (x^t-x^*)_{S^t\cap S^*}\rangle 
 + \frac{\rho^-}{2} \left\Vert (x^t-x^*)_{S^t\cap S^*}\right\Vert_2^2\\
& \geq 
-\frac{1}{2\rho^-} \left\Vert\nabla_{S^*\backslash S^t} g^t(x^t)\right\Vert_2^2 
 + \langle\nabla_{S^t\backslash S^*}\Phi^t(x^t), x_{S^t\backslash S^*}^t\rangle 
 - \frac{1}{2\rho^-}\left\Vert\nabla_{S^t\cap S^*}\Phi^t(x^t)\right\Vert_2^2
\end{align*}
where we twice used the inequality $\langle u, v\rangle 
+ \frac{\lambda}{2}\left\Vert v\right\Vert_2^2
\geq -\frac{1}{2\lambda} \left\Vert u\right\Vert_2^2$
for any $\lambda > 0$. This inequality is derived by expanding $\frac{1}{2} \left\Vert \frac{1}{\sqrt{\lambda}} u + \sqrt{\lambda} v\right\Vert_2^2\geq 0$.
So plugging in $\left\Vert\nabla_{S^*\backslash S^t} g^t(x^t)\right\Vert_2^2$ into (\ref{B_ineq}),
\begin{align*}
 B 
&\geq \frac{\rho^-}{2|S^*\backslash S^t|\rho^+} \Big( f(x^t) - f(x^*)
+ \langle \nabla_{S^t\backslash S^*} \Phi^t(x^t), x_{S^t\backslash S^*}^t\rangle - \frac{1}{2\rho^-}\left\Vert\nabla_{S^t\cap S^*}\Phi^t(x^t)\right\Vert_2^2 
\Big)
- \rho^+ (x_j^t)^2
\end{align*}
\end{proof}

Let $R\subseteq [n]$ be the set of currently regularized elements. The following invariant is a crucial ingredient for bringing the sparsity 
from $O(s^*\widetilde{\kappa}^2)$ down to $O(s^*\widetilde{\kappa})$, and we intend to enforce it
at all times.
It essentially
states that there will always be enough elements in the current solution
that are being regularized.
\begin{invariant}
\[ |R\cap S| \geq s^* \max\{1,8\widetilde{\kappa}\} \]
\label{central_inv}
\end{invariant}

To give some intuition on this, ARHT owes its improved $\widetilde{\kappa}$ dependence on the regularizer $\frac{\rho^+}{2} \left\Vert x\right\Vert_2^2$.
However, during the algorithm, some elements are being unregularized. 
Our analysis requires that the current solution support always contains $\Omega\left(s^* \widetilde{\kappa}\right)$ regularized elements, which is what Invariant~\ref{central_inv} states.

We can now proceed to show that, with constant probability, Algorithm~\ref{arls} will only have $O(s^*\t{\kappa})$ Type 2 iterations, which is the goal of this section.

\begin{prevproof}{Lemma}{type2_lemma}
We first observe some useful properties of our regularizer, which can be verified by simple substitution.
The definition of $\Phi^t(x^t)$ implies that
\begin{align}
\langle \nabla_{S^t\backslash S^*} \Phi^t(x^t), x_{S^t\backslash S^*}^t\rangle 
= \rho^+ \langle x_{R^t\backslash S^*}^t, x_{S^t\backslash S^*}^t\rangle
= \rho^+ \left\Vert x_{R^t\backslash S^*}^t\right\Vert_2^2 
\label{obs1}
\end{align}
and
\begin{align}
\left\Vert\nabla_{S^t\cap S^*} \Phi^t(x^t)\right\Vert_2^2 = (\rho^+)^2 \left\Vert x_{R^t\cap S^*}^t\right\Vert_2^2 
\label{obs2}
\end{align}
which also means that 
\begin{align}
\Phi^t(x^t) = \frac{1}{2} \langle \nabla_{S^t\backslash S^*} \Phi^t(x^t), x_{S^t\backslash S^*}^t\rangle + \frac{1}{2\rho^+}\left\Vert\nabla_{S^t\cap S^*} \Phi^t(x^t)\right\Vert_2^2
\label{obs3}
\end{align}
(\ref{obs1}),(\ref{obs2}), and (\ref{obs3}) will be used later on.
Now, before the first iteration we have $\left|R^0\cap S^0\right| = \left|S^0\right| = s$.
Since in each Type 2 iteration we have $R^{t+1} = R^t - 1$, 
\begin{align*}
\left|R^t\cap S^t\right| \geq s - \text{[number of Type 2 iterations up to $t$]}
\end{align*}
This implies that for the first $(s^*-1)(4\t{\kappa}+6)$ Type 2 iterations,
\begin{align}
|R^t\cap S^t| \geq s - (s^*-1)(4\t{\kappa}+6) \geq s^* \max\{1,8\t{\kappa}\}
\end{align}
since $s \geq s^*\max\left\{4\t{\kappa}+7,12\t{\kappa}+6\right\}$.
From this 
it follows that
\begin{align*}
 |(R^t\cap S^t)\backslash S^*| 
&= |R^t\cap S^t| - |R^t\cap S^t\cap S^*| \\
&\geq s^*\max\{1,8\t{\kappa}\} - |S^t\cap S^*| \\
&\geq |S^*\backslash S^t|8\t{\kappa}\\
&= |S^*\backslash S^t|8\frac{\rho^+}{\rho^-}
\end{align*}
and so
\begin{align*}
(x_j^t)^2 
&\leq \frac{1}{|(R^t\cap S^t)\backslash S^*|} \left\Vert x_{(R^t\cap S^t)\backslash S^*}^t \right\Vert_2^2 \\
&\leq \frac{\rho^-}{8|S^*\backslash S^t|\rho^+} \left\Vert x_{R^t\backslash S^*}^t \right\Vert_2^2\\
&=
\frac{\rho^-}{8|S^*\backslash S^t|(\rho^+)^2} \langle\nabla_{S^t\backslash S^*} \Phi^t(x^t), x_{S^t\backslash S^*}^t\rangle 
\end{align*}
where $j\in S^t$ is the element that the algorithm removes from $S^t$,
and we used (\ref{obs1}).
Combining this inequality with the statement of Lemma~\ref{progress_lemma} we have
\begin{equation}
\begin{aligned}
& g^t(x^t) - g^t(x^{t+1})\\
& \geq \frac{\rho^-}{2|S^*\backslash S^t|\rho^+} \Big( f(x^t) - f(x^*)
+ \langle \nabla_{S^t\backslash S^*} \Phi^t(x^t), x_{S^t\backslash S^*}^t\rangle 
- \frac{1}{2\rho^-}\left\Vert\nabla_{S^t\cap S^*}\Phi^t(x^t)\right\Vert_2^2 
\Big)
- \rho^+ (x_j^t)^2\\
&\geq \frac{\rho^-}{2|S^*\backslash S^t|\rho^+} \Big( f(x^t) - f(x^*)
+ \frac{3}{4}\langle \nabla_{S^t\backslash S^*} \Phi^t(x^t), x_{S^t\backslash S^*}^t\rangle 
- \frac{1}{2\rho^-}\left\Vert\nabla_{S^t\cap S^*}\Phi^t(x^t)\right\Vert_2^2 
\Big)
\end{aligned}
\label{eq_1}
\end{equation}
By definition of a Type 2 iteration,
\begin{equation}
\begin{aligned}
 g^t(x^t) - g^t(x^{t+1}) 
& < \frac{1}{s} \left(g^t(x^t) - \mathrm{opt}\right)\\
&\leq \frac{\rho^-}{2|S^*\backslash S^t| \rho^+} \left(g^t(x^t) - f(x^*)\right)\\
&= \frac{\rho^-}{2|S^*\backslash S^t| \rho^+} \left(f(x^t) - f(x^*) + \Phi^t(x^t)\right)
\end{aligned}
\label{eq_2}
\end{equation}
where we used the fact that $s \geq 2s^*\t{\kappa}\geq 2|S^*\backslash S^t|\t{\kappa}$ and $f(x^*) \leq \mathrm{opt}$.
Combining inequalities (\ref{eq_1}) and (\ref{eq_2}) we get
\begin{align*}
& \Phi^t(x^t) > \frac{3}{4} \langle\nabla_{S^t\backslash S^*}\Phi^t(x^t), x_{S^t\backslash S^*}^t\rangle - \frac{1}{2\rho^-} \left\Vert\nabla_{S^t\cap S^*}\Phi^t(x^t)\right\Vert_2^2 
\end{align*}
or equivalently, by replacing $\Phi^t(x^t)$ from (\ref{obs3}),
\begin{align*}
& \frac{1}{2}\left(\frac{1}{\rho^-} + \frac{1}{\rho^+}\right) \left\Vert\nabla_{S^t\cap S^*}\Phi^t(x^t)\right\Vert_2^2 >
\frac{1}{4} \langle\nabla_{S^t\backslash S^*}\Phi(x^t), x_{S^t\backslash S^*}^t\rangle
\end{align*}
Further applying (\ref{obs1}) and (\ref{obs2}), we equivalently get
\begin{align}
2\left(1 + \t{\kappa}\right) \left\Vert x_{R^t\cap S^*}^t\right\Vert_2^2 
>
\left\Vert x_{R^t\backslash S^*}^t\right\Vert_2^2
\label{l2_mass}
\end{align}
Now, note that in Lines 21-22 the algorithm
picks an element $i\in R^t$ with probability proportional to $(x_i^t)^2$ and unregularizes it, i.e. sets $R^{t+1}\leftarrow R^t\backslash\{i\}$.
We denote this probability distribution over $i\in R^t$ by $\mathcal{D}$.
From what we have established already in (\ref{l2_mass}), we can lower bound the probability that $i$ lies in the target support:
\begin{equation}
\begin{aligned}
\underset{i\sim\mathcal{D}} \Pr[i\in S^*] 
& = \frac{\left\Vert x_{R^t\cap S^*}^t\right\Vert_2^2}{\left\Vert x_{R^t\cap S^*}^t\right\Vert_2^2+\left\Vert x_{R^t\backslash S^*}^t\right\Vert_2^2} \\
& > \frac{\frac{1}{2(1+\t{\kappa})}}{1 + \frac{1}{2(1+\t{\kappa})}} \\
& = \frac{1}{2\t{\kappa}+3} \\
& := p
\end{aligned}
\label{probability}
\end{equation}
Note that this event can happen at most once for each $i\in S^*$ during the whole execution of the algorithm, since each element can only
be removed once from the set of regularized elements.

We will prove that 
with constant probability 
the number of Type 2 steps will be at most $(s^*-1)(4\t{\kappa}+6):= b$.
For $1\leq k \leq b$, we define the following random variables:
\begin{itemize}
\item $i_k\in[n]$ is the index picked in the $k$-th Type 2 iteration, or $\perp$ if there are less than $k$ Type 2 iterations.
\item {$q_k$ is the probability of picking an index in the optimal support in the $k$-th Type 2 iteration (i.e. $i_k\in S^*$):
\[ q_k =\begin{cases} 
	\left\Vert x_{R^{t_k}\cap S^*}^{t_k}\right\Vert_2^2 / \left\Vert x_{R^{t_k}}^{t_k}\right\Vert_2^2 & \text{if $i_k\neq\perp$}\\
0 & \text{otherwise}
\end{cases} \]
where $t_k\in[T]$ is the index of the $k$-th Type 2 iteration within all iterations of the algorithm.
Note that, by (\ref{probability}), $q_k > 0$ implies $q_k \geq p$.
}
\item {$X_k$ is $1$ if the index picked in the $k$-th Type 2 step was in the optimal support:
\begin{align*}
	X_k = \begin{cases}
1 & \text{with probability $q_k$}\\
0 & \text{otherwise}
\end{cases}
\end{align*}
}
\end{itemize}
Our goal is to upper bound $\Pr\left[\sum\limits_{k=1}^b X_k \leq s^*-1\right]$.
This automatically implies the same upper bound 
on the probability that there will be more than $b$ Type 2 iterations.

We define another sequence of random variables $Y_0, \dots, Y_b$, where $Y_0 = 0$, and 
\begin{align*}
Y_k = 
\begin{cases}
Y_{k-1} + \frac{p}{q_k} - p & \text{if $X_k=1$}\\
Y_{k-1} - p & \text{if $X_k=0$}
\end{cases}
\end{align*}
for $k\in[b]$.
Since if $q_k > 0$ we have $\frac{p}{q_k} \leq 1$, it is immediate that
\[ Y_k - Y_{k-1} \leq X_k - p \]
and so
$Y_b \leq \sum\limits_{k=1}^b X_k - bp$. %
Furthermore, 
\begin{align*}
\mathbb{E}\left[Y_k\ |\ i_1,\dots,i_{k-1}\right] 
& = Y_{k-1} + q_k\left(\frac{p}{q_k} - p\right) - \left(1-q_k\right) p \\
&= Y_{k-1} 
\end{align*}
meaning that $Y_0,\dots,Y_b$ is a martingale with respect to $i_1,\dots,i_b$. %
We will apply the inequality from Lemma~\ref{martingale_concentration}. We compute a bound
on the differences
\begin{align}
Y_{k-1} - Y_k &=\begin{cases}
p - \frac{p}{q_k} & \text{if $X_k=1$}\\
p & \text{if $X_k=0$}
\end{cases} \label{expectation_diff}\\
&\leq p \notag
\end{align}
and the variance
\begin{align*}
\mathrm{Var}\left(Y_k\ |\ i_1,\dots,i_{k-1}\right) &= \mathbb{E}\left[\left(Y_k - \mathbb{E}\left[Y_k\ |\ i_1,\dots,i_{k-1}\right]\right)^2\ |\ i_1,\dots,i_{k-1}\right] \\
&=\mathbb{E}\left[\left(Y_{k} - Y_{k-1}\right)^2\ |\ i_1,\dots,i_{k-1}\right] \\
&= q_k\cdot \left(p - \frac{p}{q_k}\right)^2 + \left(1-q_k\right)\cdot p^2\\
&= q_k\cdot p^2\left(1 - \frac{2}{q_k} + \frac{1}{q_k^2}\right) + \left(1-q_k\right)\cdot p^2\\
&= p^2\left(\frac{1}{q_k}-1\right)\\
&\leq p
\end{align*}
where we used (\ref{expectation_diff}) along with the fact that $q_k \geq p$.
Using the concentration inequality from 
Lemma~\ref{martingale_concentration} we obtain
\begin{align*}
\Pr\left[\sum\limits_{k=1}^b X_k \leq s^* - 1\right] 
&\leq \Pr\left[Y_b \leq s^* - 1 - b\cdot p\right] \\
&\leq e^{-(bp-s^*+1)^2/(2\left(b\cdot p + p\cdot \left(bp - s^* + 1\right)/3\right))}\\
&= e^{-(s^*-1)/(2(2+p/3))}\\
&\leq e^{-1/(2(2+1/9))}\\
&< 0.8 
\end{align*}
where we used the fact 
that $b p = 2 (s^* - 1)$,
$s^* \geq 2$ (otherwise the problem is trivial),
and 
$p = \frac{1}{2\widetilde{\kappa}+3} \leq \frac{1}{3}$. 
Therefore 
we conclude that the probability that we have not unregularized
the whole set $S^*$ after $b$ steps is at most $0.8$.
Since we can only have a Type 2 step if there is a regularized element in $S^*$ (this is immediate e.g. from (\ref{probability})), this implies that
with probability at least $0.2$ the number of Type 2 steps is at most $b=(s^*-1)(4\t{\kappa}+6)$.

\end{prevproof}

\subsection{Corollaries}
As the first corollary of Theorem~\ref{reg_theorem}, we show that it directly implies solution recovery bounds
similar to those of \cite{Zhang11}, 
while also improving the recovery bound by a constant factor.
\begin{corollary}[Solution recovery]
Given a function $f$ and an (unknown) $s^*$-sparse solution $x^*$,
such that the Restricted Gradient Optimal Constant at sparsity level $s$ is $\zeta$, i.e.
\[ \left|\langle\nabla f(x^*), y\rangle\right| \leq \zeta \left\Vert y\right\Vert_2 \]
for all $s$-sparse $y$ and
as long as 
\[ s \geq s^*\max \left\{4\widetilde{\kappa} + 7, 12\widetilde{\kappa} + 6\right\} \]
Algorithm~\ref{local_reg} ensures that
\[ f(x) \leq f(x^*) + \epsilon \] 
and
\begin{align*}
\left\Vert x - x^*\right\Vert_2 
& \leq \frac{\zeta}{\rho^-} \left(1 + \sqrt{1 + 2\epsilon\frac{\rho^-}{\zeta^2}}\right)\\
\end{align*}
For any $\theta > 0$ and $\epsilon \leq \frac{\zeta^2}{\rho^-}\theta(1 + \frac{\theta}{2})$,
this implies that 
\begin{align*}
\left\Vert x - x^*\right\Vert_2 
& \leq (2+\theta) \frac{\zeta}{\rho^-}
\end{align*}
\label{restricted_gradient_corollary}
\end{corollary}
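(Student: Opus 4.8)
The plan is to derive the recovery bound directly from the function-value guarantee of Theorem~\ref{reg_theorem} together with a single application of the Restricted Strong Convexity property based at the target point $x^*$. First I would invoke Theorem~\ref{reg_theorem}: since $s \geq s^*\max\{4\t{\kappa}+7,\,12\t{\kappa}+6\}$, with probability at least $1-\frac{1}{n}$ Algorithm~\ref{local_reg} returns an $s$-sparse $x$ with $f(x)\leq f(x^*)+\epsilon$. Everything below is conditioned on this event, so the final $\left\Vert x-x^*\right\Vert_2$ bound inherits the same success probability, and the first claimed inequality $f(x)\leq f(x^*)+\epsilon$ is immediate.

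Next I would apply RSC with base point $x^*$ and target point $x$. Since $\mathrm{supp}(x-x^*)\subseteq\mathrm{supp}(x)\cup\mathrm{supp}(x^*)$ has size at most $s+s^*$, the relevant strong convexity constant is exactly $\rho^-=\rho_{s+s^*}^-$, giving
\begin{align*}
f(x) \geq f(x^*) + \langle\nabla f(x^*), x-x^*\rangle + \frac{\rho^-}{2}\left\Vert x-x^*\right\Vert_2^2 .
\end{align*}
Rearranging, then using $f(x)-f(x^*)\leq\epsilon$ together with the RGOC bound $\left|\langle\nabla f(x^*),x-x^*\rangle\right|\leq\zeta\left\Vert x-x^*\right\Vert_2$ of Definition~\ref{RGOC} applied to the vector $x-x^*$, yields
\begin{align*}
\frac{\rho^-}{2}\left\Vert x-x^*\right\Vert_2^2 \leq \epsilon + \zeta\left\Vert x-x^*\right\Vert_2 .
\end{align*}

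Writing $u=\left\Vert x-x^*\right\Vert_2$, this is the quadratic inequality $\rho^- u^2 - 2\zeta u - 2\epsilon \leq 0$, whose larger root gives $u\leq\frac{\zeta+\sqrt{\zeta^2+2\rho^-\epsilon}}{\rho^-}=\frac{\zeta}{\rho^-}\left(1+\sqrt{1+2\epsilon\rho^-/\zeta^2}\right)$, which is the first stated bound. For the second, the hypothesis $\epsilon\leq\frac{\zeta^2}{\rho^-}\theta(1+\frac{\theta}{2})$ is exactly the statement $2\epsilon\rho^-/\zeta^2\leq 2\theta+\theta^2=(1+\theta)^2-1$, so $\sqrt{1+2\epsilon\rho^-/\zeta^2}\leq 1+\theta$ and hence $\left\Vert x-x^*\right\Vert_2\leq(2+\theta)\frac{\zeta}{\rho^-}$. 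I do not expect a genuine obstacle here; the only point requiring care is the sparsity bookkeeping — checking that $x-x^*$ is $(s+s^*)$-sparse so that the constant appearing in the RSC inequality is precisely $\rho^-$, and invoking the RGOC bound at a level covering $\mathrm{supp}(x-x^*)$. Everything else reduces to the elementary quadratic estimate above.
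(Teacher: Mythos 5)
Your proposal is correct and follows essentially the same route as the paper: invoke Theorem~\ref{reg_theorem} for $f(x)\leq f(x^*)+\epsilon$, apply restricted strong convexity at $x^*$ together with the RGOC bound to obtain the quadratic inequality in $\left\Vert x-x^*\right\Vert_2$, and solve it. Your extra remark about the sparsity bookkeeping (that $x-x^*$ is $(s+s^*)$-sparse and the RGOC level should cover its support) is a valid point of care that the paper glosses over, but it does not change the argument.
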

\begin{proof}
By strong convexity we have
\begin{align*}
 \epsilon 
& \geq f(x) - f(x^*) \\
& \geq \langle \nabla f(x^*), x - x^*\rangle + \frac{\rho^-}{2} \left\Vert x-x^* \right\Vert_2^2\\
& \geq -\zeta \left\Vert x - x^*\right\Vert_2 + \frac{\rho^-}{2} \left\Vert x-x^*\right\Vert_2^2
\end{align*}
therefore
\begin{align*}
\frac{\rho^-}{2} \left\Vert x-x^*\right\Vert_2^2 - \zeta \left\Vert x-x^*\right\Vert_2 - \epsilon \leq 0
\end{align*}
looking at which as a quadratic polynomial in $\left\Vert x-x^*\right\Vert_2$, it follows that 
\begin{align*}
 \left\Vert x-x^*\right\Vert_2 &
 \leq \frac{\zeta + \sqrt{\zeta^2+2\epsilon\rho^-}}{\rho^-}\\
& = \frac{\zeta}{\rho^-} \left(1 + \sqrt{1 + 2\epsilon\frac{\rho^-}{\zeta^2}}\right)\\
& = (2+\theta) \frac{\zeta}{\rho^-}
\end{align*}
by setting $\epsilon = \frac{\zeta^2}{\rho^-} \left(\theta + \frac{1}{2} \theta^2\right)$.
\end{proof}

The next corollary shows that our Theorem~\ref{reg_theorem} can be also used to obtain
support recovery results under a ``Signal-to-Noise'' condition given as a lower bound to $|x_{\min}^*|$.

\begin{corollary}[Support recovery]
As long as
\[ s \geq s^*\max \left\{4\widetilde{\kappa} + 7, 12\widetilde{\kappa} + 6\right\} \]
and 
$|x_{\min}^*| > \frac{\zeta}{\rho^-}$,
Algorithm~\ref{local_reg} 
with $\epsilon < -\frac{1}{2\rho^-}\zeta^2 + \frac{\rho^-}{2} (x_{\min}^*)^2 $
returns a solution $x$ with support $S$ such that
\[ S^* \subseteq S\]
\label{support_recovery}
\end{corollary}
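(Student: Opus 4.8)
The plan is to deduce support recovery from the objective-value guarantee of Theorem~\ref{reg_theorem} by a contradiction argument based on restricted strong convexity, with one ``missing'' coordinate of the target support isolated.

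\textbf{Setup.} The hypothesis on $s$ is exactly the one in Theorem~\ref{reg_theorem}, so that theorem produces (with probability at least $1-\tfrac1n$) an $s$-sparse output $x$ with $f(x)\le f(x^*)+\epsilon$; let $S=\mathrm{supp}(x)$. Suppose towards a contradiction that $S^*\not\subseteq S$, and fix some $j\in S^*\setminus S$, so that $x_j=0$ and hence $|x_j-x_j^*|=|x_j^*|\ge |x_{\min}^*|$. The goal is to show that this forces $f(x)-f(x^*)>\epsilon$, contradicting the value guarantee.

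\textbf{A coordinatewise lower bound on $f(x)-f(x^*)$.} Since $\mathrm{supp}(x-x^*)\subseteq S\cup S^*$ has size at most $s+s^*$, restricted strong convexity expanded at $x^*$ gives
\[ f(x)-f(x^*)\ \ge\ \langle\nabla f(x^*),x-x^*\rangle+\tfrac{\rho^-}{2}\|x-x^*\|_2^2\ =\ \sum_{i\in S\cup S^*}\Big(\nabla_i f(x^*)(x_i-x_i^*)+\tfrac{\rho^-}{2}(x_i-x_i^*)^2\Big). \]
For the index $j$ I keep the summand $-\nabla_j f(x^*)\,x_j^*+\tfrac{\rho^-}{2}(x_j^*)^2$ intact; for every other index $i$ I complete the square to bound its summand below by $-\tfrac{1}{2\rho^-}(\nabla_i f(x^*))^2$. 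Summing these and using the RGOC inequality of Definition~\ref{RGOC} (applied to $\nabla_{S\cup S^*} f(x^*)$, so $\zeta$ is understood at the relevant sparsity level) to get $\sum_{i\in (S\cup S^*)\setminus\{j\}}(\nabla_i f(x^*))^2\le \zeta^2-(\nabla_j f(x^*))^2$, I obtain
\[ f(x)-f(x^*)\ \ge\ -\nabla_j f(x^*)\,x_j^*+\tfrac{\rho^-}{2}(x_j^*)^2-\tfrac{1}{2\rho^-}\big(\zeta^2-(\nabla_j f(x^*))^2\big). \]
Completing the square in $\nabla_j f(x^*)$ rewrites the right-hand side as $-\tfrac{\zeta^2}{2\rho^-}+\tfrac{1}{2\rho^-}\big(\nabla_j f(x^*)-\rho^- x_j^*\big)^2$, and since $|\nabla_j f(x^*)|\le \zeta<\rho^-|x_{\min}^*|\le \rho^-|x_j^*|$ the squared term is at least $(\rho^-|x_{\min}^*|-\zeta)^2$, which after expansion gives $f(x)-f(x^*)\ge \tfrac{\rho^-}{2}(x_{\min}^*)^2-\zeta|x_{\min}^*|$.

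\textbf{Conclusion and the delicate point.} Combined with $f(x)\le f(x^*)+\epsilon$, this would already finish the argument for any $\epsilon<\tfrac{\rho^-}{2}(x_{\min}^*)^2-\zeta|x_{\min}^*|$. The only genuinely delicate step is matching the \emph{stated} threshold $\epsilon<-\tfrac{\zeta^2}{2\rho^-}+\tfrac{\rho^-}{2}(x_{\min}^*)^2$ exactly: the estimate above gives away a $\zeta|x_{\min}^*|$ term where one only wants to lose $\tfrac{\zeta^2}{2\rho^-}$, so to reach the clean condition in the statement one sharpens the bookkeeping — e.g.\ by also exploiting that the returned $x$ may be taken to be the $f$-minimizer on its own support $S$ (so $\nabla_S f(x)=0$ and the relevant gradient mass lives on $S^*\setminus S$, allowing the strong-convexity estimate to be run at $x$), or by feeding the improved $\ell_2$ bound of Corollary~\ref{restricted_gradient_corollary} together with $\|x-x^*\|_2\ge |x_{\min}^*|$ into the same computation. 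I expect this constant-tracking, rather than the overall structure, to be the main obstacle; everything else is a routine consequence of Theorem~\ref{reg_theorem}, restricted strong convexity, and the definition of the RGOC.
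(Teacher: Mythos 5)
Your overall structure --- invoke Theorem~\ref{reg_theorem}, suppose some $j\in S^*\setminus S$, and derive a contradiction from restricted strong convexity expanded at $x^*$ together with the RGOC --- is the same as the paper's. But there is a genuine gap: as you actually carry it out, you only obtain $f(x)-f(x^*)\ \geq\ \frac{\rho^-}{2}(x^*_{\min})^2-\zeta|x^*_{\min}|$, which is strictly weaker than the needed $\frac{\rho^-}{2}(x^*_{\min})^2-\frac{\zeta^2}{2\rho^-}$ whenever $\zeta>0$ (since $|x^*_{\min}|>\zeta/\rho^-$ gives $\zeta|x^*_{\min}|>\zeta^2/\rho^->\frac{\zeta^2}{2\rho^-}$); indeed, for $\zeta/\rho^-<|x^*_{\min}|<2\zeta/\rho^-$ your lower bound is vacuous while the stated threshold is positive, so the corollary as stated is not proved. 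The missing ingredient, which the paper's proof uses explicitly, is first-order optimality of the target on its own support: $\nabla_{S^*}f(x^*)=\vec{0}$ (``local optimality'' of $x^*$). With this, in your coordinatewise decomposition the cross term $-\nabla_j f(x^*)\,x_j^*$ vanishes identically (as do the linear terms at every $i\in S^*$), so the summand at $j$ contributes the full $\frac{\rho^-}{2}(x_j^*)^2$, only the indices in $S\setminus S^*$ pay a completed-square penalty, and $\left\Vert\nabla_{S\setminus S^*}f(x^*)\right\Vert_2^2\leq\zeta^2$ (now with $\zeta$ at sparsity level $s$ rather than $s+s^*$, matching the paper). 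This yields exactly $f(x)-f(x^*)\geq-\frac{\zeta^2}{2\rho^-}+\frac{\rho^-}{2}(x^*_{\min})^2$ and hence the stated admissible range of $\epsilon$.

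Neither of your two proposed patches closes this gap. Taking $x$ to be the $f$-minimizer on $S$ and running strong convexity at $x$ gives $\nabla_S f(x)=\vec{0}$, but the residual gradient $\nabla_{S^*\setminus S}f(x)$ is evaluated at $x$, not at $x^*$, so the RGOC of Definition~\ref{RGOC} does not bound it. And Corollary~\ref{restricted_gradient_corollary} inherits the same $\zeta\left\Vert x-x^*\right\Vert_2$ loss (its proof also discards the sign information in $\langle\nabla f(x^*),x-x^*\rangle$): under the stated $\epsilon$ it only gives $\left\Vert x-x^*\right\Vert_2<|x^*_{\min}|+\zeta/\rho^-$, which is perfectly compatible with $\left\Vert x-x^*\right\Vert_2\geq|x^*_{\min}|$, so no contradiction follows. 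The fix is the identity $\nabla_{S^*}f(x^*)=\vec{0}$ (an implicit hypothesis on $x^*$ invoked in the paper's proof), not sharper constant-tracking.
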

\begin{proof}
Let us suppose that $S^*\backslash S^t\neq \emptyset$.
By restricted strong convexity we have
\begin{align*}
 -\frac{1}{2\rho^-} \zeta^2 + \frac{\rho^-}{2}\left(x_{\min}^*\right)^2
& > \epsilon \\
& \geq f(x) - f(x^*) \\
&\geq \langle \nabla f(x^*), x-x^*\rangle + \frac{\rho^-}{2} \left\Vert x-x^*\right\Vert_2^2 \\
&\geq \langle \nabla f(x^*), x\rangle + \frac{\rho^-}{2}
		\left\Vert x_{S^t\backslash S^*}\right\Vert_2^2
	+ 	\frac{\rho^-}{2}\left\Vert x_{S^*\backslash S^t}^*\right\Vert_2^2\\ 
&\geq -\frac{1}{2\rho^-}\left\Vert \nabla_{S^t\backslash S^*} f(x^*)\right\Vert_2^2
	+ 	\frac{\rho^-}{2}\left\Vert x_{S^*\backslash S^t}^*\right\Vert_2^2\\
& \geq -\frac{1}{2\rho^-} \zeta^2 + \frac{\rho^-}{2}\left(x_{\min}^*\right)^2
\end{align*}
a contradiction.
Here we used the fact that by local optimality $\nabla_{S^*} f(x^*) = \vec{0}$,
the inequality $\langle u,v\rangle + \frac{\lambda}{2} \left\Vert v\right\Vert_2^2 \geq -\frac{1}{2\lambda}\left\Vert u\right\Vert_2^2$
for any vectors $u,v$ and scalar $\lambda > 0$,
and the fact that 
$\left\Vert \nabla_{S^t\backslash S^*} f(x^*)\right\Vert_2^2 \leq \zeta^2$ by Definition~\ref{RGOC}.
Therefore $S^*\subseteq S^t$.
\end{proof}

\section{Analysis of Orthogonal Matching Pursuit with Replacement (OMPR)}
\subsection{Overview and Main Theorem}
The OMPR algorithm was first described (under a different name) in~\cite{SSZ10}. It is an extension of OMP
but after each iteration some element is removed from $S^t$ so that the sparsity remains the same.
The algorithm description is in Algorithm~\ref{local}.

For each iteration $t$ of Algorithm~\ref{local}, we will define a solution
\[ \widetilde{x}^t = \underset{\mathrm{supp}(x)\subseteq S^t\cup S^*}{\mathrm{argmin}}\ f(x) \]
to be the optimal solution supported on $S^t\cup S^*$.
Furthermore, we let
$\overline{x}^{*}$
be the optimal $(s+s^*)$-sparse solution, i.e.
\[ \overline{x}^{*} = \underset{|\mathrm{supp}(x)|\leq s+s^*}{\mathrm{argmin}}\ f(x) \]
By definition, the following chain of inequalities holds
\begin{align*}
\underset{x\in\mathbb{R}^n}{\min}\ f(x) \leq f(\overline{x}^*) \leq f(\widetilde{x}^t) \leq \min\{f(x^t), f(x^*)\}
\end{align*}

We will assume that $s \leq 20 \t{\kappa} s^*$, as the other case is subsumed by Algorithm~\ref{local_reg}. Let us
also denote $\mu = \sqrt{\frac{s^*}{s}}$.

The following technical lemma is important for our approach, and roughly states that 
if there is significant $\ell_2$ norm difference between $x^t$ and $x^*$, at least one of $x^t,x^*$ is significantly larger
than $\widetilde{x}^t$ in function value.
Its importance lies on the fact that instead of directly applying strong convexity between $x^t$ and $x^*$, it gets a tighter bound
by making use of $\widetilde{x}^t$.
\begin{lemma}
For any function $f$ with RSC constant $\rho^-$ at sparsity level $s+s^*$ and any two solutions
$x^t$,$x^*$ with respective supports $S^t$, $S^*$ and 
sparsity levels $s$, $s^*$, we have that
\begin{align*}
\left(\sqrt{f(x^t)-f(\widetilde{x}^t)} + \sqrt{f(x^*)-f(\widetilde{x}^t)}\right)^2 
\geq \frac{\rho^-}{2} \left(\left\Vert x^*_{S^*\backslash S^t}\right\Vert_2^2 + \left\Vert x_{S^t\backslash S^*}^t\right\Vert_2^2\right)
\end{align*}
\label{lemma_strongconv}
\end{lemma}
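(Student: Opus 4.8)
The plan is to relate the left-hand side to the quantity $\|x^t - x^*\|_2^2$ via restricted strong convexity, and to do this in a way that exploits the intermediate optimum $\widetilde{x}^t$ rather than applying strong convexity directly between $x^t$ and $x^*$. The key observation is that $\widetilde{x}^t$ is the $(S^t\cup S^*)$-restricted minimizer of $f$, so $\nabla_{S^t\cup S^*} f(\widetilde{x}^t) = \vec 0$. Both $x^t$ and $x^*$ have support inside $S^t \cup S^*$, and the difference $x^t - \widetilde{x}^t$ (resp. $x^* - \widetilde{x}^t$) is supported on $S^t\cup S^*$, which has sparsity at most $s + s^*$. Therefore restricted strong convexity at level $s+s^*$ applies, and since the gradient term vanishes we get
\begin{align*}
f(x^t) - f(\widetilde{x}^t) \geq \frac{\rho^-}{2} \left\Vert x^t - \widetilde{x}^t \right\Vert_2^2, \qquad
f(x^*) - f(\widetilde{x}^t) \geq \frac{\rho^-}{2} \left\Vert x^* - \widetilde{x}^t \right\Vert_2^2.
\end{align*}

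**Next I would** take square roots and add, obtaining
$\sqrt{f(x^t)-f(\widetilde{x}^t)} + \sqrt{f(x^*)-f(\widetilde{x}^t)} \geq \sqrt{\tfrac{\rho^-}{2}}\left(\|x^t-\widetilde{x}^t\|_2 + \|x^*-\widetilde{x}^t\|_2\right)$. By the triangle inequality the right-hand side is at least $\sqrt{\tfrac{\rho^-}{2}}\,\|x^t - x^*\|_2$. Squaring both sides then yields
$\left(\sqrt{f(x^t)-f(\widetilde{x}^t)} + \sqrt{f(x^*)-f(\widetilde{x}^t)}\right)^2 \geq \frac{\rho^-}{2}\|x^t - x^*\|_2^2$. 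It remains only to observe that
$\|x^t - x^*\|_2^2 \geq \|x^*_{S^*\backslash S^t}\|_2^2 + \|x^t_{S^t\backslash S^*}\|_2^2$, which holds because on coordinates in $S^*\backslash S^t$ the vector $x^t$ vanishes (so the difference equals $x^*$ there), on coordinates in $S^t\backslash S^*$ the vector $x^*$ vanishes (so the difference equals $x^t$ there), and these two index sets are disjoint; the remaining coordinates only contribute nonnegatively.

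**I expect the main subtlety** to be simply making sure the supports line up so that restricted strong convexity is legitimately applicable at level $s+s^*$ — i.e. that $\mathrm{supp}(x^t - \widetilde{x}^t)$ and $\mathrm{supp}(x^* - \widetilde{x}^t)$ are both contained in $S^t \cup S^*$, which is immediate from the definitions of $\widetilde{x}^t$, $x^t$, and $x^*$ — and in handling the edge case where one of $f(x^t) - f(\widetilde{x}^t)$ or $f(x^*)-f(\widetilde{x}^t)$ could in principle be zero, which causes no trouble since all quantities are nonnegative by optimality of $\widetilde{x}^t$. Everything else is the triangle inequality and an elementary coordinate-splitting argument, so there is no real obstacle; the content of the lemma is entirely in the choice to route the strong-convexity argument through $\widetilde{x}^t$.
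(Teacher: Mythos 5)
Your proposal is correct and follows essentially the same route as the paper's proof: restricted strong convexity applied at $\widetilde{x}^t$ (whose restricted gradient vanishes) to each of $x^t$ and $x^*$, then the triangle inequality, and finally the coordinate-splitting bound $\left\Vert x^t - x^*\right\Vert_2^2 \geq \left\Vert x^*_{S^*\backslash S^t}\right\Vert_2^2 + \left\Vert x^t_{S^t\backslash S^*}\right\Vert_2^2$. No gaps.
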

The proof can be found in Appendix~\ref{proof_lemma_strongconv}.
The following theorem is the main result of this section. Its strength lies in its generality, and
various useful corollaries can be directly extracted from it.
\begin{theorem}
Given a function $f$, an (unknown) $s^*$-sparse solution $x^*$, a desired solution sparsity level $s$, and error
parameters $\epsilon > 0$ and 
$0 < \theta < 1$, 
Algorithm~\ref{local} returns an $s$-sparse solution
$x$ such that\\
\textbullet\ If $\widetilde{\kappa}\sqrt{\frac{s^*}{s}} \leq 1$, then
\begin{align*}
f(x) \leq f(x^*) + \epsilon
\end{align*}
in $O\left(\sqrt{ss^*}\log\frac{f(x^0)-f(x^*)}{\epsilon}\right)$ iterations.\\
\textbullet\ If $1 < \widetilde{\kappa}\sqrt{\frac{s^*}{s}} < 2 - \theta$, then
\begin{align*}
f(x) \leq f(x^*) + B
\end{align*}
where 
\begin{align*}
B = \epsilon + \frac{4(1-\theta)\left(\widetilde{\kappa}\sqrt{\frac{s^*}{s}}-1\right)}{\left(2-\widetilde{\kappa}\sqrt{\frac{s^*}{s}}-\theta\right)^2} (f(x^*) - f(\overline{x}^*))
\end{align*}
in $O\left(\frac{\sqrt{ss^*}}{\theta} \log \frac{f(x^0)-f(x^*)}
		{ B}
		\right)$
iterations.
\label{local_theorem}
\end{theorem}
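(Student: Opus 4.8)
The plan is to analyze a single iteration of Algorithm~\ref{local} (OMPR) and show that the function value decreases geometrically until it reaches a floor determined by $f(x^*)$ plus the error term $B$. Fix an iteration $t$, let $x^t$ be the current iterate with support $S^t$, and suppose $f(x^t) > f(x^*) + B$ (so in particular $S^* \not\subseteq S^t$, since otherwise $f(x^t) \le f(x^*)$). I want to produce a lower bound on the one-step progress $f(x^t) - f(x^{t+1})$ that is proportional to $f(x^t) - f(x^*)$ up to the additive floor. The element $i$ inserted is chosen to maximize $|\nabla_i f(x^t)|$ and $j$ is chosen to minimize $|x^t_j|$; smoothness at sparsity level $2$ gives the standard bounds that inserting $i$ decreases $f$ by at least $\frac{(\nabla_i f(x^t))^2}{2\rho_2^+}$ while zeroing $x^t_j$ increases $f$ by at most $\frac{\rho_2^+}{2}(x^t_j)^2$, and since $x^t$ is an $S^t$-restricted minimizer we also have $\nabla_{S^t} f(x^t) = \vec 0$. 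As is standard for OMPR-type arguments, $|\nabla_i f(x^t)|^2 \ge \frac{1}{|S^*\setminus S^t|}\|\nabla_{S^*\setminus S^t} f(x^t)\|_2^2$ and $(x^t_j)^2 \le \frac{1}{|S^t|}\|x^t_{S^t\setminus S^*}\|_2^2 \le \frac{1}{s}\|x^t_{S^t\setminus S^*}\|_2^2$ (using that there are at least $|S^t\setminus S^*| \ge s - s^*$ small coordinates, but more simply averaging over all of $S^t$).

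The key new ingredient, replacing a naive application of strong convexity between $x^t$ and $x^*$, is Lemma~\ref{lemma_strongconv}: writing $\widetilde x^t$ for the minimizer on $S^t\cup S^*$, we get $\frac{\rho^-}{2}(\|x^*_{S^*\setminus S^t}\|_2^2 + \|x^t_{S^t\setminus S^*}\|_2^2) \le \big(\sqrt{f(x^t)-f(\widetilde x^t)} + \sqrt{f(x^*)-f(\widetilde x^t)}\big)^2$. I would also bound $\|\nabla_{S^*\setminus S^t} f(x^t)\|_2^2$ from below: since $\nabla_{S^t} f(x^t) = 0$, the gradient restricted to $S^*\setminus S^t$ is the full gradient on $(S^t\cup S^*)\setminus S^t$, and by strong convexity/smoothness between $x^t$ and $\widetilde x^t$ on the support $S^t\cup S^*$ one gets $\|\nabla_{S^*\setminus S^t} f(x^t)\|_2^2 \ge 2\rho^-(f(x^t) - f(\widetilde x^t))$ (the RSC inequality applied at $\widetilde x^t$ versus $x^t$, minimized over the step). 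Assembling these, the progress is at least
\[
f(x^t) - f(x^{t+1}) \ge \frac{\rho^-}{|S^*\setminus S^t|\,\rho_2^+}\big(f(x^t)-f(\widetilde x^t)\big) - \frac{\rho_2^+}{2s}\|x^t_{S^t\setminus S^*}\|_2^2 ,
\]
and then I substitute the Lemma~\ref{lemma_strongconv} bound for $\|x^t_{S^t\setminus S^*}\|_2^2$, writing everything in terms of $a := \sqrt{f(x^t)-f(\widetilde x^t)}$ and $c := \sqrt{f(x^*)-f(\widetilde x^t)}$, and recalling $\widetilde\kappa = \rho_2^+/\rho^-$, $\mu = \sqrt{s^*/s}$, and $|S^*\setminus S^t| \le s^*$. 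After this substitution the progress lower bound becomes a quadratic form in $a$ and $c$: roughly $f(x^t)-f(x^{t+1}) \gtrsim \frac{1}{s^*\widetilde\kappa}\big(a^2 - \tfrac{\mu^2\widetilde\kappa^2}{2}(a+c)^2\big) = \frac{\rho^-}{s^*\rho_2^+}\big((1-\tfrac{(\mu\widetilde\kappa)^2}{2})a^2 - (\mu\widetilde\kappa)^2 ac - \tfrac{(\mu\widetilde\kappa)^2}{2}c^2\big)$ up to constants.

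The heart of the argument — and what I expect to be the main obstacle — is the case analysis on $\mu\widetilde\kappa$ that turns this quadratic-form bound into a usable geometric-decay statement. When $\mu\widetilde\kappa \le 1$ the coefficient of $a^2$ dominates and one can absorb the $c$-terms (using $c^2 = f(x^*) - f(\widetilde x^t) \le f(x^*) - f(\overline x^*)$, which in this regime is itself controlled), yielding $f(x^{t+1}) - f(\widetilde x^t) \le (1 - \Omega(1/(s^*\widetilde\kappa))) (f(x^t) - f(\widetilde x^t))$ and hence, since $f(\widetilde x^t)$ can only decrease and $\le f(x^*)$, exponential convergence of $f(x^t)$ to $f(x^*)$; the iteration count $O(\sqrt{ss^*}\log\frac{f(x^0)-f(x^*)}{\epsilon})$ follows because $s^*\widetilde\kappa \le s^* \cdot \frac{1}{\mu} = \sqrt{ss^*}$ in this regime. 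When $1 < \mu\widetilde\kappa < 2-\theta$, the coefficient of $a^2$ is still positive but the cross term is larger; completing the square in $a$ (treating $c$ as fixed) shows that the progress is positive — hence $f$ strictly decreases, so the algorithm's termination condition guarantees it makes progress — as long as $a$ exceeds a threshold that is a constant multiple of $c$, and solving the resulting quadratic inequality for the fixed point pins down exactly the claimed $B$ with denominator $(2 - \mu\widetilde\kappa - \theta)^2$ and numerator proportional to $(1-\theta)(\mu\widetilde\kappa - 1)$; the $1/\theta$ in the iteration count comes from the contraction factor degrading to $1 - \Omega(\theta/(s^*\widetilde\kappa)) = 1 - \Omega(\theta/\sqrt{ss^*})$ near the boundary of the regime. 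The bookkeeping to match the stated constants exactly, and to verify the monotone-decrease/termination interplay (the algorithm stops as soon as $f(x^{t+1}) \ge f(x^t)$, so one must argue that whenever $f(x^t)$ is above the floor the progress bound is strictly positive), is the delicate part; the rest is the routine geometric-series summation for the iteration bound.
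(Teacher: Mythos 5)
Your high-level skeleton (one-step smoothness bound for the insert/remove swap, restricted strong convexity, Lemma~\ref{lemma_strongconv}, then a case analysis on $\mu\widetilde{\kappa}$ with a fixed-point/contradiction argument) is the paper's, but the specific progress bound you assemble is too weak to carry it, and the failure is precisely in the regime the second bullet addresses. You decouple the two halves: you lower-bound $\left\Vert \nabla_{S^*\backslash S^t} f(x^t)\right\Vert_2^2 \geq 2\rho^-\left(f(x^t)-f(\widetilde{x}^t)\right)$ and separately subtract the full removal penalty, arriving (with $a^2=f(x^t)-f(\widetilde{x}^t)$, $c^2=f(x^*)-f(\widetilde{x}^t)$, $\mu=\sqrt{s^*/s}$) at a bound of the shape $\frac{1}{|S^*\backslash S^t|}\left(\frac{1}{\widetilde{\kappa}}a^2-\Theta(\mu^2\widetilde{\kappa})(a+c)^2\right)$. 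For $\mu\widetilde{\kappa}>1$ this is negative for \emph{all} $a,c\geq 0$ (even with $c=0$, since $\frac{1}{\widetilde{\kappa}}<\mu^2\widetilde{\kappa}$ exactly when $\mu\widetilde{\kappa}>1$), so no completing-the-square or fixed-point computation on top of it can produce a positive contraction or the floor $B\propto\frac{(1-\theta)(\mu\widetilde{\kappa}-1)}{(2-\mu\widetilde{\kappa}-\theta)^2}$. It also fails to give the first bullet cleanly: for $\mu\widetilde{\kappa}\leq 1$ the cross terms leave a residual floor proportional to $c^2\leq f(x^*)-f(\overline{x}^*)$, whereas the theorem's first regime has no noise-dependent error and no degradation as $\mu\widetilde{\kappa}\to 1$.

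The missing idea is the coupling that makes the dangerous terms cancel. The paper applies RSC between $x^t$ and $x^*$ (not $\widetilde{x}^t$), keeps the decomposition $\|x^*-x^t\|_2^2\geq\|x^*_{S^*\backslash S^t}\|_2^2+\|x^t_{S^t\backslash S^*}\|_2^2$, and uses $\langle u,v\rangle+\frac{\lambda}{2}\|v\|_2^2\geq-\frac{1}{2\lambda}\|u\|_2^2$ with the tuned choice $\lambda=\mu\rho^+$. This yields a lower bound on $\|\nabla_{S^*\backslash S^t}f(x^t)\|_2^2$ that carries a \emph{positive} $\frac{\rho^-}{2}\|x^t_{S^t\backslash S^*}\|_2^2$ term cancelling most of the removal penalty, so after combining, both norm terms appear only with weight $\frac{\mu}{2}(\mu\rho^+-\rho^-)\propto(\mu\widetilde{\kappa}-1)$, and the coefficient in front of $f(x^t)-f(x^*)$ is $\mu$ rather than your $1/\widetilde{\kappa}$. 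Only then is Lemma~\ref{lemma_strongconv} invoked, on that $(\mu\widetilde{\kappa}-1)$-weighted combination; this is why progress stays positive whenever $a/c$ exceeds the threshold $\frac{\mu\widetilde{\kappa}-\theta}{2-\mu\widetilde{\kappa}-\theta}$ and why the floor vanishes as $\mu\widetilde{\kappa}\to1$. Two smaller points: your bound $(x^t_j)^2\leq\frac{1}{s}\|x^t_{S^t\backslash S^*}\|_2^2$ does not follow from averaging over $S^t$ — the correct chain is $(x^t_j)^2\leq\frac{1}{|S^t\backslash S^*|}\|x^t_{S^t\backslash S^*}\|_2^2$ followed by $\frac{|S^*\backslash S^t|}{|S^t\backslash S^*|}\leq\frac{s^*}{s}$ — and the termination issue you flag is handled in the paper by arguing by contradiction: assuming the final error exceeds $B$ forces $a/c$ to stay above the threshold at every iteration, so the $\left(1-\frac{\mu\theta}{|S^*\backslash S^t|}\right)$ contraction holds throughout, giving the stated iteration count.
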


\subsection{Progress Lemma and Theorem Proof}
The main ingredient needed to prove Theorem~\ref{local_theorem} is the following lemma,
which bounds the progress of Algorithm~\ref{local_reg} in one iteration. 
\begin{lemma}[OMPR Progress Lemma]
We can bound the progress of one step of the algorithm by distinguishing the following three cases:\\
\textbullet\ If $\mu\t{\kappa} \leq 1$, then
\begin{align*}
& f(x^{t+1}) - f(x^*) \leq \left(f(x^t) - f(x^*)\right) \left(1 - \frac{\mu}{|S^*\backslash S^t|}\right)
\end{align*}
\textbullet\ If $\mu\t{\kappa} > 1$ and $f(x^*) = f(\t{x}^t)$, then
\begin{align*}
f(x^{t+1}) - f(x^*) \leq \left(f(x^t) - f(x^*)\right) 
\cdot \left(1 - \frac{\mu}{|S^*\backslash S^t|}\left(2 - \mu\t{\kappa}\right)\right)
\end{align*}
\textbullet\ If $\mu\t{\kappa} > 1$ and $f(x^*) > f(\t{x}^t)$, then
\begin{align*}
f(x^{t+1}) - f(x^*) \leq \left(f(x^t) - f(x^*)\right)
\cdot\left(1 - \frac{\mu}{|S^*\backslash S^t|}\left(2 - \mu\t{\kappa} - \frac{2(\mu\t{\kappa}-1)}{\sqrt{\frac{f(x^t)-f(\widetilde{x}^t)}{f(x^*)-f(\widetilde{x}^t)}}-1}\right)\right)
\end{align*}
\label{exhaustive_technical}
\end{lemma}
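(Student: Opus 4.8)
The plan is to derive all three bounds from a single one-step estimate, specialized according to the sign of $\mu\t{\kappa}-1$ (and, in the $\mu\t{\kappa}>1$ case, according to whether $f(x^*)=f(\t{x}^t)$). Throughout write $\rho^+=\rho_2^+$, $\rho^-=\rho_{s+s^*}^-$, $\mu=\sqrt{s^*/s}$, $k=|S^*\setminus S^t|$, $W=\|x^t_{S^t\setminus S^*}\|_2^2$, $V=\|x^*_{S^*\setminus S^t}\|_2^2$, and $D=f(x^t)-f(x^*)$, $E=f(x^t)-f(\t{x}^t)$, $F=f(x^*)-f(\t{x}^t)$, so that $E=D+F$ and $D,E,F\ge0$. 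One may assume $S^*\setminus S^t\neq\emptyset$ (otherwise $x^t$ is an $S^t$-restricted minimizer with $S^*\subseteq S^t$, hence $f(x^t)\le f(x^*)$ and the bounds hold vacuously), whence $k\ge1$ and, using $s\ge s^*$, $|S^t\setminus S^*|\ge\frac{s}{s^*}k=k/\mu^2$. The first step is the one-line estimate
\begin{align*}
f(x^{t+1})\ \le\ f(x^t)-\frac{\big(\nabla_i f(x^t)\big)^2}{2\rho^+}+\frac{\rho^+}{2}(x_j^t)^2 ,
\end{align*}
valid because $x^{t+1}$ minimizes $f$ over $S^{t+1}=S^t\cup\{i\}\setminus\{j\}$, so $f(x^{t+1})\le f\big(x^t-x_j^t\vec{1}_j+\eta\vec{1}_i\big)$ for all $\eta$; expanding the right-hand side by $\rho_2^+$-smoothness (the displacement is $2$-sparse), using $\nabla_j f(x^t)=0$ (since $x^t$ is an $S^t$-restricted minimizer and $j\in S^t$) and $i\neq j$, and minimizing over $\eta$.

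The second step controls the two terms. Since $i$ maximizes $|\nabla_\cdot f(x^t)|$ over $[n]\setminus S^t\supseteq S^*\setminus S^t$, $\big(\nabla_i f(x^t)\big)^2\ge\frac1k\|\nabla_{S^*\setminus S^t}f(x^t)\|_2^2$; since $j$ minimizes $|x_\cdot^t|$ over $S^t$, $(x_j^t)^2\le\frac{1}{|S^t\setminus S^*|}W\le\frac{\mu^2}{k}W$. To lower-bound the gradient norm I apply restricted strong convexity at level $s+s^*$ between $x^t$ and $x^*$, use $\nabla_{S^t}f(x^t)=0$ together with $\|x^*-x^t\|_2^2\ge W+V$, and invoke Young's inequality with a free parameter $\lambda>0$ on the cross term $\langle\nabla_{S^*\setminus S^t}f(x^t),x^*_{S^*\setminus S^t}\rangle$, obtaining
\begin{align*}
\|\nabla_{S^*\setminus S^t}f(x^t)\|_2^2\ \ge\ \frac{2D}{\lambda}+\frac{\rho^-}{\lambda}\,W+\frac{\rho^-\lambda-1}{\lambda^2}\,V .
\end{align*}
Substituting the three bounds into the one-line estimate gives, for every $\lambda>0$,
\begin{align*}
f(x^{t+1})-f(x^t)\ \le\ \frac{1}{k}\left[-\frac{D}{\rho^+\lambda}+\left(\frac{\rho^+\mu^2}{2}-\frac{\rho^-}{2\rho^+\lambda}\right)W+\frac{1-\rho^-\lambda}{2\rho^+\lambda^2}\,V\right].
\end{align*}
The last ingredient is Lemma~\ref{lemma_strongconv}, i.e. $W+V\le\frac{2}{\rho^-}(\sqrt{E}+\sqrt{F})^2$.

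Now I case-split. If $\mu\t{\kappa}\le1$, take $\lambda=1/\rho^-$: the $V$-coefficient vanishes and the $W$-coefficient $\frac{\rho^+\mu^2}{2}-\frac{(\rho^-)^2}{2\rho^+}$ is $\le0$ (since $\rho^+\mu\le\rho^-$), so $f(x^{t+1})-f(x^t)\le-\frac{D}{\t{\kappa}k}$, hence $f(x^{t+1})-f(x^*)\le D(1-\frac{1}{\t{\kappa}k})\le D(1-\frac{\mu}{k})$, the first bound. If $\mu\t{\kappa}>1$, take $\lambda=1/(\rho^+\mu)$; one checks this is exactly the value making the $W$- and $V$-coefficients coincide, both equal to $\frac{\mu}{2}(\rho^+\mu-\rho^-)>0$, that $\lambda<1/\rho^-$ holds precisely because $\mu\t{\kappa}>1$, and that the $D$-coefficient becomes $-\mu$. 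Applying $W+V\le\frac{2}{\rho^-}(\sqrt{E}+\sqrt{F})^2$ then yields
\begin{align*}
f(x^{t+1})-f(x^*)\ \le\ D-\frac{\mu}{k}\Big(D-(\mu\t{\kappa}-1)(\sqrt{E}+\sqrt{F})^2\Big).
\end{align*}
If $f(x^*)=f(\t{x}^t)$ then $F=0$, $E=D$, $(\sqrt{E}+\sqrt{F})^2=D$, and this equals $D\big(1-\frac{\mu}{k}(2-\mu\t{\kappa})\big)$, the second bound. If $f(x^*)>f(\t{x}^t)$, the elementary identity $D+\frac{2D}{\sqrt{E/F}-1}=(\sqrt{E}+\sqrt{F})^2$ (immediate from $D=E-F$) rewrites the display as $D\big(1-\frac{\mu}{k}(2-\mu\t{\kappa}-\frac{2(\mu\t{\kappa}-1)}{\sqrt{E/F}-1})\big)$, the third bound.

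The main obstacle is the regime $\mu\t{\kappa}>1$: the straightforward route — fixing $\lambda=1/\rho^-$, or bounding $(x_j^t)^2$ without separately tracking $W$ and $V$ — only produces a contraction when $\mu\t{\kappa}$ is bounded below $1$, and in particular never reaches the claimed factor $2-\mu\t{\kappa}$. The crux is to keep the Young parameter $\lambda$ free in the strong-convexity step and choose it so the coefficients of $W$ and $V$ become equal, which is what lets Lemma~\ref{lemma_strongconv} be applied to the \emph{combined} quantity $W+V$ rather than to each term separately; that requirement forces $\lambda=1/(\rho^+\mu)$, and the coincidences that follow (the $D$-coefficient collapsing exactly to $-\mu$, and $\lambda<1/\rho^-$ being equivalent to $\mu\t{\kappa}>1$) are precisely what make the three stated bounds come out. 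After that, only the routine $\sqrt{E/F}$ rewrite in the third case remains.
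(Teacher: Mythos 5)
Your proof is correct and follows essentially the same route as the paper: the same one-step smoothness estimate, the same restricted-strong-convexity/Young step with the same parameter choice (your $\lambda=1/(\rho^+\mu)$ is exactly the paper's Young parameter $\mu\rho^+$) in the crucial case $\mu\t{\kappa}>1$, the same appeal to Lemma~\ref{lemma_strongconv} applied to $\left\Vert x^*_{S^*\backslash S^t}\right\Vert_2^2+\left\Vert x^t_{S^t\backslash S^*}\right\Vert_2^2$, and the same closing algebra. The only cosmetic deviation is in the case $\mu\t{\kappa}\le 1$, where you take $\lambda=1/\rho^-$ and obtain the slightly stronger factor $1-\frac{1}{\t{\kappa}\,|S^*\backslash S^t|}$ before relaxing to $1-\frac{\mu}{|S^*\backslash S^t|}$, whereas the paper keeps the single parameter choice and simply drops a nonnegative term.
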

\begin{proof}
First of all, if $S^*\subseteq S^t$ then, since $x^t$ is an $S^t$-restricted minimizer, we have
$f(x^T) \leq f(x^t) \leq f(x^*)$ and we are done. So suppose otherwise, i.e. $S^*\backslash S^t\neq \emptyset$ and $f(x^t) > f(x^*)$.
Let $i = \underset{i\notin S^t}{\mathrm{argmax}}\ \left|\nabla_i f(x^t)\right|$ and 
$j = \underset{j\in S^t}{\mathrm{argmin}}\ \left|x_j^t\right|$.
By definition of OMPR (Algorithm~\ref{local}) and restricted smoothness of $f$, we have 
\begin{equation}
\begin{aligned}
f(x^{t+1})
& \leq \underset{\eta\in\mathbb{R}}{\min}\ f(x^t +\eta\vec{1}_i- x_j^t \vec{1}_j)\\
    & \leq \underset{\eta\in\mathbb{R}}{\min}\ f(x^t) + \langle \nabla f(x^t), \eta \vec{1}_i -x_j^t\vec{1}_j\rangle + \frac{\rho^+}{2} \left\Vert \eta \vec{1}_i - x_j^t \vec{1}_j\right\Vert_2^2\\
    & = \underset{\eta\in\mathbb{R}}{\min}\ f(x^t) + \eta \nabla_i f(x^t) + \frac{\rho^+}{2} \eta^2 + \frac{\rho^+}{2} (x_j^t)^2 \\
    & = f(x^t) - \frac{\left(\nabla_i f(x^t)\right)^2}{2\rho^+} + \frac{\rho^+}{2} (x_j^t)^2 \\
    & \leq f(x^t) - \frac{\left\Vert\nabla_{S^*\backslash S^t} f(x^t)\right\Vert_2^2}{2\rho^+ |S^*\backslash S^t|} + \frac{\rho^+}{2|S^t\backslash S^*|} \left\Vert x_{S^t\backslash S^*}^t\right\Vert_2^2 
\end{aligned}	
\label{eq:smoothness_prog}
\end{equation}
where the second to last equality follows from the fact that $\nabla_j f(x^t) = \vec{0}$, as $x^t$ is an $S^t$-restricted minimizer of $f$,
and the last inequality since 
\[ (x_j^t)^2 = \underset{j\in S^t\backslash S^*}{\mathrm{min}}\ (x_j^t)^2 \leq \frac{\left\Vert x_{S^t\backslash S^*}^t\right\Vert_2^2}{|S^t\backslash S^*|} \]
Re-arranging (\ref{eq:smoothness_prog}), we get
\begin{align}
     |S^*\backslash S^t|(f(x^t) - f(x^{t+1}))
	\geq \frac{\left\Vert\nabla_{S^*\backslash S^t} f(x^t)\right\Vert_2^2}{2\rho^+}  - \frac{\rho^+}{2} \frac{|S^*\backslash S^t|}{|S^t\backslash S^*| } \left\Vert
	 x_{S^t\backslash S^*}^t\right\Vert_2^2 
	 \label{eq:progress_simple}
\end{align}
On the other hand, by restricted strong convexity of $f$,
\begin{equation}
\begin{aligned}
f(x^*) - f(x^t) 
& \geq \langle \nabla f(x^t), x^* - x^t\rangle + \frac{\rho^-}{2} \left\Vert x^*-x^t \right\Vert_2^2\\
& = \langle \nabla_{S^*\backslash S^t} f(x^t), x_{S^*\backslash S^t}^*\rangle + \frac{\rho^-}{2} \left\Vert x^*-x^t \right\Vert_2^2\\
& \geq \langle \nabla_{S^*\backslash S^t} f(x^t), x_{S^*\backslash S^t}^*\rangle 
+ \frac{\rho^-}{2} \left\Vert x_{S^*\backslash S^t}^* \right\Vert_2^2
+ \frac{\rho^-}{2} \left\Vert x_{S^t\backslash S^*}^t \right\Vert_2^2\\
& \geq \langle \nabla_{S^*\backslash S^t} f(x^t), x_{S^*\backslash S^t}^*\rangle
+ \frac{\mu\rho^+}{2} \left\Vert x_{S^*\backslash S^t}^* \right\Vert_2^2
+ \frac{\rho^- - \mu\rho^+}{2} \left\Vert x_{S^*\backslash S^t}^* \right\Vert_2^2
+ \frac{\rho^-}{2} \left\Vert x_{S^t\backslash S^*}^t \right\Vert_2^2\\
& \geq -\frac{1}{2\mu \rho^+} \left\Vert \nabla_{S^*\backslash S^t} f(x^t) \right\Vert_2^2
 + \frac{\rho^- - \mu\rho^+}{2} \left\Vert x_{S^*\backslash S^t}^* \right\Vert_2^2
+ \frac{\rho^-}{2} \left\Vert x_{S^t\backslash S^*}^t \right\Vert_2^2
\end{aligned}
\label{eq:strconv_prog}
\end{equation}
where the first equality follows from the fact that $\nabla_{S^t} f(x^t) = \vec{0}$ as $x^t$ is an $S^t$-restricted minimizer of $f$
and the last inequality from using the fact that
$\langle u, v\rangle 
+ \frac{\lambda}{2}\left\Vert v\right\Vert_2^2
\geq -\frac{1}{2\lambda} \left\Vert u\right\Vert_2^2$
for any $\lambda > 0$. 

Re-arranging (\ref{eq:strconv_prog}), we get
\begin{align*}
\frac{1}{2\mu\rho^+} \left\Vert \nabla_{S^*\backslash S^t} f(x^t) \right\Vert_2^2 
\geq  f(x^t) - f(x^*) 
- \frac{\mu \rho^+ - \rho^-}{2} \left\Vert x_{S^*\backslash S^t}^* \right\Vert_2^2
+ \frac{\rho^-}{2} \left\Vert x_{S^t\backslash S^*}^t \right\Vert_2^2
\end{align*}
By substituting this into (\ref{eq:progress_simple}),
\begin{align*}
     & |S^*\backslash S^t|(f(x^t) - f(x^{t+1}))\\
     & \geq \mu\left(f(x^t) - f(x^*) \right)
	- \frac{\mu^2 \rho^+ - \mu\rho^-}{2} \left\Vert x_{S^*\backslash S}^* \right\Vert_2^2
	+ \frac{\mu \rho^-}{2} \left\Vert x_{S^t\backslash S^*}^t \right\Vert_2^2
	- \frac{\rho^+}{2} \frac{|S^*\backslash S^t|}{|S^t\backslash S^*| } \left\Vert x_{S^t\backslash S^*}^t\right\Vert_2^2 
\end{align*}
Note that by our choice of $\mu$ and since $s^*\leq s$,
\[ \mu^2\rho^+ = \rho^+ \frac{s^*}{s} 
\geq 
\rho^+ \frac{s^* - |S^*\cap S^t|}{s - |S^*\cap S^t|} 
= \rho^+ \frac{|S^*\backslash S^t|}{|S^t\backslash S^*|} \]
and so
\begin{align*}
	& \mu\left(f(x^t) - f(x^*) \right)
	- \frac{\mu^2 \rho^+ - \mu\rho^-}{2} \left\Vert x_{S^*\backslash S}^* \right\Vert_2^2
	+ \frac{\mu \rho^-}{2} \left\Vert x_{S^t\backslash S^*}^t \right\Vert_2^2
	- \frac{\rho^+}{2} \frac{|S^*\backslash S^t|}{|S^t\backslash S^*| } \left\Vert x_{S^t\backslash S^*}^t\right\Vert_2^2 \\
	& \geq \mu (f(x^t) - f(x^*)) 
	- \frac{\mu}{2}\left(\mu \rho^+-\rho^-\right) \left(\left\Vert x_{S^*\backslash S^t}^* \right\Vert_2^2
	+ \left\Vert x_{S^t\backslash S^*}^t \right\Vert_2^2\right)
\end{align*}
concluding that
\begin{align*}
    |S^*\backslash S^t|(f(x^t) - f(x^{t+1}))
	\geq \mu (f(x^t) - f(x^*)) 
	- \frac{\mu}{2}\left(\mu \rho^+-\rho^-\right) \left(\left\Vert x_{S^*\backslash S^t}^* \right\Vert_2^2
	+ \left\Vert x_{S^t\backslash S^*}^t \right\Vert_2^2\right)
\end{align*}
For $\mu\t{\kappa} \leq 1 \Leftrightarrow \mu \rho^+-\rho^- \leq 0$, this automatically implies that
\begin{align*}
	& f(x^t) - f(x^{t+1}) \geq \frac{\mu}{|S^*\backslash S^t|} \left(f(x^t) - f(x^*)\right)\\
	\Leftrightarrow & f(x^{t+1})-f(x^*) \leq \left(1-\frac{\mu}{|S^*\backslash S^t|}\right) \left(f(x^t) - f(x^*)\right)
\end{align*}
On the other hand, if $\mu\t{\kappa} > 1$ we have
\begin{align*}
	|S^*\backslash S^t|\left(f(x^t) - f(x^{t+1})\right) 
	& \geq \mu (f(x^t) - f(x^*)) 
	-\frac{\mu}{2}\left(\mu \rho^+ - \rho^-\right) \left(\left\Vert x_{S^*\backslash S^t}^* \right\Vert_2^2
	+ \left\Vert x_{S^t\backslash S^*}^t \right\Vert_2^2\right) \\
	& \geq \mu (f(x^t) - f(x^*)) 
	- 
	\mu \left(\mu \t{\kappa} - 1\right) \left(\sqrt{f(x^t)-f(\widetilde{x}^t)} + \sqrt{f(x^*)-f(\widetilde{x}^t)}\right)^2
\end{align*}
where we used Lemma~\ref{lemma_strongconv}. 
If $f(x^*) = f(\widetilde{x}^t)$ it is immediate that
\begin{align*}
f(x^{t+1}) - f(x^*) \leq \left(1-\frac{\mu}{|S^*\backslash S^t|}\left(2-\mu\t{\kappa}\right)\right) \left(f(x^t)-f(x^*)\right) 
\end{align*}
so let us from now on assume that $f(x^*) > f(\widetilde{x}^t)$
and set $a = f(x^t) - f(\widetilde{x}^t)$,
$a' = f(x^{t+1}) - f(\widetilde{x}^t)$,
and $b = f(x^*) - f(\widetilde{x}^t)$. From what we have concluded before
\[ |S^*\backslash S^t|\left(a - a'\right) \geq \mu (a-b) - \mu(\mu\t{\kappa}-1)\left(\sqrt{a} + \sqrt{b}\right)^2\]
or equivalently
\begin{align*}
a' - b 
& \leq \left(1 - \frac{\mu}{|S^*\backslash S^t|}\right) (a-b) + \frac{\mu}{|S^*\backslash S^t|} \left(\mu\t{\kappa}-1\right) \left(\sqrt{a} + \sqrt{b}\right)^2\\
& = (a-b)\left(1 - \frac{\mu}{|S^*\backslash S^t|}\left(1 - (\mu\t{\kappa}-1) \frac{\left(\sqrt{a} + \sqrt{b}\right)^2}{a-b}\right)\right)\\
& = (a-b)\left(1 - \frac{\mu}{|S^*\backslash S^t|}\left(1 - (\mu\t{\kappa}-1) \frac{\sqrt{\frac{a}{b}} + 1}{\sqrt{\frac{a}{b}} - 1}\right)\right)\\
& = (a-b)\left(1 - \frac{\mu}{|S^*\backslash S^t|}\left(1 - (\mu\t{\kappa}-1) \left(1 + \frac{2}{\sqrt{\frac{a}{b}}-1}\right)\right)\right)\\
& = (a-b)\left(1 - \frac{\mu}{|S^*\backslash S^t|} \left(2 - \mu\t{\kappa} - \frac{2(\mu\t{\kappa}-1)}{\sqrt{\frac{a}{b}}-1}\right)\right)\\
\end{align*}
Replacing back $a,a',b$, the desired statement follows:
\begin{align*}
f(x^{t+1}) - f(x^*) \leq (f(x^t) - f(x^*))
\cdot \left(1 - \frac{\mu}{|S^*\backslash S^t|} \left(2 - \mu\t{\kappa} - \frac{2(\mu\t{\kappa}-1)}{\sqrt{\frac{f(x^t)-f(\widetilde{x}^t)}{f(x^*)-f(\widetilde{x}^t)}}-1}\right)\right)\\
\end{align*}
\end{proof}

The proof of Theorem~\ref{local_theorem} now follows by appropriately applying Lemma~\ref{exhaustive_technical}.

\begin{prevproof}{Theorem}{local_theorem}
\paragraph{Case 1: $\mu\t{\kappa}\leq 1$.}

By Lemma~\ref{exhaustive_technical}, we have 
\begin{align*}
f(x^{T}) - f(x^*) 
& \leq \left(f(x^{T-1}) - f(x^*)\right) \left(1 - \frac{\mu}{|S^*\backslash S^{T-1}|}\right)\\
& \leq \left(f(x^{T-1}) - f(x^*)\right) \left(1 - \frac{\mu}{s^*}\right)\\
& \leq \left(f(x^{T-1}) - f(x^*)\right) e^{- \frac{\mu}{s^*}}\\
& \leq \dots \\
& \leq \left(f(x^0) - f(x^*)\right) e^{- T\frac{\mu}{s^*}}\\
& \leq \epsilon
\end{align*}
for our choice of $T = O\left(\sqrt{s s^*} \log \frac{f(x^0) - f(x^*)}{\epsilon} \right)$ and replacing $\mu = \sqrt{\frac{s^*}{s}}$.
\paragraph{Case 2: $\mu\t{\kappa} > 1$.}
Let $\mathcal{A}$ be the set of $0\leq t \leq T-1$ such that $f(x^*) = f(\widetilde{x}^t)$ 
and $\mathcal{B}$ the 
set of $0\leq t \leq T-1$ such that $f(x^*) > f(\widetilde{x}^t)$.
By Lemma~\ref{exhaustive_technical}, for $t\in\mathcal{A}$ we then have
\begin{align*}
f(x^{t+1}) - f(x^*) 
& \leq \left(f(x^{t}) - f(x^*)\right) \left(1 - \frac{\mu}{|S^*\backslash S^t|}\left(2 - \mu\t{\kappa}\right)\right)\\
& \leq \left(f(x^{t}) - f(x^*)\right) \left(1 - \frac{\mu}{s^*}\left(2 - \mu\t{\kappa}\right)\right)
\end{align*}
We now consider the case $t\in \mathcal{B}$. 
By Lemma~\ref{exhaustive_technical},
\begin{equation}
\begin{aligned}
f(x^{t+1}) - f(x^*) \leq \left(f(x^t) - f(x^*)\right)
\cdot\left(1 - \frac{\mu}{|S^*\backslash S^t|}\left(2 - \mu\t{\kappa} - \frac{2(\mu\t{\kappa}-1)}{\sqrt{\frac{f(x^t)-f(\widetilde{x}^t)}{f(x^*)-f(\widetilde{x}^t)}}-1}\right)\right)
\end{aligned}
\label{helper_eq}
\end{equation}
Let us suppose that the theorem statement is not true. This implies
\begin{equation}
\begin{aligned}
 f(x^t) - f(x^*) 
& \geq f(x^T) - f(x^*)\\ 
& > \epsilon + \frac{4(1-\theta)(\mu\t{\kappa}-1)}{\left(2-\mu\t{\kappa}- \theta\right)^2} (f(x^*) - f(\overline{x}^*)) \\
& \geq \epsilon + \frac{4(1-\theta)(\mu\t{\kappa}-1)}{\left(2-\mu\t{\kappa}- \theta\right)^2} (f(x^*) - f(\widetilde{x}^t)) \\
& \geq \frac{4(1-\theta)(\mu\t{\kappa}-1)}{\left(2-\mu\t{\kappa}- \theta\right)^2} (f(x^*) - f(\widetilde{x}^t)) 
\label{exh_contra}
\end{aligned}
\end{equation}
for all $0\leq t \leq T$. Therefore 
\begin{align*}
f(x^t) - f(\widetilde{x}^t) 
& > \left(\frac{4(1-\theta)(\mu \t{\kappa}-1)}{\left(2-\mu\t{\kappa}-\theta\right)^2}+1\right) (f(x^*) - f(\widetilde{x}^t)) \\
& = \left(\frac{4(1-\theta)(\mu \t{\kappa}-1) + 4 + (\mu\t{\kappa}+\theta)^2 -4(\mu\t{\kappa} + \theta)}{\left(2-\mu\t{\kappa}-\theta\right)^2}\right)
\cdot (f(x^*) - f(\widetilde{x}^t)) \\
& = \frac{(\mu\t{\kappa} - \theta)^2}{\left(2-\mu\t{\kappa}-\theta\right)^2} (f(x^*) - f(\widetilde{x}^t)) 
\end{align*}
or equivalently for all $t\in \mathcal{B}$
\begin{align*}
\sqrt{\frac{f(x^t) - f(\widetilde{x}^t)}{f(x^*) - f(\widetilde{x}^t)}} - 1 > 
\frac{\mu\t{\kappa} - \theta}{2-\mu\t{\kappa}-\theta} -1  
=
\frac{2(\mu\t{\kappa} - 1)}{2-\mu\t{\kappa}-\theta}
\end{align*}
Replacing this into (\ref{helper_eq}), we get that for any $t\in \mathcal{B}$
\begin{align*}
 f(x^{t+1}) - f(x^*)
& \leq (f(x^t) - f(x^*))
\cdot\left(1 - \frac{\mu}{|S^*\backslash S^t|} \left(2 - \mu\t{\kappa} - \frac{2(\mu\t{\kappa}-1)}{\sqrt{\frac{f(x^t)-f(\widetilde{x}^t)}{f(x^*)-f(\widetilde{x}^t)}}-1}\right)\right)\\
& \leq (f(x^t) - f(x^*))\left(1 - \frac{\mu}{|S^*\backslash S^t|} \theta\right)
\end{align*}
and so combining it with the case $t\in \mathcal{A}$ and using the fact that $\mu\t{\kappa} < 2 - \theta\Leftrightarrow\theta< 2 - \mu\t{\kappa}$,
\begin{align*}
f(x^{T}) - f(x^*) 
& \leq \left(f(x^{T-1}) - f(x^*)\right) \left(1- \frac{\mu}{|S^*\backslash S^{T-1}|} \min\left\{2-\mu\t{\kappa}, \theta\right\}\right)\\
& \leq \left(f(x^{T-1}) - f(x^*)\right) \left(1- \frac{\mu}{s^*} \theta\right)\\
& \leq \left(f(x^{T-1}) - f(x^*)\right) e^{- \frac{\mu}{s^*} \theta}\\
& \leq \dots \\
& \leq (f(x^0) - f(x^*)) e^{-T\frac{\mu}{s^*} \theta}\\
& = \epsilon + \frac{4(1-\theta)(\mu\t{\kappa}-1)}{\left(2-\mu\t{\kappa}-\theta\right)^2} (f(x^*) - f(\overline{x}^*)) 
\end{align*}
where the last equality follows by our choice of 
\begin{align*}
T = \frac{\sqrt{ss^*}}{\theta} \log \frac{f(x^0) - f(x^*)}{B}
\end{align*}
and replacing $\mu = \sqrt{\frac{s^*}{s}}$.
This is a contradiction.
\end{prevproof}

\subsection{Corollaries of Theorem~\ref{local_theorem}}

The first corollary states that in the ``noiseless'' case (i.e. when the target solution is globally optimal), the returned solution can reach arbitrarily close to the target solution:
\begin{corollary}[Noiseless case]
If $\widetilde{\kappa}\sqrt{\frac{s^*}{s}} < 2$ and
$x^*$ is a globally optimal solution,
i.e.
$f(x^*) = \underset{z}{\min}\ f(z)$, Algorithm~\ref{local} returns a solution with
\begin{align*}
f(x) \leq f(x^*) + \epsilon
\end{align*}
in $O\left(\frac{\sqrt{ss^*}}{2-\widetilde{\kappa}\sqrt{\frac{s^*}{s}}} \log \frac{f(x^0) - f(x^*)}{\epsilon}\right)$ iterations.
\end{corollary}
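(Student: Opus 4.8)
The plan is to derive this corollary directly from Theorem~\ref{local_theorem} by combining two observations: global optimality of $x^*$ collapses the ``noise'' term $f(x^*) - f(\overline{x}^*)$ to zero, and a suitable choice of the free parameter $\theta$ converts the iteration bound of Theorem~\ref{local_theorem} into the claimed $O\left(\frac{\sqrt{ss^*}}{2-\widetilde{\kappa}\sqrt{s^*/s}}\log\frac{f(x^0)-f(x^*)}{\epsilon}\right)$.

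First I would observe that since $x^*$ is $s^*$-sparse it is in particular $(s+s^*)$-sparse, so by the definition of $\overline{x}^*$ we have $f(\overline{x}^*) \leq f(x^*)$; combined with global optimality, $f(x^*) = \min_z f(z) \leq f(\overline{x}^*)$, which forces $f(x^*) = f(\overline{x}^*)$, i.e.\ $f(x^*) - f(\overline{x}^*) = 0$. Writing $\mu = \sqrt{s^*/s}$ as in Section~4, I then split into the two regimes of Theorem~\ref{local_theorem} according to whether $\mu\widetilde{\kappa} \leq 1$ or $\mu\widetilde{\kappa} > 1$.

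In the regime $\mu\widetilde{\kappa} \leq 1$, the first case of Theorem~\ref{local_theorem} already gives $f(x) \leq f(x^*) + \epsilon$ in $O\left(\sqrt{ss^*}\log\frac{f(x^0)-f(x^*)}{\epsilon}\right)$ iterations; since $1 \leq 2 - \mu\widetilde{\kappa} \leq 2$ here, this count is within a constant factor of the claimed bound. In the regime $1 < \mu\widetilde{\kappa} < 2$, I would apply the second case with $\theta := (2-\mu\widetilde{\kappa})/2$. This is admissible: $\theta > 0$ because $\mu\widetilde{\kappa} < 2$, $\theta < 1$ because $\mu\widetilde{\kappa} > 1$ gives $2-\mu\widetilde{\kappa} < 1$, and $\mu\widetilde{\kappa} < 2-\theta$ holds by construction. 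Theorem~\ref{local_theorem} then yields $f(x) \leq f(x^*) + B$ with $B = \epsilon + \frac{4(1-\theta)(\mu\widetilde{\kappa}-1)}{(2-\mu\widetilde{\kappa}-\theta)^2}\cdot 0 = \epsilon$, and iteration count $O\left(\frac{\sqrt{ss^*}}{\theta}\log\frac{f(x^0)-f(x^*)}{B}\right) = O\left(\frac{\sqrt{ss^*}}{2-\mu\widetilde{\kappa}}\log\frac{f(x^0)-f(x^*)}{\epsilon}\right)$, which is exactly the claim after substituting $\mu = \sqrt{s^*/s}$.

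There is no genuine obstacle here: all the work is in Theorem~\ref{local_theorem}, and the corollary amounts to plugging in $f(x^*) = f(\overline{x}^*)$ and a convenient $\theta$. The only points requiring a moment's care are checking that the chosen $\theta$ simultaneously satisfies $0 < \theta < 1$ and $\mu\widetilde{\kappa} < 2-\theta$ throughout $1 < \mu\widetilde{\kappa} < 2$, and that the two regimes patch into a single $O(\cdot)$ statement, which they do because $2-\mu\widetilde{\kappa}$ is bounded below by a constant in the $\mu\widetilde{\kappa}\le 1$ regime.
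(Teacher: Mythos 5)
Your proposal is correct and follows essentially the same route as the paper, which simply applies Theorem~\ref{local_theorem} with $\theta = \frac{1}{2}\left(2-\widetilde{\kappa}\sqrt{\frac{s^*}{s}}\right)$; your extra steps (noting $f(x^*)=f(\overline{x}^*)$ and handling the $\widetilde{\kappa}\sqrt{s^*/s}\leq 1$ regime separately) just make explicit what the paper leaves implicit.
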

\begin{proof}
We apply Theorem~\ref{local_theorem} with $\theta = \frac{1}{2}\left(2 - \widetilde{\kappa}\sqrt{\frac{s^*}{s}}\right)$.
\end{proof}
The following result is in the usual form of sparse recovery results, which provide a bound on $\left\Vert x-x^*\right\Vert_2$
given a RIP constant upper bound. It provides a tradeoff between the RIP constant and
the sparsity of the returned solution.
\begin{corollary}[$\ell_2$ solution recovery]
Given any parameters $\epsilon > 0$ and $0<\theta<1$, the returned solution $x$ 
of Algorithm~\ref{local}
will satisfy
\begin{align*}
\left\Vert x - x^*\right\Vert_2^2 \leq \epsilon + C \left(f(x) - \underset{z}{\min}\ f(z)\right)
\end{align*}
as long as
\begin{align*}
\delta_{s+s^*} < \frac{(2-\theta)\sqrt{\frac{s}{s^*}} - 1}{(2-\theta)\sqrt{\frac{s}{s^*}}+1}
\end{align*}
where $C$ is a constant that depends only on $\theta$, $\delta_{s+s^*}$, and $\frac{s}{s^*}$.
\label{l2}
\end{corollary}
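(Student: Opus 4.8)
The plan is to combine Theorem~\ref{local_theorem}, which already controls $f(x)-f(x^*)$, with restricted strong convexity applied around the auxiliary point $\widetilde{x}^T$ from Lemma~\ref{lemma_strongconv}, in order to upgrade closeness in function value to closeness in $\ell_2$ norm. First I would translate the RIP hypothesis: writing $\mu=\sqrt{s^*/s}$ and recalling $\delta_r=\frac{\kappa_r-1}{\kappa_r+1}$, the map $\delta\mapsto\frac{1+\delta}{1-\delta}$ is increasing on $[0,1)$, so the assumption $\delta_{s+s^*}<\frac{(2-\theta)\sqrt{s/s^*}-1}{(2-\theta)\sqrt{s/s^*}+1}$ is exactly $\kappa_{s+s^*}<(2-\theta)\sqrt{s/s^*}$, i.e.\ $\kappa_{s+s^*}\mu<2-\theta$, and since $\widetilde{\kappa}\le\kappa_{s+s^*}$ this forces $\widetilde{\kappa}\mu<2-\theta$. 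Then I would invoke Theorem~\ref{local_theorem}: if $\widetilde{\kappa}\mu\le1$ its first bullet gives $f(x)-f(x^*)\le\epsilon$, while if $1<\widetilde{\kappa}\mu<2-\theta$ its second bullet gives $f(x)-f(x^*)\le B=\epsilon+C_0\bigl(f(x^*)-f(\overline{x}^*)\bigr)$ with $C_0=\frac{4(1-\theta)(\widetilde{\kappa}\mu-1)}{(2-\widetilde{\kappa}\mu-\theta)^2}$. Since $u\mapsto\frac{4(1-\theta)(u-1)}{(2-\theta-u)^2}$ is increasing on $(1,2-\theta)$ and $\widetilde{\kappa}\mu$ is bounded above by $\kappa_{s+s^*}\mu=\frac{1+\delta_{s+s^*}}{1-\delta_{s+s^*}}\sqrt{s^*/s}$, which the hypothesis keeps strictly below $2-\theta$, the constant $C_0$ is bounded by an explicit function of $\theta$, $\delta_{s+s^*}$, and $s/s^*$. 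In every case one has $f(x)-f(x^*)\le\epsilon+C_0\bigl(f(x^*)-f(\overline{x}^*)\bigr)$ with $C_0\ge0$.

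Next I would convert this into an $\ell_2$ bound. Let $\widetilde{x}:=\widetilde{x}^T=\mathrm{argmin}\{f(z):\mathrm{supp}(z)\subseteq S^T\cup S^*\}$. Both $x-\widetilde{x}$ and $x^*-\widetilde{x}$ are supported on $S^T\cup S^*$, a set of size at most $s+s^*$, and $\nabla_{S^T\cup S^*}f(\widetilde{x})=\vec{0}$, so restricted strong convexity of $f$ at sparsity level $s+s^*$ around $\widetilde{x}$ yields $\tfrac{\rho^-}{2}\|x-\widetilde{x}\|_2^2\le f(x)-f(\widetilde{x})$ and $\tfrac{\rho^-}{2}\|x^*-\widetilde{x}\|_2^2\le f(x^*)-f(\widetilde{x})$. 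The triangle inequality together with $(a+b)^2\le 2a^2+2b^2$ then gives
\[
\|x-x^*\|_2^2\ \le\ \frac{4}{\rho^-}\Bigl((f(x)-f(\widetilde{x}))+(f(x^*)-f(\widetilde{x}))\Bigr).
\]
I would take this route rather than invoking Lemma~\ref{lemma_strongconv} directly, because that lemma only controls $\|x^*_{S^*\setminus S^T}\|_2^2+\|x^T_{S^T\setminus S^*}\|_2^2$ and misses the overlap coordinates $S^T\cap S^*$; passing through $\widetilde{x}$ captures the full norm and also avoids needing any control of $\nabla_{S^*\setminus S^T}f(x)$ (and hence any appeal to the stopping rule of Algorithm~\ref{local})—the alternative route through strong convexity between $x$ and $x^*$ directly is available but more delicate.

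To finish, I would bound both gaps by the ``noise''. The chain $\min_z f(z)\le f(\overline{x}^*)\le f(\widetilde{x})\le\min\{f(x),f(x^*)\}$ stated before Theorem~\ref{local_theorem} shows both gaps are nonnegative and that $f(x)-f(\widetilde{x})\le f(x)-f(\overline{x}^*)$ and $f(x^*)-f(\widetilde{x})\le f(x^*)-f(\overline{x}^*)$. Trading $f(x)$ for $f(x^*)$ via Step~1 gives $f(x)-f(\overline{x}^*)\le\epsilon+(1+C_0)\bigl(f(x^*)-f(\overline{x}^*)\bigr)$, so
\[
\|x-x^*\|_2^2\ \le\ \frac{4\epsilon}{\rho^-}+\frac{4(2+C_0)}{\rho^-}\bigl(f(x^*)-f(\overline{x}^*)\bigr)\ \le\ \frac{4\epsilon}{\rho^-}+C\bigl(f(x^*)-\min_z f(z)\bigr),
\]
using $f(\overline{x}^*)\ge\min_z f(z)$, with $C=\frac{4(2+C_0)}{\rho^-}$. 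Calling the algorithm with error parameter $\rho^-\epsilon/4$ puts this in the stated form, and $C$ depends only on $\theta,\delta_{s+s^*},s/s^*$ and $\rho^-$—the last being $\Theta(1)$ under the usual normalization (e.g.\ unit-norm columns in linear regression). I would also remark that the quantity that naturally appears here is $f(x^*)-\min_z f(z)=f(x^*)-f(x^{\mathrm{opt}})$, in line with the introduction; phrasing the bound with $f(x)$ as in the statement is equivalent only up to absorbing the additive term $B$, since Theorem~\ref{local_theorem} relates $f(x)$ and $f(x^*)$ in one direction only.

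The main obstacle is not a single deep step but the choice in Step~2: recognizing that the detour through $\widetilde{x}^T$ is precisely what lets one bound the \emph{entire} vector $x-x^*$—including its restriction to $S^T\cap S^*$—using function values alone. The remaining care points are purely bookkeeping: checking that $C_0$ stays finite (monotonicity in $\widetilde{\kappa}\mu$ together with the RIP upper bound), keeping track of the $\rho^-$ scaling, and the mild $f(x)$-versus-$f(x^*)$ discrepancy noted above.
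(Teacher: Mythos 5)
Your argument is correct and is essentially the derivation the paper intends (no explicit proof of Corollary~\ref{l2} is given): translate the RIP hypothesis into $\widetilde{\kappa}\sqrt{s^*/s}<2-\theta$, invoke Theorem~\ref{local_theorem}, and convert the function-value gap into an $\ell_2$ bound via restricted strong convexity around $\widetilde{x}^T$ --- which is exactly the mechanism inside Lemma~\ref{lemma_strongconv}, whose proof already passes through $\frac{\rho^-}{2}\Vert x^t-x^*\Vert_2^2$ before weakening to the stated form, so your detour is sound but not really a departure. The one remark to correct is your closing claim that the $f(x)$ and $f(x^*)$ phrasings are equivalent up to absorbing $B$: they are not, since Theorem~\ref{local_theorem} bounds $f(x)$ by $f(x^*)$ in one direction only, and the literal statement with $f(x)-\min_z f(z)$ is false in general (take $f(x)=\frac{1}{2}\Vert x\Vert_2^2$, i.e.\ $b=0$, and $x^*$ an arbitrary far-away $s^*$-sparse point: Algorithm~\ref{local} returns $x=0$ with zero excess loss while $\Vert x-x^*\Vert_2$ is unbounded); the corollary should be read with $f(x^*)-\min_z f(z)$, matching the $C_2\,(f(x^*)-f(x^{\mathrm{opt}}))$ bound stated in Section~\ref{our_work}, which is precisely the version you proved. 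Your observation that $C$ additionally carries a factor $1/\rho^-$, absorbed only under the usual RIP normalization, is likewise accurate.
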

In particular, for $s=s^*$, the above lemma implies recovery under the condition
$\delta_{2s^*} < \frac{1}{3}$.

\section{Lower Bounds}

\subsection{$\Omega(s^*\kappa)$ lower bound due to \cite{FKT15}}
\label{sec:lower_bound}
In Appendix B of \cite{FKT15} a matrix $A\in\mathbb{R}^{m\times n}$ and a vector $b\in\mathbb{R}^m$ are constructed
and let us define $f(x) = \frac{1}{2}\left\Vert Ax-b\right\Vert_2^2$.
If we let $\overline{S^*}=\{1,\dots,n-2\}$ and $S^*=\{n-1,n\}$, then $f$ has the property that
\begin{align*}
\underset{\mathrm{supp}(x)\subseteq S^*}{\mathrm{min}}\ f(x) =  \underset{\mathrm{supp}(x)\subseteq \overline{S^*}}{\mathrm{min}}\ f(x) = 0
\end{align*}
but for any $S\subset \overline{S^*}$,
\begin{align*}
\underset{\mathrm{supp}(x)\subseteq S}{\mathrm{min}}\ f(x) > 0
\end{align*}
Furthermore, for any $S\subset \overline{S^*}$ 
and $x = \underset{\mathrm{supp}(x)\subseteq S}{\mathrm{argmin}}\ f(x)$, it is true that
\begin{align*}
\underset{i\in S^*}{\max}\ \left|\nabla_i f(x)\right| < 
\underset{i\in \overline{S^*}\backslash S}{\min}\ \left|\nabla_i f(x) \right|
\end{align*}
This means that for any algorithm with an OMP-like criterion like Orthogonal Matching Pursuit, Orthogonal Matching Pursuit with Replacement, 
Iterative Hard Thresholding, and Partial Hard Thresholding, 
if the initial solution
does not have an intersection with $S^*$, then it will never have, therefore implying
that the sparsity returned by the algorithm is $|S| = n-2 = \Omega(n)$. As for this construction
$\kappa = \frac{\rho_n^+}{\rho_n^-} = O\left(n\right)$, there exists a constant $c$ such that
the sparsity of the returned solution cannot be less than
$c s^*{\kappa}$, since $s^*\kappa = O(n) = O(\left|S\right|)$.
Therefore none of these algorithms can improve the bound $O(s^*\kappa)$ of Theorem~\ref{reg_theorem} by more than a constant factor.
This example also applies to ARHT and Exhaustive Local Search.

It seems difficult to get past this example and achieve sparsity $s=O(s^* \kappa^{1-\delta})$ for some $\delta > 0$.
We conjecture that there might be a way to turn the above example into an inapproximability result:
\begin{conjecture}
\label{conjecture}
For any $\delta > 0$,
there is no polynomial time algorithm that 
given 
a matrix $A\in\mathbb{R}^{m\times n}$, a vector $b\in\mathbb{R}^m$,
a target sparsity $s^*\geq 1$, and a desired accuracy $\epsilon > 0$,
returns an $s=O(s^*\kappa_{s+s^*}^{1-\delta})$-sparse solution $x$ such that $\left\Vert Ax - b\right\Vert_2^2 \leq \underset{\left\Vert x^*\right\Vert_0 \leq s^*}{\min} 
\left\Vert Ax^*-b\right\Vert_2^2 + \epsilon$, if such a solution exists.
\end{conjecture}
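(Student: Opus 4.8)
The plan is to reduce from the known NP-hardness of \emph{approximate} sparse regression. By \cite{Natarajan95,FKT15}, for every fixed $\delta'>0$ there is no polynomial-time algorithm that, given $(A',b',s')$, returns an $s'\gamma$-sparse $x$ with $\|A'x-b'\|_2^2 \le \mathrm{OPT}_{s'}+\epsilon$ (where $\mathrm{OPT}_{s'}$ is the optimum over $s'$-sparse vectors) even for blowup $\gamma = 2^{\log^{1-\delta'} n'}$. To prove Conjecture~\ref{conjecture} it would suffice to turn such hard instances into instances whose \emph{restricted condition number} $\kappa_{s+s^*}$ is \emph{controlled} to equal a prescribed quasipolynomial function $\Lambda(n)$: then any hypothetical algorithm meeting the bound $s=O(s^*\kappa_{s+s^*}^{1-\delta})$ would automatically achieve sparsity blowup $\gamma = O(\Lambda(n)^{1-\delta})$, and for a suitable choice of $\Lambda$ this value is still below the hardness threshold $2^{\log^{1-\delta'} N}$ (for the appropriate dimension $N$), yielding a contradiction.

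Concretely, I would (i) take a hard instance $(A',b',s')$ on $n'$ variables as above; (ii) \emph{pad} it with a conditioning gadget in the spirit of the Appendix-B construction of \cite{FKT15} recalled in Section~\ref{sec:lower_bound}: append new coordinates and rows so that the restrictions of the resulting quadratic to small supports have smoothness/strong-convexity constants spanning a wide but controlled range, i.e.\ $\rho^+_r/\rho^-_r = \Theta(\Lambda(n))$ for every $r$ up to the target range $s^*\cdot\Lambda(n)^{1-\delta}$, while leaving unchanged the optimal value, the target sparsity $s^*=\Theta(s')$, and (crucially) the identity of the good sparse solution; (iii) argue that the gadget introduces no cheaper sparse solution, so that the additive-$\epsilon$ semantics of the base instance carry over verbatim; (iv) run the hypothetical algorithm on the padded instance, restrict its output to the original coordinates, and observe that the restriction is $O(s^*\Lambda(n)^{1-\delta})$-sparse and still $\epsilon$-optimal for $(A',b',s')$, contradicting \cite{FKT15} once $\Lambda$ and the dimensions are matched so that $O(\Lambda(n)^{1-\delta})$ falls under the quasipolynomial threshold.

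The main obstacle — and the reason this remains only a conjecture — is step (ii). One needs a gadget whose RIP constant is pinned near $\Lambda(n)$ \emph{simultaneously} across the whole range of sparsity levels from $s^*$ up to $s^*\Lambda(n)^{1-\delta}$, because the relevant $\kappa_{s+s^*}$ in the conjecture is itself measured at the a priori unknown output sparsity $s$, producing the circular coupling $s=O(s^*\kappa_{s+s^*}^{1-\delta})$ with $\kappa_{s+s^*}\approx\Lambda(n)$; engineering a matrix with a ``flat enough spectrum'' over such a window, while also guaranteeing $\rho^->0$ at that level, is delicate. Moreover, the Appendix-B gadget of \cite{FKT15} produces the desired condition-number gap but was designed to defeat OMP-type \emph{selection rules}, not arbitrary polynomial-time computation; one must either enrich it so that locating $S^*$ is provably as hard as an NP-complete problem, or build the conditioning directly on top of the matrices underlying the known hardness reductions (for instance by tensoring with a fixed ill-conditioned block and tracking how RIP constants behave under the tensor/direct-sum operation). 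A secondary difficulty is parameter accounting: the exponent $1-\delta$ must be traded against the exponent $1-\delta'$ of the quasipolynomial hardness and against the dimension blow-up from padding, so that the final contradiction lands in a regime of $\gamma$ that is genuinely known to be hard. I expect that resolving these two points — a computationally hard conditioning gadget, and a tight parameter bookkeeping — is exactly what stands between the heuristic evidence of Section~\ref{sec:lower_bound} and a theorem.
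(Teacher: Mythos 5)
The statement you are addressing is presented in the paper as Conjecture~\ref{conjecture}; the authors do not prove it, and only remark in Section~\ref{sec:lower_bound} that the Appendix-B instance of \cite{FKT15} defeats OMP-style selection rules and that it ``might'' be upgradable to an inapproximability result. So there is no paper proof to compare against, and your submission is, by your own admission, a plan rather than a proof. The plan is the natural one --- reduce from the quasipolynomial-blowup hardness of \cite{Natarajan95,FKT15} after forcing the restricted condition number of the hard instances to a prescribed value $\Lambda(n)$ --- but the step you yourself flag as unresolved, step (ii), is not a technical loose end; it is the entire content of the conjecture. The known hardness of sparse regression with blowup $2^{\log^{1-\delta'}n}$ is proved on instances whose restricted condition number at the relevant sparsity levels is uncontrolled (and typically enormous); this is exactly why the condition-number-parameterized upper bounds such as Theorem~\ref{reg_theorem} coexist with that hardness. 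Any padding by a direct-sum or tensor gadget on fresh coordinates can only \emph{increase} $\kappa_{s+s^*}$: for a block-diagonal quadratic, $\rho^+_r$ is a maximum and $\rho^-_r$ a minimum over restricted supports, so the ill-conditioning of the original hard block at large supports still dominates. Pinning $\kappa_{s+s^*}=\Theta(\Lambda(n))$ therefore requires modifying the hard instance itself (adding rows that regularize its spectrum), and doing so without creating new cheap sparse solutions or destroying the reduction is precisely the open construction. Your step (iv) also silently assumes that restricting the output to the original coordinates preserves $\epsilon$-optimality, which requires the gadget's columns to be orthogonal to the residual of the base instance --- another property that must be engineered, not assumed.

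In short: your proposal is a reasonable articulation of \emph{why} the statement is believed and of what a proof would need, and your diagnosis of the obstruction matches the reason the authors left it as a conjecture. But it is not a proof, and the two concrete gaps above (a conditioning gadget that can \emph{lower} the effective $\kappa$ of a hard instance without changing its sparse optima, and the parameter bookkeeping you mention) mean it should not be presented as one.
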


\subsection{$\Omega(s^*\kappa^2)$ lower bound for OMPR}
\label{OMPR_lower_bound}
The following lemma shows that, without regularization, OMPR requires sparsity $\Omega(s^*\kappa^2)$ in general, and therefore the sparsity upper bound is tight.
We assume that the algorithm is run for a fixed $T$ iterations, even when the solution stops improving, for a clearer presentation.
\begin{lemma}
There is a function $f(x) = \frac{1}{2} \left\Vert Ax - b\right\Vert_2^2$ where $A\in\mathbb{R}^{n\times n} $ and $b\in\mathbb{R}^n$ and a target solution $x^*$ of $f$ with sparsity $s^*$, as well as a set $S\subseteq [n]$
with $|S| =\Theta(s^*\kappa^2)$ such that OMPR initialized with support set $S$ returns a solution $x$ with $f(x) = f(x^*) + \Theta(s^*\kappa^2) $.
\end{lemma}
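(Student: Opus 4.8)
The plan is to build $f$ as a direct sum of $s^*$ identical ``gadgets'' on disjoint blocks of coordinates, so that $A^\top A$ is block-diagonal and the behaviour of OMPR (Algorithm~\ref{local}) on each block can be analyzed separately. Each gadget contributes exactly one coordinate to the target support $S^*$ and $\Theta(\kappa^2)$ coordinates to the initial set $S$; summing the per-gadget bounds then yields $|S|=\Theta(s^*\kappa^2)$ and $f(x)-f(x^*)=\Theta(s^*\kappa^2)$. So it suffices to exhibit one gadget: a quadratic on $\Theta(\kappa^2)$ coordinates with one ``good'' coordinate $g$ (the target coordinate of that block) and a family of ``decoy'' coordinates, such that OMPR, initialized on the decoys, terminates with value $\Theta(\kappa^2)$ above the optimum supported on $\{g\}$.

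Within a gadget I would make the decoy coordinates \emph{stiff} (curvature $\approx\rho^+$) and the good coordinate $g$ \emph{flexible} (curvature $\approx\rho^-$), so that $\kappa$ equals their ratio. The goal is to realize simultaneously the two sources of slack that motivate the $\kappa^2$ bound. OMPR removes the in-support coordinate $j$ of smallest $|x_j|$, which over-estimates the true cost of deleting it by up to a factor $\kappa$ (the true cost can be as large as $\tfrac{\rho^+}{2}x_j^2$); and OMPR inserts the non-selected coordinate $i$ of largest $|\nabla_i f|$, which under-estimates the gain of adding $g$ by up to a factor $\kappa$ (the gain of $g$ can be as large as $\tfrac{(\nabla_g f(x^0))^2}{2\rho^-}$). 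I would tune $|\nabla_g f(x^0)|$ to sit just below the largest decoy gradient, so OMPR never selects $g$; every swap OMPR can perform replaces a decoy of true deletion cost $\approx \tfrac{\rho^+}{2}x_j^2$ by a decoy of true insertion gain of the same order, hence does not decrease $f$, which triggers the break condition $f(x^{t+1})\ge f(x^t)$ and makes OMPR halt at $x^0$ (the ``run for fixed $T$'' version just keeps the value at $f(x^0)$). The foregone gain is the value of $g$, namely $\approx\tfrac{\rho^-}{2}\bar x_g^2=\tfrac{(\rho^-\bar x_g)^2}{2\rho^-}$; since $\rho^-\bar x_g$ is of the order of a decoy gradient, i.e. $\Theta(\rho^+)$ times a decoy magnitude, and $\rho^-$ is a factor $\kappa$ smaller than $\rho^+$, this is $\Theta(\kappa^2)$ per gadget after the overall scaling. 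Carrying these quantities through Algorithm~\ref{local} gives the per-gadget statement.

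The main obstacle is making both factor-$\kappa$ slacks active at once while fitting $\Theta(\kappa^2)$ decoys into $S$ without the gap collapsing: since $x^0$ is \emph{optimal} on all of $S$, in a fully decoupled construction adding $\Theta(\kappa^2)$ decoys to the support shrinks $f(x^0)-f(x^*)$ and one only gets $|S|=O(\kappa)$ per gadget, recovering the already-known $\Omega(s^*\kappa)$ bound. Overcoming this forces a non-diagonal block, e.g. a decoy Gram matrix of the form $aI+b\vec 1\vec 1^\top$ together with a small controlled coupling to $g$, chosen so that (i) the optimal value on any $s$ of the decoys stays $\Theta(\kappa^2)$ above the optimal value on $\{g\}$, and (ii) at $x^0$ the gradient ordering ``every non-selected decoy beats $g$'' still holds; one must also check that $x^*$ (optimal on $S^*$) is \emph{not} globally optimal, i.e. the instance genuinely has ``noise'', which is what makes it consistent for OMPR to be trapped $\Theta(s^*\kappa^2)$ away from $x^*$ despite having sparsity budget $\Theta(s^*\kappa^2)$ (the constant here is taken below the threshold of the matching positive result, Theorem~\ref{local_theorem}). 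Once the single-gadget parameters are pinned down, the block-diagonal sum and the fact that every swap OMPR performs lies inside some gadget and is non-improving give the lemma.
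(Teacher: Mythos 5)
Your high-level framing (a coordinate-decoupled construction, trapping OMPR via the gradient ordering so it never inserts the target coordinate, and swaps that are non-improving so the value stays put) is in the spirit of the paper's proof, but the gadget itself is not carried out, and the place where it breaks is exactly the one you flag yourself. With only two roles per gadget (stiff decoys of curvature $\rho^+=\kappa$, one flexible good coordinate of curvature $\rho^-=1$), keeping $g$ un-inserted forces every out-of-support decoy gradient to exceed $|\nabla_g f(x^0)|$, hence each in-support decoy is worth $\Omega(|\nabla_g f(x^0)|^2/\kappa^2)\cdot\kappa$ to the objective, and once $\Theta(\kappa^2)$ decoys sit in $S$ the quantity $f(x^0)-f(x^*)$ becomes negative; you correctly conclude this caps a decoupled two-role gadget at $O(\kappa)$ decoys. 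But your repair --- a decoy Gram matrix $aI+b\vec{1}\vec{1}^\top$ with a small coupling to $g$ --- is only asserted, not constructed: you never verify that conditions (i) and (ii) can hold simultaneously, and there is a quantitative obstruction, since every $(s+s^*)$-sparse principal submatrix must have spectrum inside $[\rho^-,\rho^+]$, forcing $|b|=O(\rho^+/s)=O(1/\kappa)$ when $s=\Theta(\kappa^2)$, i.e.\ a coupling too weak to be obviously capable of restoring a $\Theta(\kappa^2)$ gap. Moreover, the claim that ``overcoming this forces a non-diagonal block'' is false: the paper's construction is purely diagonal.

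The missing idea is to use \emph{three} families of coordinates with distinct roles rather than two, all decoupled. In the paper: $I_1$ has $s^*$ target coordinates with curvature $1$ and $b_i\approx\kappa$ (payoff $\Theta(\kappa^2)$ each); $I_3$ has $s^*\kappa^2$ \emph{flexible, low-payoff filler} coordinates with curvature $1$ and $b_i=1$, and the initial support is half of $I_3$ --- because each filler is worth only $\Theta(1)$, packing $\Theta(s^*\kappa^2)$ of them into $S$ lowers $f(x^0)$ by only $\Theta(s^*\kappa^2)$, which the constants absorb, so the gap $f(x^0)-f(x^*)=\Theta(s^*\kappa^2)$ survives; and $I_2$ is a thin band of $s^*\kappa$ \emph{stiff trap} coordinates (curvature $\kappa$, $b_i=\sqrt{\kappa(1-2\delta)}$) kept \emph{outside} the initial support, whose gradient $\kappa\sqrt{1-2\delta}$ slightly exceeds the targets' $\kappa\sqrt{1-4\delta}$. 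OMPR therefore always inserts a trap; the inserted trap takes value $\sqrt{1-2\delta}<1$, so it is the minimum-magnitude element and is removed at the next step, and the iterates cycle inside $I_2$ with no decrease of $f$, never touching $I_1$, leaving $f(x^t)=f(x^*)+\Theta(s^*\kappa^2)$. Your direct-sum/per-gadget reduction is unobjectionable, but the single gadget needs this extra flexible filler family forming the support together with the thin stiff trap band outside it, not a coupled decoy block; as written, the proposal does not yield the lemma.
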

\begin{proof}
Without loss of generality we assume that $\kappa$ is an even integer and 
set $n = s^*\left(1 + \kappa + \kappa^2\right)$. We then partition $[n]$ into three intervals $I_1 = [1,s^*]$, $I_2=[s^*+1,s^*(1+\kappa)]$, $I_3=[s^*(1+\kappa)+1,s^*(1+\kappa+\kappa^2)]$.
We define the diagonal matrix $A\in\mathbb{R}^{n\times n}$ such that 
\begin{align*}
A_{ii} = \begin{cases}
1 & \text{if $i\in I_1$}\\
\sqrt{\kappa} & \text{if $i\in I_2$}\\
1 & \text{if $i\in I_3$}
\end{cases}
\end{align*}
and vector $b\in\mathbb{R}^n$ such that
\begin{align*}
b_i = \begin{cases}
\kappa\sqrt{1- 4\delta} & \text{if $i\in I_1$}\\
\sqrt{\kappa}\sqrt{1 - 2\delta} & \text{if $i\in I_2$}\\
1 & \text{if $i\in I_3$}
\end{cases}
\end{align*}
where $\delta > 0$ is a sufficiently small scalar used to avoid ties in the steps of the algorithm.
The target solution is defined as
\begin{align*}
x_i^* = \begin{cases}
\kappa(1 - 4 \delta) & \text{if $i\in I_1$}\\
0 & \text{if $i\in I_2\cup I_3$}
\end{cases}
\end{align*}
and its value is $f(x^*) = s^*\kappa^2 (1-\delta)$.
Now consider any initial support set $S^0 \subset I_3$ such that $|S^0| = s^* \kappa^2 / 2$. The initial solution will then be 
\begin{align*}
x_i^0 = \begin{cases}
0 & \text{if $i\in I_1\cup I_2 \cup I_3\backslash S^0$}\\
1 & \text{if $i\in S^0$}
\end{cases}
\end{align*}
and its value $f(x^0) = s^*\kappa^2 \left(\frac{5}{4} - 3\delta \right) = f(x^*) + \Theta(s^*\kappa^2)$. The gradient at $x^0$ is
\begin{align*}
\nabla_i f(x^0) = \begin{cases}
-\kappa\sqrt{1-4\delta} & \text{if $i\in I_1$}\\
-\kappa\sqrt{1-2\delta} & \text{if $i\in I_2$}\\
-1 & \text{if $i\in I_3\backslash S^0$}\\
0 & \text{if $i\in S^0$}
\end{cases}
\end{align*}
therefore the algorithm will pick $S^1 = S^0\cup \{i^0\}\backslash\{j^0\}$ for some $i^0\in I_2$ and some $j^0\in S^0$, since the gradient entries in $I_2$ have the largest magnitude among those in $[n]$.
The new solution will be 
\begin{align*}
x_i^1 = \begin{cases}
0 & \text{if $i\in I_1\cup I_2 \cup I_3\backslash S^1$}\\
\sqrt{1-2\delta} & \text{if $i = i^0$}\\
1 & \text{if $i\in S^1 \backslash \{i^0\}$}
\end{cases}
\end{align*}
with value
$f(x^1) = s^* \kappa^2\left(\frac{5}{4} - 3\delta\right) - \frac{1}{2} (\kappa(1-2\delta) - 1)$
and gradient
\begin{align*}
\nabla_i f(x^1) = \begin{cases}
-\kappa\sqrt{1-4\delta} & \text{if $i\in I_1$}\\
-\kappa\sqrt{1-2\delta} & \text{if $i\in I_2 \backslash S^1$}\\
-1 & \text{if $i\in I_3\backslash S^1$}\\
0 & \text{if $i\in S^1$}
\end{cases}
\end{align*}
and therefore the algorithm will pick $S^2 = S^1\cup \{i^1\}\backslash\{i^0\}$ 
for some $i^1\in I_2$. $i^0$ will be the one to be removed from $S^1$ because $x_{i^0}$ has the smallest magnitude out of all entries in $S^1$.
Continuing this process, the algorithm will always have $S^t \cap I_2 = 1$ and $S^t \cap I_3 = |S^t|-1$, and so
$f(x^t) = s^* \kappa^2 \left(\frac{5}{4} - 3\delta\right) - \frac{1}{2} (\kappa(1-2\delta) - 1) = f(x^*) + \Theta(s^* \kappa^2)$
for $t\geq 1$.
\end{proof}

\section{Experiments}
\label{experiment_section}
\subsection{Overview}

In this section we evaluate the training performance of different algorithms in the tasks of Linear Regression and Logistic Regression.
More specifically, for each algorithm we are interested in how  
the \emph{loss} over the training set 
(the quality of 
the solution)
evolves as a function of the
the \emph{sparsity} of the solution, i.e. the number of non-zeros.

The algorithms that we will consider are \emph{LASSO}, \emph{Orthogonal Matching Pursuit (OMP)}, \emph{Orthogonal Matching Pursuit with Replacement (OMPR)}, 
\emph{Adaptively Regularized Hard Thresholding (ARHT)} (Algorithm~\ref{local_reg}),
and \emph{Exhaustive Local Search} (Algorithm~\ref{exlocal}).
We run our experiments on publicly available regression and binary classification datasets, out of which we have presented
those on which the 
algorithms have significantly different performance between each other. In some of the other datasets that we tested, we 
observed that all algorithms had similar performance.
The results are presented in Figures~\ref{fig1}, \ref{fig2}, \ref{fig3}, \ref{fig4}.
Another relevant class of algorithms that we considered was \emph{$\ell_p$ Appproximate Message Passing} algorithms~\cite{donoho2009message,zheng2017does}.
Brief experiments showed its performance in terms of sparsity for $p \leq 0.5$ to be
promising (on par with OMPR and ARHT although these had much faster runtimes), however a detailed comparison is left for future work.

In both types of objectives (linear and logistic) we include an intercept term, which is present in all solutions (i.e. it is always counted as $+1$ in the sparsity
of the solution). 
For consistency, all greedy algorithms (OMPR, ARHT, Exhaustive Local Search) are initialized with the OMP solution of the same sparsity.

The experiments 
make it clear that Exhaustive Local Search outperforms the other algorithms. However, ARHT also has 
promising performance
and it might be preferred because of better computational efficiency. As a general conclusion, however, both Exhaustive Local Search and ARHT offer
an advantage compared to OMP and OMPR.
As a limitation, we observe that ARHT has inconsistent performance in some cases, oscillating between the Exhaustive Local Search and OMPR solutions.

\begin{figure}[H]
\vskip 0.2in
\begin{center}
	\includegraphics[width=0.8\textwidth]{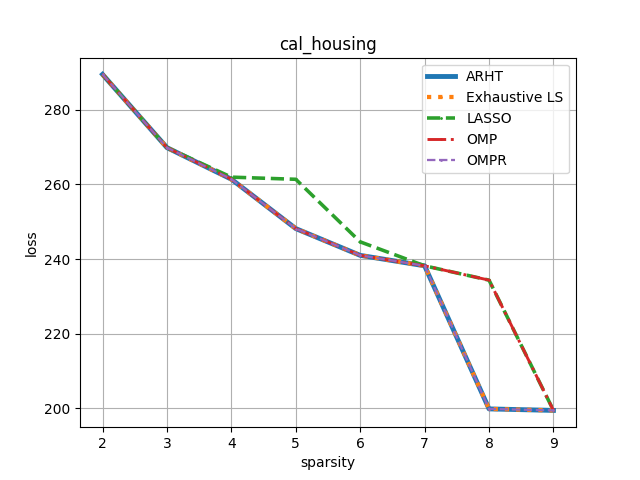}
	\includegraphics[width=0.8\columnwidth]{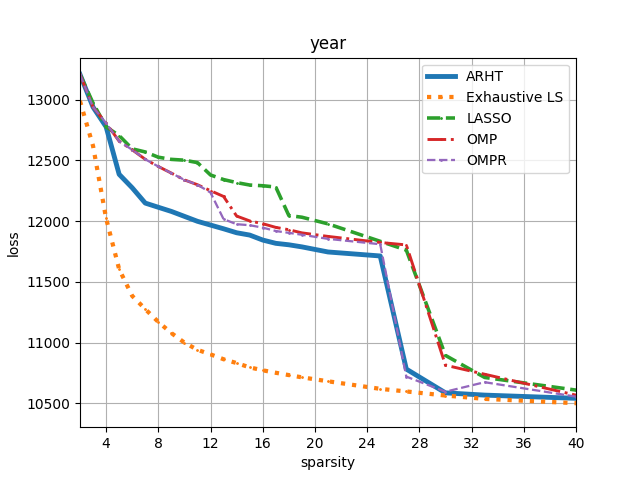}
\caption{Comparison of different algorithms in the Regression datasets \emph{cal\_housing} and \emph{year} using the Linear Regression loss.}
		\label{fig1}
		\end{center}
		\vskip -0.2in
\end{figure}
\begin{figure}[H]
\vskip 0.2in
\begin{center}
	\includegraphics[width=0.8\columnwidth]{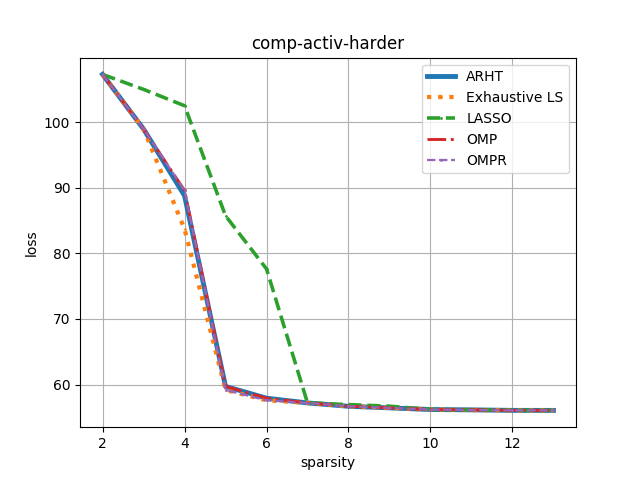}
	\includegraphics[width=0.8\columnwidth]{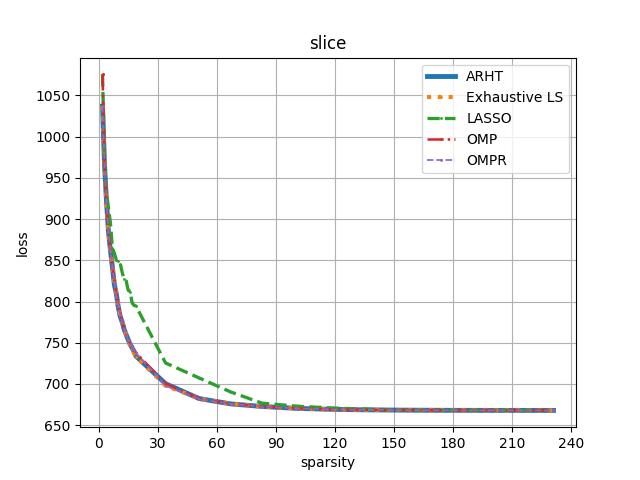}
\caption{Comparison of different algorithms in the Regression datasets \emph{comp-activ-harder} and \emph{slice} using the Linear Regression loss.}
		\label{fig2}
		\end{center}
		\vskip -0.2in
\end{figure}
\begin{figure}[H]
\vskip 0.2in
\begin{center}
\centerline{\includegraphics[width=0.8\columnwidth]{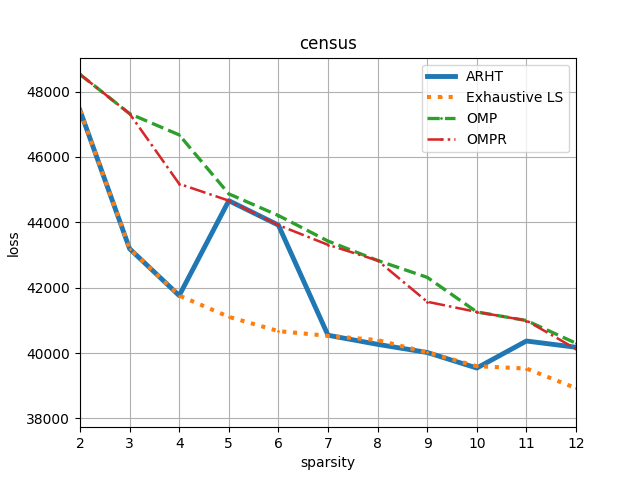}}
\centerline{\includegraphics[width=0.8\columnwidth]{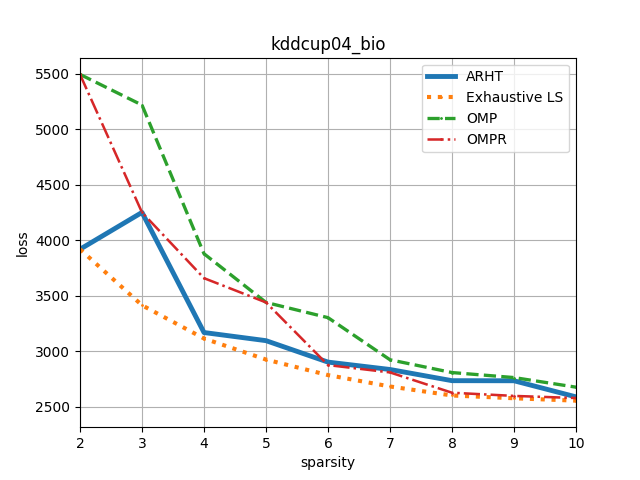}}
\caption{Comparison of different algorithms in the Binary classification datasets \emph{census} and \emph{kddcup04\_bio} using the Logistic Regression loss.}
		\label{fig3}
		\end{center}
		\vskip -0.2in
\end{figure}
\begin{figure}[H]
\vskip 0.2in
\begin{center}
\centerline{\includegraphics[width=0.8\columnwidth]{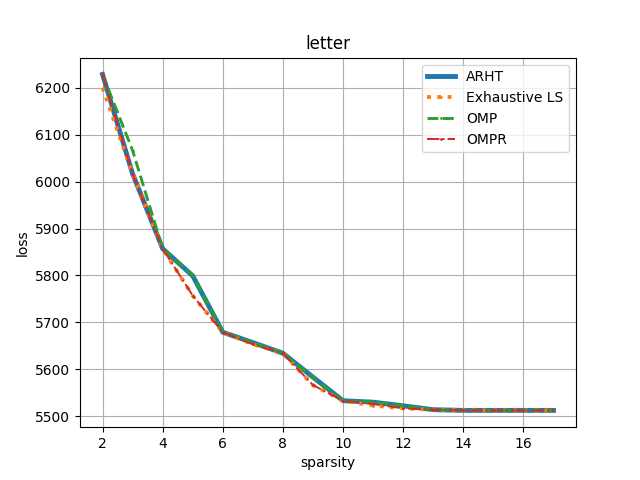}}
\centerline{\includegraphics[width=0.8\columnwidth]{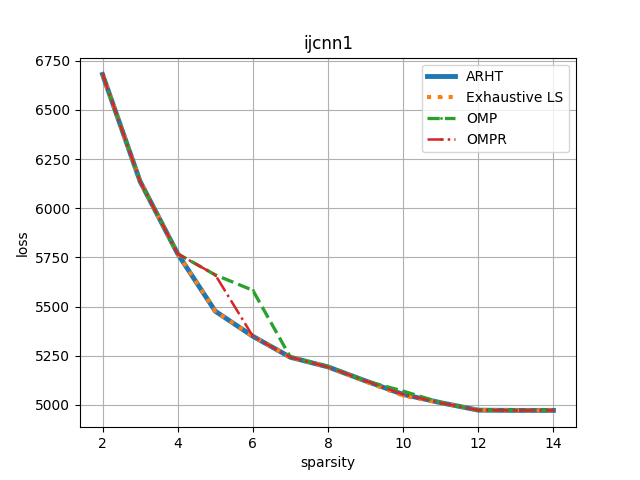}}
\caption{Comparison of different algorithms in the Binary classification datasets \emph{letter} and \emph{ijcnn1} using the Logistic Regression loss.}
		\label{fig4}
		\end{center}
		\vskip -0.2in
\end{figure}

For experimental evaluation we used well known and publicly available datasets. Their names and basic properties are outlined in Table~\ref{dataset_info}.
\begin{table}[H]
\caption{\label{dataset_info}Datasets used for experimental evaluation. The columns are the dataset name,
the number of examples $m$, and the number of features $n$. 
The datasets can be downloaded \href{https://drive.google.com/open?id=1RDu2d46qGLI77AzliBQleSsB5WwF83TF}{here}.}
\vskip 0.15in
\begin{center}
\begin{small}
\begin{sc}
\begin{tabular}{lccc}
\toprule
Name & $n$ & $d$ & problem\\
\midrule
kddcup04\_bio & 145750 & 74 & binary\\
cal\_housing & 20639 & 8 & regression\\
census & 299284 & 401 & binary\\
comp-activ-harder & 8191 & 12 & regression\\
ijcnn1 & 24995 & 22 & binary\\
letter & 20000 & 16 & binary\\
slice & 53500 & 384 & regression\\
year & 463715 & 90 & regression\\
\bottomrule
\end{tabular}
\end{sc}
\end{small}
\end{center}
\vskip -0.1in
\end{table}
\subsection{Setup details}
\subsubsection{Basic Definitions}
The two quantities that take part in our experiments are the \emph{sparsity} and the \emph{loss} of a particular solution.
We have already defined and discussed the former at length. The latter refers to the training loss for the problems of Linear Regression and Logistic Regression.
We let $m$ denote the number of examples and $n$ the number of features in each example.

In the \emph{Linear Regression} task we are given the dataset $(A,b)$, where $A\in\mathbb{R}^{m\times n}$, $b\in\mathbb{R}^m$.
The columns of $A$ correspond to features and the rows to examples.
The \emph{($\ell_2$ Linear Regression) loss} of a solution $x\in\mathbb{R}^n$ is defined as $\mathrm{\ell_2\_loss}(x) = \frac{1}{2}\left\Vert Ax - b\right\Vert_2^2$.

In the \emph{Logistic Regression} task we are given the dataset $(A,b)$, where $A\in\mathbb{R}^{m\times n}$, $b\in\{0,1\}^m$.
The columns of $A$ correspond to features and the rows to examples.
The \emph{(Logistic Regression) loss} of a solution $x\in\mathbb{R}^n$ is defined as 
$\mathrm{logistic\_loss}(x) = \sum\limits_{i\in[m]} \left(- b_i \log \sigma (Ax)_i - (1-b_i) \log (1-\sigma (Ax)_i)\right)$, where 
$\sigma:\mathbb{R}\rightarrow\mathbb{R}$ defined as $\sigma(t) = \frac{1}{1+e^{-t}}$ is the sigmoid function.

\subsubsection{Data Pre-processing}
We apply a very basic form of pre-processing to the data. More specifically, we 
use one-hot encoding to turn categorical features into numerical ones. Then, we discard any examples with missing data so that all the entries of $A$ are defined.
We also augment the matrix $A$ with an extra all-ones column (i.e. $\vec{1}$) in order to encode the constant ($y$-intercept) term into $A$,
and we scale all the columns of $A$ so that their $\ell_2$ norm is $1$.
Finally, for the case of ARHT we further augment $A$ in order to encode the regularizer as well. We do this by adding an identity matrix as extra rows.
In other words, $A\leftarrow \begin{pmatrix}A\\I\end{pmatrix}$ and $b\leftarrow \begin{pmatrix}b\\\vec{0}\end{pmatrix}$.

\subsection{Implementation details}
The code has been implemented in \emph{python3}, with libraries \emph{numpy}, \emph{sklearn}, and \emph{scipy}.

\subsubsection{Inner Optimization Problem}
All the algorithms except for LASSO rely on an inner optimization routine in a restricted subset of coordinates in each step.
The inner optimization problem consists of solving a standard Linear Regression or Logistic Regression problem
using only a submatrix of $A$ defined by a subset of $s$ of its columns.
For that, we use \emph{LinearRegression} and \emph{LogisticRegression} from \emph{sklearn.linear\_model}.
For Logistic Regression we used an LBFGS solver with $1000$ iterations.

\subsubsection{Overall Algorithm}
The LASSO solver we used is \emph{Lasso} from \emph{sklearn.linear\_model} with $1000$ iterations.
As LASSO is not tuned in terms of a required sparsity $s$, but rather in terms of the regularization parameter $\alpha$,
for each sparsity level we applied binary search on $\alpha$ 
in order to find a parameter $\alpha$ that gives the required sparsity.

For ARHT, we used a fixed number of $20$ iterations at Line 5 of Algorithm~\ref{local_reg}. In Line 19 of Algorithm~\ref{arls}
we slightly weaken the progress condition to
\begin{align}
g_{R^t}(x^t) - g_{R^t}(x^{t+1}) \geq \frac{10^{-3}}{s} \left(g_{R^t}(x^t) - \mathrm{opt}\right) \label{prog_cond}
\end{align}
so that it does not depend
Furthermore, we do not perform a fixed number of iterations. Instead, 
we use a stopping criterion: If the progress condition (\ref{prog_cond}) is not met and at least half the elements in $x^t$ have already been unregularized, 
i.e. $\left|S^t\backslash R^t\right| \geq \frac{1}{2} \left|S^t\right|$,
then we stop. If a desirable solution has not been found, it means that this might be an unsuccessful run,
and early termination can be used to detect such runs early and re-start, thus improving the runtime.
The routine which samples an index $i$ proportional to $x_i^2$ was implementing by a standard sampling method that uses binary search on $i$ and flips a random
coin at each step. This requires computation of interval sums of $x_i^2$, which is done by computing partial sums.

\bibliography{references}
\bibliographystyle{alpha}

\appendix
\section{Deferred Proofs}

\subsection{Proof of Lemma~\ref{reg_condition}} 
\label{proof_lemma_reg_condition}

\begin{proof}
$\Phi^t$ is a quadratic restricted on $R^t$
\begin{align*}
& \Phi(y) - \Phi(x) - \nabla \Phi(x)^T(y-x)\\
& = \frac{\rho_2^+}{2} \left(\left\Vert y_{R^t}\right\Vert_2^2 
	-\left\Vert x_{R^t}\right\Vert_2^2 
	- 2 x_{R^t}^T(y_{R^t}-x_{R^t})\right)\\
& = \frac{\rho_2^+}{2} \left\Vert y_{R^t}-x_{R^t}\right\Vert_2^2 \\
& \in \left[0,\frac{\rho_2^+}{2} \left\Vert y-x\right\Vert_2^2\right]
\end{align*}
and so for any $x,y$ with $|\mathrm{supp}(y-x)| \leq s+s^*$ (resp. $|\mathrm{supp}(y-x)| \leq 1$)  we have
\begin{align*}
& g(y) - g(x) - \nabla g(x)^T (y-x)\\
& = f(y) - f(x) - \nabla f(x)^T (y-x) + \Phi(y) - \Phi(x) - \nabla \Phi(x)^T(y-x)\\
& \geq \frac{\rho_{s+s^*}^-}{2} \left\Vert y-x\right\Vert_2^2 \text{(resp.} \leq \rho_2^+ \left\Vert y-x\right\Vert_2^2 \text{)}
\end{align*}
\end{proof}

\subsection{Proof of Lemma~\ref{recurrence}} 
\label{proof_lemma_recurrence}

\begin{proof}
By definition, and setting $\tau=\frac{1}{s}$, for each Type 1 iteration we have 
\begin{align*}
& g(x^t) - g(x^{t+1}) \geq \tau \left(g(x^t) - f(x^*)\right) \\
\Rightarrow  & 
g(x^{t+1}) - f(x^*) \leq (1-\tau) (g(x^{t}) - f(x^*))
\end{align*}
and in each Type 2 iteration we have
\begin{align*}
g(x^{t+1}) - f(x^*) \leq g(x^{t}) - f(x^*)
\end{align*}
(since $g$ can only decrease when unregularizing),
therefore 
\begin{align*}
 f(x^T) - f(x^*) 
& \leq g(x^{T}) - f(x^*) \\
& \leq (1-\tau)^{T_1} (g(x^{0}) - f(x^*)) \\
& \leq e^{-\tau T_1} (g(x^0) - f(x^*)) \\
& \leq \epsilon
\end{align*}
where we used the fact that 
$T_1 = \frac{1}{\tau} \log \frac{g(x^0) - f(x^*)}{\epsilon}$.
\end{proof}

\subsection{Proof of Lemma~\ref{lemma_strongconv}} 
\label{proof_lemma_strongconv}
\begin{proof}
We have 
\begin{align*}
 \left(\sqrt{f(x^t) - f(\widetilde{x}^t)} + \sqrt{f(x^*) - f(\widetilde{x}^t)}\right)^2 
& \geq \frac{\rho^-}{2} \left(\left\Vert x^t - \widetilde{x}^t\right\Vert_2 + \left\Vert x^* - \widetilde{x}^t\right\Vert_2\right)^2\\
& \geq \frac{\rho^-}{2} \left\Vert x^t - x^*\right\Vert_2^2\\
& \geq \frac{\rho^-}{2} \left(\left\Vert x^*_{S^*\backslash S^t}\right\Vert_2^2 + \left\Vert x_{S^t\backslash S^*}^t\right\Vert_2^2\right)
\end{align*}
where the first inequality follows by applying strong convexity to lower bound
$f(x^t) - f(\widetilde{x}^t)$ and $f(x^*) - f(\widetilde{x}^t)$
combined with the fact that by definition of $\widetilde{x}^t$, $\nabla_{S^t\cup S^*} f(\widetilde{x}^t) = \vec{0}$,
and the second is a triangle inequality.
\end{proof}

\end{document}